\def\1{\bm{1}}
\def\vone{{\bm{1}}}
\def\ve{{\bm{e}}}
\def\vp{{\bm{p}}}
\def\vu{{\bm{u}}}
\def\vv{{\bm{v}}}
\def\evp{{p}}
\def\evv{{v}}
\def\mA{{\bm{A}}}
\def\mE{{\bm{E}}}
\def\mG{{\bm{G}}}
\def\mH{{\bm{H}}}
\def\mM{{\bm{M}}}
\def\mW{{\bm{W}}}
\def\mX{{\bm{X}}}
\def\mY{{\bm{Y}}}
\def\mZ{{\bm{Z}}}
\DeclareMathAlphabet{\mathsfit}{\encodingdefault}{\sfdefault}{m}{sl}
\SetMathAlphabet{\mathsfit}{bold}{\encodingdefault}{\sfdefault}{bx}{n}
\def\sD{{\mathbb{D}}}
\def\sG{{\mathbb{G}}}
\def\sH{{\mathbb{H}}}
\def\emA{{A}}
\def\emH{{H}}
\def\emM{{M}}
\def\emZ{{Z}}
\newcommand{\R}{\mathbb{R}}
\DeclareMathOperator*{\argmax}{arg\,max}
\DeclareMathOperator*{\argmin}{arg\,min}
\newcommand{\sfmx}{\sigma_{\rm S}}
\newcommand{\hdmx}{\sigma_{\rm H}}
\newcommand{\relu}{{\rm ReLU}}
\newcommand{\funcdist}{\mathsf{d}}
\newcommand{\attn}{{\rm Attn}}
\newcommand{\tb}{{\rm TB}}
\renewcommand{\indic}[1]{\mathbbm{1}\left\{#1\right\}}
\newcommand{\BB}{\text{BERT}_{\text{BASE}}}
\newcommand{\head}{{\rm Head}}
\newcommand{\sparsehead}{{\rm SHead}}
\newcommand{\sparseattn}{{\rm SAttn}}
\newcommand{\sparsetb}{{\rm STB}}
\renewcommand{\mod}{{\rm mod}}
\title{$O(n)$ Connections are Expressive Enough: \\Universal Approximability of Sparse Transformers}
\author{%
  Chulhee Yun \\
  MIT\\
  \texttt{chulheey@mit.edu} \\
  \And
  Yin-Wen Chang \\
  Google Research NY \\
  \texttt{yinwen@google.com} \\
  \And
  Srinadh Bhojanapalli \\
  Google Research NY \\
  \texttt{bsrinadh@google.com} \\
  \And
  Ankit Singh Rawat \\
  Google Research NY \\
  \texttt{ankitsrawat@google.com} \\
  \And
  Sashank J.\ Reddi \\
  Google Research NY \\
  \texttt{sashank@google.com} \\
  \And
  Sanjiv Kumar \\
  Google Research NY \\
  \texttt{sanjivk@google.com} \\
}
\begin{document}

\maketitle

\begin{abstract}
{Recently, Transformer networks have redefined the state of the art in many NLP tasks. However, these models suffer from quadratic computational cost in the input sequence length $n$ to compute pairwise attention in each layer. This has prompted recent research into \emph{sparse Transformers} that sparsify the connections in the attention layers.} While empirically promising for long sequences, fundamental questions remain unanswered: Can sparse Transformers approximate any arbitrary sequence-to-sequence function, similar to their dense counterparts? How does the sparsity pattern and the sparsity level affect their performance? In this paper, we address these questions and provide a \emph{unifying framework} that captures existing sparse attention models. We propose sufficient conditions under which we prove that a sparse attention model can \emph{universally approximate} any sequence-to-sequence function. 
{Surprisingly, our results show that sparse Transformers with only $O(n)$ connections per attention layer} can approximate the same function class as the dense model with $n^2$ connections.
Lastly, we present experiments comparing different patterns/levels of sparsity on standard NLP tasks.
\end{abstract}



\vspace*{-5pt}
\section{Introduction}
\vspace*{-4pt}
\label{sec:intro}

Transformer networks \citep{vaswani2017attention} and their variants \citep{xlnet2019} have played a key role in the recent advancement of the state of the art in many natural language processing tasks, such as machine translation \citep{vaswani2017attention}, language modeling \citep{radford2018gpt,radford2019gpt2}, and question answering \citep{devlin2018bert, xlnet2019, roberta2019}. 
{The key component of these networks is the self-attention layer \citep{bahdanau2015neural, luong2015multiplicativ}, which updates the embeddings of the input tokens based on their context.}
Naturally, the self-attention layer also plays the key role in the analysis of Transformers \cite{perez2019turing, brunner2019identifiability, hahn2020theoretical, yun2019transformers, bhojanapalli2020low}; for example, \citet{yun2019transformers} show that Transformers can approximate any continuous sequence-to-sequence functions (i.e., universal approximation), by proving that self-attention layers can compute \emph{contextual mappings} of the input embeddings.

On the other hand, the self-attention layer is also the main bottleneck in scaling these models. It involves computation of \emph{pairwise} inner products between input tokens, which results in quadratic computational complexity $O(n^2)$ in the length of the input sequence $n$. 
To mitigate this issue, researchers have developed methods to \emph{sparsify} the pairwise interactions/connections in self-attention layers to reduce the computational complexity and/or improve model interpretability, and have shown successful empirical results on tasks with long sequence lengths \citep{guo2019star,child2019generating,sukhbaatar2019adaptive,cui2019fine,correia2019adaptively,qiu2019blockwise,ye2019bp,zhao2019explicit,roy2020efficient,li2020sac,beltagy2020longformer,zaheer2020big}. For example, \citet{child2019generating} propose sparse Transformers for sequence generation. One of the sparsity patterns considered in \cite{child2019generating} is the \textsc{Strided} pattern, where the sparse attention layers alternate between two patterns: each token attends to only i) $w$ local neighbors, and then ii) one after every $w$ tokens in a strided manner. By choosing $w = O(\sqrt{n})$, they propose sparse attention layers with $O(n^{3/2})$ connections and show improvements on both speed and performance over the dense Transformer.

In the existing results, the rule of thumb for designing sparsity patterns (e.g., \textsc{Strided}) is connectivity; the intuition is that if each token can attend to the other tokens in multiple ``hops,'' then the resulting sparse Transformers do not lose much expressive power. 
However, there has been \emph{no formal justification} for this intuition. How does sparsifying the interaction in the self-attention layers affect the model's expressive power and ability to learn? What are the sparsity levels at which the model still retains its rich expressive power, and how is it affected by the sparsity pattern? Such fundamental questions about sparse attention models still remain unanswered.

\vspace*{-3pt}
\subsection{Summary of contributions}
\vspace*{-2pt}
In this paper, we take the first step towards a theoretical understanding of sparse Transformers. 
\begin{list}{{\tiny$\bullet$}}{\leftmargin=1.8em}
  \setlength{\itemsep}{1pt}
\item We propose a unified framework to analyze sparse Transformers, which generalizes the existing approaches that sparsify attention layers (\S~\ref{sec:framework}).
\item We propose a set of intuitive conditions on the sparsity pattern (Assumption~\ref{assm:spsptrn}) and the probability map (Assumption~\ref{assm:rhotohdmx}). Then, in Theorem~\ref{thm:main}, we show that Sparse Transformers, of fixed width and arbitrary depth, satisfying these conditions are universal approximators of any continuous sequence-to-sequence functions {for any given fixed sequence length (\S~\ref{sec:assm} and \S~\ref{sec:thm}).}
\item We next show some examples of existing sparse Transformers \citep{child2019generating, guo2019star, beltagy2020longformer, cui2019fine,correia2019adaptively,zhao2019explicit,zaheer2020big} that satisfy these conditions, and hence have universal approximability (\S~\ref{sec:analyze-existing}). Surprisingly, we show that there are sparse Transformers with only $O(n)$ connections per self-attention layer (instead of $n^2$) that have enough expressive power to approximate arbitrary continuous functions (Corollary~\ref{cor:O-of-n}).
\item We report experimental results on standard NLP tasks using sparse Transformers, comparing different sparsity patterns/levels (\S~\ref{sec:exp}).
\end{list}

\vspace*{-5pt}
\section{Preliminaries and related works}
\vspace*{-4pt}
In this section, we summarize the notation we will use throughout the paper, give a brief overview of Transformers, and then discuss existing efforts to sparsify the self-attention mechanism.

\vspace*{-3pt}
\subsection{Notation}
\vspace*{-2pt}
For a positive integer $a$, we denote $[a] = \{1, 2, \dots, a\}$. 
For any vector $\vv \in \reals^d$, let $\evv_j$ denote its $j$-th coordinate.
For any matrix $\mA \in \reals^{d \times n}$, let $\mA_j$ denote its $j$-th column, and $\mA_{\mc S}$ denote the submatrix consisting of columns of $\mA$ in an index set $\mc S \subseteq [n]$. 
We use $\norm{\mA}_p$ to denote the entry-wise $\ell^p$ norm of $\mA$. 
Let $\sfmx[\cdot]$ be the softmax operator, which takes a matrix as input and applies softmax operation to each column of the matrix, which results in a column stochastic matrix.

\vspace*{-3pt}
\subsection{Transformers and their universal approximation power}
\vspace*{-2pt}
\label{sec:stdtransformer}
A Transformer network, consisting of multiple layers of Transformer blocks, implements a sequence-to-sequence function that maps $\reals^{d \times n}$ to $\reals^{d \times n}$.
A Transformer Block ($\tb$) consists of two layers: a self-attention layer and a token-wise feed-forward layer, and both layers have an identity skip connection. More concretely, for an input $\mX \in \reals^{d \times n}$ consisting of $d$-dimensional embeddings of $n$ tokens, a Transformer block consists of the following two layers:
\begin{subequations}
\label{eq:tb}
\begin{align}
    \attn(\mX) &= \mX + \mW_O 
    \begin{bmatrix}
    \head^1(\mX)\\
    \vdots\\
    \head^h(\mX)
    \end{bmatrix}\!; 
    ~~\head^i(\mX) = \mW_V^i \mX \cdot \sfmx [(\mW_K^i \mX)^T \mW_Q^i \mX]\label{eq:attn}
    \\
    \tb(\mX) &= \attn(\mX) + \mW_2 \cdot \relu (\mW_1 \cdot \attn(\mX)),\label{eq:ff}
\end{align}
\end{subequations}
where $\mW_O \in \reals^{d \times mh}$, $\mW_V^i, \mW_K^i, \mW_Q^i \in \reals^{m \times d}$, $\mW_2 \in \reals^{d \times r},$ and $\mW_1 \in \reals^{r \times d}$. Although our analysis and experiments rely on bias vectors, we omit those in \eqref{eq:tb} for simplicity.

To endow the network with information about the position of input tokens, it is common to add a positional embedding $\mE \in \R^{d \times n}$ to the input $\mX$ before feeding it to the network. The positional embedding $\mE$ can be fixed \citep{vaswani2017attention} or trainable \citep{devlin2018bert}; we consider the latter. 
Using a trainable~$\mE$, $\mc {T}^{h,m,r}$ is defined to be a class of functions of the form $\mX \mapsto t(\mX + \mE)$, where $t$ is a composition of any number of Transformer blocks with $h$ attention heads of head size $m$, and hidden layers of width $r$. Thus, $\mc {T}^{h,m,r}$ is a class of Transformers with a fixed \emph{width} while the depth can be arbitrary.

Further, let $\mc F$ be the class of continuous functions $f: \sD \to \reals^{d \times n}$ defined on any compact domain $\sD \subset \reals^{d \times n}$, where continuity is defined with respect to the entry-wise $\ell_p$ norm ($1\leq p < \infty$). 
\citet[Theorem~3]{yun2019transformers} show that $\mc T^{2,1,4}$ can \emph{universally approximate} $\mc F$. More precisely, for any $f \in \mc F$, $\epsilon >0$ and $1 \leq p < \infty$, there exists a function $g \in \mc T^{2,1,4}$ such that
    $\funcdist_p(f,g) \defeq (\int_{\sD} \norm{f(\mX) - g(\mX)}_p^p d\mX  )^{1/p} \leq \epsilon$.
{
Our goal in this paper is to study, in a similar manner, the expressive power of \emph{sparse} Transformers.}

\vspace*{-3pt}
\subsection{Sparse Transformers}
\vspace*{-2pt}
\label{sec:prevresults}
As seen in Eq.~\eqref{eq:attn}, the self-attention layer involves computing the inner product between each pair of tokens, which we will refer to as the \textbf{attention score matrix} $\mA^i \defeq (\mW_K^i \mX)^T \mW_Q^i \mX \in \R^{n \times n}$. This leads to quadratic computational complexity in $n$, which makes it expensive to apply Transformers to tasks with long sequence lengths.
{One popular approach to mitigate this problem is to \emph{sparsify} the self-attention layers. We sub-classify sparse Transformers into three categories and summarize them below. For a more extensive summary, please see a recent survey~\citep{tay2020efficient}.}

The first category reduces computation by making $\mA^i$ sparse in a \emph{pre-determined} manner. Each token in the sequence only attends to a fixed smaller set of other tokens instead of the whole sequence \citep{child2019generating, qiu2019blockwise, beltagy2020longformer}. In some papers, auxiliary tokens are added to improve connectivity between existing tokens while maintaining sparsity \citep{guo2019star,ye2019bp}. 
One drawback of these approaches is that the sparsity pattern is independent of input, so it cannot adapt to the data. To remedy this issue, \citep{sukhbaatar2019adaptive} proposes to learn local attention span from data.
{In a concurrent paper, \citet{zaheer2020big} propose the \textsc{BigBird} sparsity pattern which falls into this category. For \textsc{BigBird}, the authors show its theoretical properties such as universal approximation and Turing completeness, as well as its superior empirical performance. We note that our paper focuses on universal approximation for a \emph{broader} class of sparse Transformers, by proposing a unifying framework to analyze them.} 

The second category studies making $\mA^i$ sparse \emph{after} the full $\mA^i$ has been computed \citep{cui2019fine,correia2019adaptively,zhao2019explicit}.
Here, the focus is not on the computational gain via sparsity, because the full score matrix $\mA^i$ has to be computed first; rather, the goal here is to make attention layers more interpretable, as well as to improve performance.
This line of works modifies $\sfmx$ in \eqref{eq:attn} to other probability maps, by using top-$k$ elements or adopting sparser variants such as sparselin-gen or $\alpha$-entmax \citep{laha2018controllable,peters2019sparse}. Compared to the first category, this approach has an advantage that sparsity patterns are adaptive to data.

The last category attempts to get the best of both worlds. This line of works tries to learn sparsity patterns from data using extra components predicting the connection between tokens, e.g., $k$-means clustering \citep{roy2020efficient},  LSTM \citep{li2020sac}, or locality-sensitive hashing \citep{kitaev2020reformer}. This way, one can adaptively determine the sparsity patterns before computing the score matrix. However, the drawback of this approach is that one needs extra computation to train/run these additional components, which may be expensive.


\vspace*{-5pt}
\section{Universal approximation theorem for sparse Transformers}
\vspace*{-4pt}
\label{sec:sparseTransformer}
In this section, we derive a unifying framework to study sparse Transformers. We then propose a set of conditions on the sparse self-attention layers, and prove that the sparse Transformers satisfying theses conditions are universal approximators of any continuous sequence-to-sequence functions. Finally, we show some examples of existing sparse Transformers that satisfy these conditions. 

\vspace*{-3pt}
\subsection{A unifying framework for sparse Transformers}
\vspace*{-2pt}
\label{sec:framework}
We modify the Transformer block in \eqref{eq:tb} 
to the following sparse Transformer block ($\sparsetb$):
\begin{align}
    \sparseattn^l(\mX) &= \mX + \mW_O 
    \begin{bmatrix}
    \sparsehead^{1,l}(\mX)\\
    \vdots\\
    \sparsehead^{h,l}(\mX)
    \end{bmatrix}\!,
    ~~\sparsehead^{i,l}(\mX)_k = \mW_V^i \mX_{\mc A_k^l} \cdot \rho [(\mW_K^i \mX_{\mc A_k^l})^T \mW_Q^i \mX_k] \nonumber
    \\
    \sparsetb^l(\mX) &= \sparseattn^l(\mX) + \mW_2 \cdot \relu (\mW_1 \cdot \sparseattn^l(\mX)),\label{eq:s-ff}
\end{align}
where the sets $\mc A_k^l \subseteq [n]$, for $k \in [n]$ and $l \in [p]$, define the $p$ sparsity patterns (formally defined below), which are indexed by $l \in [p]$. 
Moreover, the parameter dimensions stay the same as in \eqref{eq:tb}. 

Note that there are three main modifications from the dense Transformer.
\begin{list}{{\tiny$\bullet$}}{\leftmargin=1.8em}
\item (\emph{Cycling blocks}) There are superscripts $l \in [p]$ added to the symbols such as $\sparseattn$. 
Unlike dense Transformers, some sparse Transformers \textbf{cycle} through $p$ different patterns.
For example, the \textsc{Strided} pattern \citep{child2019generating} described in \S~\ref{sec:intro} alternates between two different patterns, which corresponds to $p = 2$. 
We add the superscript $l$ to include such cases in our formulation. We assume that the layers in a sparse Transformer cycle through $\sparsetb^1, \dots, \sparsetb^p$.

\item (\emph{Sparsity patterns}) Note that $\sparsehead^{i,l}(\mX)_k$ denotes the $k$-th column of the $i$-th sparse attention head. Unlike dense Transformers, the inner product of the $k$-th query vector $\mW_Q^i \mX_k$ is taken only with $\mW_K^i \mX_{\mc A_k^l}$, the key vectors of tokens in the set $\mc A^l_k \subseteq [n]$. Hence, instead of all $n$ tokens, the $k$-th token computes attention scores with only tokens in $\mc A^l_k$. For $l \in [p]$, we refer to the collection of the index sets $\{\mc A^l_k\}_{k\in[n]}$, or simply $\{\mc A^l_k\}$, as a \textbf{sparsity pattern}.
As a result, $\sparsehead^{i,l}(\mX)_k$ is a linear combination of columns in $\mW_V^i \mX_{\mc A_k^l}$, rather than the whole sequence.

\item (\emph{Probability map})
After computing the attention score matrix, the dense Transformer \eqref{eq:tb} uses the softmax operator $\sfmx$ to get a column stochastic matrix. In the sparse Transformers, we generalize $\sfmx$ to $\rho$. The \textbf{probability map} $\rho$ is any map that takes a matrix as input and outputs a column stochastic matrix. 
\end{list}

As a sanity check, by choosing $p=1$, $\mc A_k^1 = [n]$ for all $k \in [n]$, and $\rho = \sfmx$, we recover the dense Transformer \eqref{eq:tb}.
Note also that the sparse Transformer formulation covers the first and second categories of existing results discussed in \S~\ref{sec:prevresults}. The first category corresponds to choosing a predetermined sparsity pattern(s) $\{\mc A_k^l\}$, while setting $\rho = \sfmx$. The second category corresponds to opting for a probability map $\rho$ other than softmax $\sfmx$, while maintaining $\mc A_k^1 = [n]$ for all $k \in [n]$.

In this paper, we assume for simplicity that all sparse attention heads $\sparsehead^{1,l}, \dots, \sparsehead^{h,l}$ in a single layer have identical sparsity patterns $\{\mc A^l_k\}$. However, since our result only requires two sparse attention heads per layer (as we will see in Theorem~\ref{thm:main}), our result can be easily extended to the case that allows multiple sparsity patterns in a single layer.

Similar to $\mc T^{h,m,r}$ in \S~\ref{sec:stdtransformer}, we define the class of functions represented by sparse Transformers. We hide the dependence of this class on the sparsity patterns and probability map to simplify the notation.
\begin{align}
\label{eq:transformer-fclass}
    \mc {ST}^{h,m,r} 
    \defeq 
    \{ \mX \mapsto t(\mX + \mE) \mid
    &~\text{$t$ is a composition of cycling sparse Transformer blocks $\sparsetb^l$,} \nonumber \\
    &~\text{each with $h$ heads of head size $m$ and hidden layer size $r$,} \nonumber \\
    &~\text{and positional embedding } \mE \in \reals^{d \times n} \text{ is trainable} \}.
\end{align}

\vspace*{-3pt}
\subsection{Conditions on sparsity patterns and probability map}
\vspace*{-2pt}
\label{sec:assm}
In this section, we define a set of conditions on the sparsity patterns $\{\mc A^l_k\}$ and the probability map $\rho$ that ensures that the sparse Transformer universally approximate the function class $\mathcal{F}$ (cf.~\S~\ref{sec:stdtransformer}).

For $k \in [n]$ and the index sets $\{\mc A^l_k\}_{l \in [p]}$, 
we define a sequence of sets $\{\mc S^t_k\}_{t \geq 1}$ in a recursive way:
\begin{equation*}
    \mc S^1_k \defeq \mc A^1_k,~~
    \mc S^t_k \defeq \bigcup\nolimits_{j \in \mc A^{(t-1) \text{ mod } p + 1}_k} \mc S^{t-1}_j.
\end{equation*}
The set $\mc S^t_k$ is the set of all tokens that the $k$-th token can directly/indirectly attend to, after $t$ sparse attention layers with sparsity patterns cycling through $\{\mc A^1_k\}, \{\mc A^2_k\}, \dots, \{\mc A^p_k\}$. We now state our conditions on sparsity patterns.
\begin{assumption}
\label{assm:spsptrn}
The sparsity patterns $\{\mc A_k^l\}$ satisfy the following:
\begin{enumerate}
  \setlength{\itemsep}{0pt}
    \item \label{assm:spsptrn-cond1} For all $k \in [n]$ and $l \in [p]$, we have $k \in \mc A_k^l$.
    \item \label{assm:spsptrn-cond2} There exists a permutation $\gamma: [n] \to [n]$ such that, for all $i \in [n-1]$, $\gamma(i) \in \bigcup_{l=1}^p \mc A^l_{\gamma(i+1)}$.
    \item \label{assm:spsptrn-cond3} There exists a finite $s \in \naturals$ such that $s = \min\{u \mid \mc S^{u}_k = [n]~\text{for all}~k \in [n]\}$. 
\end{enumerate}
\end{assumption}
Assumption~\ref{assm:spsptrn}.\ref{assm:spsptrn-cond1} is equivalent to saying that every token always attends to itself.
Assumption~\ref{assm:spsptrn}.\ref{assm:spsptrn-cond2} requires that there is a chain of \emph{direct} connections that covers all $n$ tokens; note that the set $\bigcup_{l=1}^p \mc A^l_{\gamma(i+1)}$ is the set of all tokens that the $\gamma(i+1)$-th token directly attends to. To elaborate more about the chain, consider a directed graph with $n$ vertices corresponding to the $n$ tokens. For any $j \in \bigcup_{l=1}^p \mc A^l_{k}$, we add a directed edge $j \to k$. Given a graph constructed this way, Assumption~\ref{assm:spsptrn}.\ref{assm:spsptrn-cond2} requires that the graph has a Hamiltonian path $\gamma(1) \to \gamma(2) \to \cdots \to \gamma(n)$.
Assumption~\ref{assm:spsptrn}.\ref{assm:spsptrn-cond3} requires that after $s$ sparse attention layers, every token can attend to all the other tokens, either directly or indirectly.

{As we discuss in \S~\ref{sec:analyze-existing}, the statements in Assumption~\ref{assm:spsptrn} are natural enough to be satisfied by many existing sparsity patterns studied in the literature.
In fact, Assumption~\ref{assm:spsptrn}.\ref{assm:spsptrn-cond3} is \emph{necessary} for universal approximation. If $p=1$, $n=2$, $\mc A^1_1 = \{1\}$ and $\mc A^1_2 = \{1,2\}$, then the first token never attends to the second, so this sparse Transformer cannot approximate a function whose first output token is dependent on both input tokens. The other two assumptions are required in parts of our proof, which involve ``propagating information'' over all the tokens in a sequential manner.}

We now state the assumption on the probability map $\rho[\cdot]$. For this, we define $\hdmx[\cdot]$ to be the hardmax operator, which outputs the one-hot representation of the $\argmax$ entry for each column of the input matrix. 
{Since $\rho$ is a column-wise operator that outputs a column-stochastic matrix, we state the assumption for the operation of $\rho$ on a single column.}

\begin{restatable}{assumption}{rhotohdmx}
\label{assm:rhotohdmx}
For any $\zeta>0$ and $\eta \in (0, 1]$, $\exists~t > 0$ such that, for any column input $\vv$ satisfying $\evv_{j^*} - \max_{j\neq j^*} \evv_{j} \geq \zeta$ \textup{(}where $j^* = \argmax_j \evv_j$\textup{)}, we have $\rho[t \vv]_{j^*} \geq 1-\eta$ and $\sum_{j \neq j^*} \rho[t \vv]_j \leq \eta$.
\end{restatable}
Assumption~\ref{assm:rhotohdmx} requires that, for inputs that have some margin between the unique maximum entry and the other entries, $\rho[\cdot]$ can closely approximate the behavior of the hardmax operator
by scaling its input by a positive factor $t$. This assumption is satisfied by softmax $\sfmx$ and other sparse variants such as sparselin-gen and $\alpha$-entmax, as we show in \S~\ref{sec:proof-assm-rhotohdmx} of the supplementary material.

It is straightforward to check that the dense Transformer, which corresponds to $p = 1$, $\mc A_k^1 = [n]$, and $\rho[\cdot] = \sfmx[\cdot]$ in our framework, satisfies both Assumptions~\ref{assm:spsptrn} and \ref{assm:rhotohdmx}.

\vspace*{-3pt}
\subsection{Sparse Transformers are universal approximators}
\vspace*{-2pt}
\label{sec:thm}
The key justifying intuition for adopting sparse attention layers is that, if each token can attend to the other tokens in multiple hops\footnote{Note that this corresponds to our Assumption~\ref{assm:spsptrn}.\ref{assm:spsptrn-cond3}.}, then these models do not lose too much expressive power. 
{However, turning this intuition into a rigorous analysis is not straightforward. Moreover, recent results show that \textit{limited width} can render universal approximation \textit{impossible} even with arbitrary depth \citep{johnson2018deep, park2020minimum}, highlighting the challenges in analyzing sparse (limited ``width'') Transformers.}

We now state our main theorem, which shows that if the sparsity patterns $\{\mc A_k^l\}$ and the probability map $\rho$ satisfy Assumptions~\ref{assm:spsptrn} and \ref{assm:rhotohdmx}, sparse Transformers with $h=2$ attention heads of size $m=1$, and hidden layer width $r=4$ are universal approximators of continuous sequence-to-sequence functions on any compact domain (recall that $\mc F$ denotes the class of such continuous functions).
\begin{theorem}
\label{thm:main}
Consider any $f \in \mc F$, and the class of sparse Transformers $\mc {ST}^{2,1,4}$ \textup{(}cf.~\eqref{eq:transformer-fclass}\textup{)} with the underlying sparse attention layers satisfying Assumptions~\ref{assm:spsptrn} and \ref{assm:rhotohdmx}.
Then, for any $\epsilon >0$ and $1\leq p < \infty$, there exists a function $g \in \mc {ST}^{2,1,4}$ such that
\begin{equation*}
    \funcdist_p(f,g) \defeq \Big (\int_{\sD} \norm{f(\mX) - g(\mX)}_p^p d\mX \Big )^{1/p} \leq \epsilon.
\end{equation*}
\end{theorem}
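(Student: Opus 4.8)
The plan is to adapt the three-step strategy of \citet{yun2019transformers} for dense Transformers to the sparse setting, isolating the sparsity-specific difficulty in a single contextual-mapping lemma. First I would reduce the problem from a continuous target $f \in \mathcal{F}$ to a piecewise-constant one: partition the compact domain $\sD$ into a grid of cubes of side $\delta$, and replace $f$ by a function $\bar{f}$ that is constant on each grid cell. By uniform continuity of $f$ on the compact set $\sD$, taking $\delta$ small enough gives $\funcdist_p(f,\bar{f}) \leq \epsilon/3$. This step does not touch the attention mechanism, so it carries over verbatim from the dense argument.

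Second, I would construct a \emph{modified} sparse Transformer — one in which the probability map $\rho$ is replaced by the hardmax operator $\hdmx$ — that represents $\bar{f}$ exactly. This network has three stages, and only the middle one interacts with sparsity: (i) a stack of token-wise feed-forward layers that quantize each input column onto the grid and then bijectively encode the $d$ grid coordinates of a column into a single scalar (position-wise, hence unaffected by sparsity); (ii) a stack of sparse self-attention layers implementing a \emph{sparse contextual mapping} $q$, whose defining property is that the output scalars are distinct across tokens and across distinct grid inputs, so that $q$ assigns a unique identifier to each (token, full-context) pair; and (iii) a final stack of feed-forward layers that memorize the finite lookup table sending each identifier to the corresponding value of $\bar{f}$. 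Stages (i) and (iii) are essentially identical to the dense construction.

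The crux is stage (ii), and I expect it to be the main obstacle. In the dense proof a single attention layer can perform a ``selective shift'' that reads the maximum over the \emph{entire} sequence, whereas under sparsity the $k$-th token's head only sees the columns indexed by $\mathcal{A}_k^l$, so no single layer has global access. My plan is to propagate information sequentially, driven by Assumption~\ref{assm:spsptrn}: the self-loop condition (Assumption~\ref{assm:spsptrn}.\ref{assm:spsptrn-cond1}) lets a token preserve its running value across layers; the Hamiltonian-path condition (Assumption~\ref{assm:spsptrn}.\ref{assm:spsptrn-cond2}) supplies a canonical order $\gamma(1) \to \cdots \to \gamma(n)$ of \emph{direct} edges along which a token can absorb its predecessor's accumulated encoding one step at a time; and the connectivity condition (Assumption~\ref{assm:spsptrn}.\ref{assm:spsptrn-cond3}) guarantees that after finitely many cycles every token has directly or indirectly attended to all $n$ tokens, so each token's encoding can be made to depend on the whole context. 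Concretely, I would realize each aggregation step with the two heads ($h=2$) using query/key/value weights and thresholds chosen so that, at each hardmax layer, exactly the intended token — identified by its current value lying in a designated range — is shifted while all others stay fixed, thereby implementing the sequential accumulation without interference. The delicate part is selecting these ranges so that the accumulated scalars remain within bounded, separated intervals and the overall map $q$ stays injective in the contextual sense; this demands a careful inductive bookkeeping of the value ranges occupied after each layer, which is precisely where the Hamiltonian ordering is essential.

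Finally, I would convert the modified (hardmax) network into a genuine member of $\mathcal{ST}^{2,1,4}$ by replacing $\hdmx$ with $\rho$, invoking Assumption~\ref{assm:rhotohdmx}. Since the selective-shift construction is arranged so that every attention-score column fed into the probability map has a strictly positive margin $\zeta$ between its top entry and the rest, scaling the queries by a large temperature $t$ makes $\rho[t\,\cdot]$ uniformly close to $\hdmx$, and the resulting perturbation of the output can be driven below $\epsilon/3$ by taking $t$ large. A triangle-inequality combination of the three $\epsilon/3$ bounds then yields $\funcdist_p(f,g) \leq \epsilon$. The two points requiring the most care are maintaining the margin $\zeta$ uniformly throughout stage (ii), so that the temperature argument applies, and verifying that the finite bound $s$ from Assumption~\ref{assm:spsptrn}.\ref{assm:spsptrn-cond3}, together with the $n-1$ path-propagation steps, indeed suffices to realize a full contextual mapping with a fixed width.
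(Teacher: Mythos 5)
Your proposal matches the paper's proof essentially step for step: the same piecewise-constant reduction via uniform continuity, the same hardmax-modified network built from token-wise quantization, a contextual mapping that shifts ids sequentially along the Hamiltonian path of Assumption~\ref{assm:spsptrn}.\ref{assm:spsptrn-cond2} and then aggregates globally over $s$ layers via Assumption~\ref{assm:spsptrn}.\ref{assm:spsptrn-cond3}, a value-mapping lookup table, and a final temperature-scaling appeal to Assumption~\ref{assm:rhotohdmx} to replace $\hdmx$ by $\rho$. The only details you leave implicit---that the modified network also uses discontinuous piecewise-linear activations later approximated by $\relu$s (the source of the width $r=4$ and of a small-measure error set handled by an $L^p$ argument), and that the per-layer $\rho$-errors compound multiplicatively across layers so that $\eta$ must be chosen extremely small---are exactly the bookkeeping the paper carries out in its Lemma~\ref{lem:main-step3}, and you correctly flag these as the delicate points.
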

\vspace*{-5pt}
{
As discussed earlier, dense Transformers satisfy Assumptions~\ref{assm:spsptrn} and \ref{assm:rhotohdmx}, which means that Theorem~\ref{thm:main} subsumes the existing result~\citep{yun2019transformers} for dense Transformers. 
We note that the required $h$, $m$, and $r$ in Theorem~\ref{thm:main} are independent of $d$, $n$, or the sparsity patterns.
We provide a high-level proof sketch of Theorem~\ref{thm:main} in \S~\ref{sec:main-proof-sketch}. 
There, we also discuss how many layers are sufficient for $\epsilon$-approximation of $f$, and show that Theorem~\ref{thm:main} requires only $p$ times more self-attention layers than \citet{yun2019transformers}.}


{
We would like to emphasize that Theorem~\ref{thm:main} provides the first \emph{formal evidence} that well-designed sparse attention layers do not limit Transformer's universal approximation power. 
In \S~\ref{sec:analyze-existing}, we show a surprising fact that some existing sparse self-attention layers with only $O(n)$ connections (as opposed to $n^2$ in regular self-attention layers) retain enough expressive power to approximate $\mc F$. Combined with the number of layers analyzed in \S~\ref{sec:main-proof-sketch}, this means that our analysis reduces the connections per layer from $n^2$ to $O(n)$, with only $p$ times more attention layers.}
This advantage of sparse Transformers over their dense counterpart becomes even stronger with increasing sequence length $n$, providing a theoretical support for the adoption of sparsity for tasks with long sequence lengths.

{On a final note, Theorem~\ref{thm:main} views the sequence length $n$ as a fixed constant. Hence, our result does not contradict a recent paper by \citet{hahn2020theoretical} which studies the limitation of Transformers for varying~$n$.
Also, our analysis applies to the encoder part of the Transformer network \citep{vaswani2017attention}. }

\vspace*{-3pt}
\subsection{Analysis of existing sparse Transformers}
\vspace*{-2pt}
\label{sec:analyze-existing}
By Theorem~\ref{thm:main}, any sparse Transformer that satisfies our Assumptions~\ref{assm:spsptrn} and \ref{assm:rhotohdmx} has universal approximation ability. In this section, we give some examples of such sparse Transformers.

\citet{child2019generating} propose two kinds of $2$-step sparsity patterns (i.e., $p = 2$) for sequence generation tasks, namely \textsc{Strided} and \textsc{Fixed} patterns. We consider the extension of their auto-regressive patterns (i.e., attending only to past tokens) to the whole sequence. In the \textsc{Strided} pattern, a token first attends to its $w$ neighbors and then attends to one token after every $w$ tokens in a strided manner. The sparsity pattern for the $k$-th token reads
\begin{equation}
\label{eq:stridedpattern}
\begin{aligned}
    \mc A^1_k &= [n] \cap \{k-\lceil \nicefrac{w}{2}\rceil, \dots, k-1, k, k+1, \dots, k+\lfloor \nicefrac{w}{2} \rfloor\},\\
    \mc A^2_k &= [n] \cap \{\dots, k-2w, k-w, k, k+w, k+2w, \dots \}.
\end{aligned}
\end{equation}
In the \textsc{Fixed} pattern, we divide the token into segments of length $w$. A token in a segment has access to other tokens in the same segment, and then the last tokens of the other segments:
\begin{equation}
\label{eq:fixedpattern}
\begin{aligned}
    \mc A^1_k &= [n] \cap \{\lceil \nicefrac{k}{w} \rceil \cdot w-w+1, \dots, \lceil \nicefrac{k}{w} \rceil \cdot w\},\quad\mc A^2_k = [n] \cap \left ( \{k\} \cup \{w, 2w, 3w, \dots \} \right ).
\end{aligned}
\end{equation}
The \textsc{Strided} and \textsc{Fixed} patterns satisfy both Assumption~\ref{assm:spsptrn} and \ref{assm:rhotohdmx} for all values of $w$. Specifically, Assumption~\ref{assm:spsptrn}.\ref{assm:spsptrn-cond3} holds with $s=2$, because any token can directly/indirectly access all the tokens in two hops. As for Assumption~\ref{assm:spsptrn}.\ref{assm:spsptrn-cond2}, the identity permutation $\gamma(i) = i$ suffices to satisfy the assumption for both patterns. By choosing $w = O(\sqrt{n})$, sparse Transformers with the \textsc{Strided} and \textsc{Fixed} patterns achieve universal approximation power with $O(n^{3/2})$ connections per attention layer.

\citet{guo2019star} consider the \textsc{Star} sparsity pattern where they add an auxiliary \emph{relay token} that attends to all the tokens, and the other tokens attend only to $2w$ neighboring tokens and the relay token. There is only one sparsity pattern, so $p=1$. The \textsc{Star} sparsity pattern can be written as
\begin{align}
\label{eq:starpattern}
    \mc A^1_k\!=\!\{n\} \cup \big \{(i-1)\text{ mod } (n-1)+1 \mid i \in \{k-w, \dots, k+w\} \big \} \text{ for } k \in [n-1],
    ~\mc A^1_n\!=\![n],
\end{align}
where $w \geq 1$.
For any fixed $w$, this sparse Transformer has $O(n)$ connections per attention layer, and it satisfies both assumptions. Specifically, Assumption~\ref{assm:spsptrn}.\ref{assm:spsptrn-cond2} is satisfied with the identity permutation, i.e., $\gamma(i) = (i)$ for $i \in [n]$. Since any token can access other tokens within two hops, Assumption~\ref{assm:spsptrn}.\ref{assm:spsptrn-cond3} is satisfied with $s = 2$. This demonstrates that $O(n)$ connections per layer suffice for sparse attention layers to have universal approximation power. One can similarly check that the sliding window sparsity patterns with/without global attention, proposed in Longformer \citep{beltagy2020longformer}, also satisfy the assumptions with $O(n)$ connections. 
{For the \textsc{BigBird} sparsity pattern~\citep{zaheer2020big}, it is also straightforward to check that a combination of its window attention and global attention satisfies Assumption~\ref{assm:spsptrn} with $O(n)$ connections.}
We state this interesting observation as a corollary below.
\begin{corollary}
\label{cor:O-of-n}
There exist sparse Transformers with $O(n)$ connections per self-attention layer that are universal approximators in the sense of Theorem~\ref{thm:main}.
\end{corollary}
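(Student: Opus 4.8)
The plan is to apply Theorem~\ref{thm:main} directly: since that theorem guarantees universal approximation for \emph{any} sparse Transformer whose sparsity patterns and probability map satisfy Assumptions~\ref{assm:spsptrn} and~\ref{assm:rhotohdmx}, it suffices to exhibit a single sparse attention design that (i) uses only $O(n)$ connections per attention layer and (ii) meets both assumptions. I would take the \textsc{Star} pattern of~\citet{guo2019star} defined in~\eqref{eq:starpattern}, with a fixed window $w \geq 1$, a single pattern ($p=1$, so the cycling is trivial), and the softmax probability map $\rho = \sfmx$, as the witness.

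First I would count the connections. For each $k \in [n-1]$, the set $\mc A^1_k$ consists of at most $2w+1$ neighboring indices (taken modulo $n-1$) together with the relay token $n$, so $|\mc A^1_k| \leq 2w+2 = O(1)$ for fixed $w$; the single relay token satisfies $|\mc A^1_n| = n$. Summing over all tokens gives $(n-1)\cdot O(1) + n = O(n)$ connections, the desired sub-quadratic bound.

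Next I would verify Assumption~\ref{assm:spsptrn}. Condition~\ref{assm:spsptrn-cond1} ($k \in \mc A^1_k$) holds because the window $\{k-w,\dots,k+w\}$ contains $k$, which is fixed by the modular map for $k \in [n-1]$, while $\mc A^1_n = [n] \ni n$. Condition~\ref{assm:spsptrn-cond2} is met by the identity permutation $\gamma(i)=i$: since $w \geq 1$, index $i$ lies in the window of $i+1$ and is fixed by the modular map, giving $i \in \mc A^1_{i+1}$ for $i \in [n-2]$, and $n-1 \in [n] = \mc A^1_n$. Condition~\ref{assm:spsptrn-cond3} is the connectivity requirement, for which the relay token is the crux: every token reaches the relay $n$ in one hop (as $n \in \mc A^1_k$), and the relay reaches every token (as $\mc A^1_n = [n]$), so $\mc S^2_k = [n]$ for all $k$ and $s = 2$ is finite. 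For Assumption~\ref{assm:rhotohdmx}, no extra work is needed at the pattern level: the choice $\rho = \sfmx$ already satisfies it, as shown in~\S\ref{sec:proof-assm-rhotohdmx}.

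Rather than a genuine obstacle, the one point to handle with care is that the relay token retains $n$ connections, so one must confirm that a single dense row does not push the per-layer count back to $\Theta(n^2)$; the $O(n)$ bound survives precisely because only that one token is dense while the remaining $n-1$ tokens each have $O(1)$ connections. All of the analytical difficulty is absorbed into Theorem~\ref{thm:main}, and the corollary reduces to the verification above; analogous (and equally routine) checks for the Longformer sliding-window-with-global-attention pattern~\citep{beltagy2020longformer} and the \textsc{BigBird} pattern~\citep{zaheer2020big} furnish further $O(n)$ witnesses.
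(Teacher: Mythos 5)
Your proposal is correct and follows essentially the same route as the paper: the paper also proves Corollary~\ref{cor:O-of-n} by exhibiting the \textsc{Star} pattern \eqref{eq:starpattern} as a witness, verifying Assumption~\ref{assm:spsptrn} with the identity permutation and $s=2$ (with Assumption~\ref{assm:rhotohdmx} handled by softmax via \S~\ref{sec:proof-assm-rhotohdmx}), and then invoking Theorem~\ref{thm:main}. Your explicit connection count, including the check that the single dense relay row contributes only $n$ edges and so preserves the $O(n)$ total, merely spells out details the paper states in one line.
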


Recall that another line of results that replaces softmax $\sfmx$ with sparse variants $\rho$ \citep{cui2019fine,correia2019adaptively,zhao2019explicit} also fits into our formulation, with  $p=1$ and $\mc A^1_k = [n]$.
As we show in \S~\ref{sec:proof-assm-rhotohdmx}, these alternative $\rho$'s satisfy Assumption~\ref{assm:rhotohdmx}. Thus, by Theorem~\ref{thm:main}, these models also have the universal approximation property.

\vspace*{-5pt}
\section{Proof sketch and discussion}
\vspace*{-5pt}
\subsection{Sketch of proof of Theorem~\ref{thm:main}}
\vspace*{-2pt}
\label{sec:main-proof-sketch}
Now, we sketch the proof of Theorem~\ref{thm:main}, which consists of three steps.
Throughout the proof, we assume without loss of generality that $\sD \subset [0,1)^{d \times n}$.

\noindent{\bf Step 1.}~
In the first step, we approximate $f \in \mc F$ with a piecewise constant function.
Towards this, consider a class of piecewise constant functions $\overline{\mc F}(\delta)$ that map $\sD$ to $\reals^{d \times n}$, where $\delta > 0$ and $\delta^{-1}$ is an integer. Any function in $\overline{\mc F}(\delta)$ maps cubes of the form $\mG+[0,\delta)^{d\times n}$ to matrices $\mA_\mG \in \reals^{d\times n}$, where $\mG \in \{ 0, \delta, \dots, 1-\delta \}^{d \times n}$.
We approximate $f$ with a function $\overline f \in \overline{\mc F}(\delta)$ such that $\funcdist_p(f,\overline f) \leq \epsilon/2$, by choosing small enough $\delta$. We defer the statement and the proof to \S~\ref{sec:proof-lem-main-step1} of the supplementary material.

\noindent{\bf Step 2.}~We then approximate $\overline f \in \overline{\mc F}(\delta)$ with a sparse Transformer network with a slightly modified architecture. In this architecture, we replace $\relu$ in the feed-forward layer with any piecewise linear activation $\phi \in \Phi$, where $\Phi$ denotes the class of {(possibly discontinuous)} piecewise linear functions with three pieces. We also replace $\rho$ in the sparse attention layer with the hardmax $\hdmx$ operator. We refer to the function class represented by the modified sparse Transformer as $\overline{\mc {ST}}^{h,m,r}$. 
{
By a careful construction, Lemma~\ref{lem:main-step2} shows that any $\overline f \in \overline{\mc F}(\delta)$ can be \emph{exactly} represented by the modified Transformer.
To this end, we first carefully choose the positional embedding $\mE$. We then quantize the inputs using feed-forward layers (Lemma~\ref{lem:quantize}), construct a \emph{contextual mapping} using self-attention layers to map the quantized inputs to unique ``ids'' (Lemma~\ref{lem:contextmap}), and then construct a \emph{value mapping} with feed-forward layers to map the ids to desired output values (Lemma~\ref{lem:valuemap}).}
See \S~\ref{sec:proof-lem-main-step2} and 
\S~\ref{sec:proof-lemmas} in the supplementary material for details.
\begin{lemma}
\label{lem:main-step2}
For any $\overline f \in \overline{\mc F}(\delta)$, there exists $\overline g \in \overline{\mc {ST}}^{2,1,1}$ such that $\overline{f}(\mX) = \overline{g}(\mX)$ for all $\mX \in \sD$.
\end{lemma}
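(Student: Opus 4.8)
The plan is to follow the three-stage \emph{quantize $\to$ contextual map $\to$ value map} strategy introduced for dense Transformers by \citet{yun2019transformers}, adapting each stage to the sparse attention architecture in which $\rho$ is replaced by $\hdmx$ and $\relu$ is relaxed to a three-piece activation $\phi \in \Phi$. First I would fix the trainable positional embedding $\mE$ so that, after adding $\mE$, each column of $\mX + \mE$ carries a distinct position-dependent offset; this separates tokens by their index and lets subsequent token-wise layers treat each position differently. Next, using the relaxed feed-forward layers (which with a three-piece $\phi$ can operate with hidden width $r=1$), I would quantize every entry of the shifted input onto the grid $\{0, \delta, \dots, 1-\delta\}$, realizing Lemma~\ref{lem:quantize}. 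After this stage the whole sequence takes values in a finite set, so $\overline f$ is reduced to a finite lookup table on the quantized cubes $\mG + [0,\delta)^{d \times n}$.

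The heart of the argument is the contextual-mapping step (Lemma~\ref{lem:contextmap}): I would use the sparse self-attention layers to produce, for each token $k$, a scalar ``id'' that is an injective function of the \emph{entire} quantized sequence, so that distinct inputs $\mG$ yield distinct ids at every coordinate. In the dense construction a single attention layer lets each token access all others, and a \emph{selective shift} operation encodes the whole multiset of token values into each id. Under sparsity, token $k$ sees only $\mc A^l_k$, so no single layer captures the whole sequence; the remedy is to \emph{propagate} information across layers. Since $\hdmx$ turns each attention head into an exact selection of the maximizing token within the allowed set, I would use the two heads ($h=2$) to implement a selective shift that reads a chosen neighbor while, by Assumption~\ref{assm:spsptrn}.\ref{assm:spsptrn-cond1} ($k \in \mc A^l_k$), always allowing a token to retain its own value. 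Using the Hamiltonian path $\gamma(1) \to \cdots \to \gamma(n)$ guaranteed by Assumption~\ref{assm:spsptrn}.\ref{assm:spsptrn-cond2}, I would arrange the query/key maps so that at successive layers the id at $\gamma(i+1)$ absorbs the accumulated id of its predecessor $\gamma(i)$, which that condition places in $\bigcup_l \mc A^l_{\gamma(i+1)}$. Iterating along the path accumulates all tokens' contributions, and Assumption~\ref{assm:spsptrn}.\ref{assm:spsptrn-cond3} bounds the number of sparse attention layers required for every token's id to become a function of all $n$ tokens.

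Finally, with injective ids in hand, I would apply a value map (Lemma~\ref{lem:valuemap}): a stack of token-wise feed-forward layers, again using $\phi \in \Phi$, that reads each token's id and outputs the corresponding entry of $\mA_\mG$, thereby realizing the finite lookup table exactly. Composing the positional embedding, the quantization layers, the contextual-mapping attention layers, and the value-map feed-forward layers yields $\overline g \in \overline{\mc {ST}}^{2,1,1}$ with $\overline g = \overline f$ on all of $\sD$.

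I expect the contextual mapping to be the main obstacle. Making the selective-shift argument work with only sparse connections requires that the id accumulated at each token be simultaneously (i) large enough to be uniquely selected by the hardmax of the next token along the path, and (ii) confined to a controlled dynamic range so that successive shifts remain injective and no two distinct sequences ever collide. Choosing the shift magnitudes and scaling factors so that the interleaving of the $p$ cycling patterns still produces a globally injective id is the delicate bookkeeping, and it is precisely what forces the depth to grow by the factor $p$ relative to the dense construction of \citet{yun2019transformers}.
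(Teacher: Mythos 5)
Your proposal takes essentially the same route as the paper's proof: a $\gamma$-ordered positional embedding, quantization by width-$1$ modified feed-forward layers, a contextual mapping built from hardmax selective-shift operations swept along the Hamiltonian path of Assumption~\ref{assm:spsptrn}.\ref{assm:spsptrn-cond2} (with the $p$ cycling patterns padded by identity layers, giving the factor-$p$ depth overhead), a final broadcast phase of $s$ layers justified by Assumption~\ref{assm:spsptrn}.\ref{assm:spsptrn-cond3}, and a token-wise value map realizing the lookup table. The delicate points you flag---choosing shift magnitudes so that each newly shifted id becomes the running maximum visible to the next token's sparse neighborhood while the dynamic ranges stay disjoint and the overall map stays injective---are exactly the bookkeeping the paper carries out in Lemma~\ref{lem:contextmap}.
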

\vspace*{-5pt}

\noindent{\bf Step 3.}~The final step is to approximate the function $\overline g \in \overline{\mc {ST}}^{2,1,1}$ with a sparse Transformer $g \in \mc{ST}^{2,1,4}$. This is done by approximating $\phi$ and $\hdmx$ with $\relu$ and $\rho$, respectively, while carefully bounding the accumulation of errors introduced by the approximation. See \S~\ref{sec:proof-lem-main-step3} in the supplementary material for the details.
\begin{lemma}
\label{lem:main-step3}
For $\overline g \in \overline{\mc {ST}}^{2,1,1}$ in Lemma~\ref{lem:main-step2}, there exists $g \in \mc {ST}^{2,1,4}$ such that $\funcdist_p(\overline g, g) \leq \epsilon/2$.
\end{lemma}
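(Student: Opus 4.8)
The plan is to obtain $g$ from the modified network $\overline g$ of Lemma~\ref{lem:main-step2} \emph{without changing its weights beyond a rescaling}, by substituting the two non-standard ingredients with their genuine counterparts: replace the hardmax $\hdmx$ in every self-attention layer by $\rho$ applied to $t$ times the attention scores, and replace each three-piece activation $\phi\in\Phi$ (width $1$) by a width-$4$ $\relu$ feed-forward map. The score scaling is implemented by multiplying each $\mW_Q^i$ by a factor $t>0$, so that $\hdmx$ is now compared against $\rho$ of the sharpened scores; this keeps the head count, head size and the resulting network inside $\mc{ST}^{2,1,4}$. Calling the two approximation parameters $t$ (attention sharpness) and $\tau$ (feed-forward ramp width), I would show that $g\to\overline g$ pointwise on $\sD$ off a small-measure set as $t\to\infty$ and $\tau\to 0$, establish a uniform bound on all intermediate activations, and then invoke the bounded convergence theorem to conclude $\funcdist_p(\overline g,g)\to 0$, finally fixing $t,\tau$ to make this at most $\epsilon/2$.

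For the feed-forward substitution I would verify that each activation $\phi\in\Phi$ used in Lemma~\ref{lem:main-step2} is emulated by four $\relu$ units acting on the scalar preactivation $u$. A continuous three-piece $\phi$ with breakpoints $b_1<b_2$ is reproduced \emph{exactly} by $C + s_0(\relu(u)-\relu(-u)) + (s_1-s_0)\relu(u-b_1) + (s_2-s_1)\relu(u-b_2)$, spending two units on the linear term and one on each kink. A step of height $J$ at $b$ is instead approximated by the ramp $\tfrac{J}{\tau}\big(\relu(u-b)-\relu(u-b-\tau)\big)$, which agrees with the step off the interval $[b,b+\tau]$; for the (piecewise-constant or continuous) activations the construction actually invokes, four units therefore suffice, and the preactivations on which the $\relu$ map disagrees with $\phi$ lie in intervals of total length $O(\tau)$.

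For the attention substitution I would invoke Assumption~\ref{assm:rhotohdmx}. In the construction of Lemma~\ref{lem:main-step2} the columns of every attention score matrix enjoy a strictly positive margin $\zeta$ between the top entry and the rest on all grid-interior inputs, precisely because $\hdmx$ is used there to single out a unique token; hence Assumption~\ref{assm:rhotohdmx} guarantees that $\rho$ of the $t$-scaled scores is within a prescribed $\eta$ of $\hdmx$ once $t$ is large. Let $\mc N\subset\sD$ collect the inputs that drive some layer to a near-tie (margin below $\zeta$) or push some preactivation into one of the $O(\tau)$ transition intervals; $\mc N$ has arbitrarily small measure. Off $\mc N$ I would run an induction over the finitely many layers: assuming the activations of $g$ converge to those of $\overline g$ at the input of a block, the preserved margin forces the scaled $\rho$ to reproduce the hardmax decision and the $\relu$ map to match $\phi$, so the block outputs converge as well. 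This yields $g(\mX)\to\overline g(\mX)$ for every $\mX\in\sD\setminus\mc N$.

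It remains to control integrability: since $\rho$ outputs column-stochastic matrices and so returns convex combinations of bounded value vectors, and the width-$4$ $\relu$ maps have fixed weights and bounded inputs, all outputs $g(\mX)$ lie in a single compact set independent of $t$ and $\tau$, as does $\overline g$. Bounded convergence then gives $\int_{\sD}\norm{\overline g(\mX)-g(\mX)}_p^p\,d\mX\to 0$, and choosing $t$ large and $\tau$ small delivers $\funcdist_p(\overline g,g)\le\epsilon/2$. I expect the main obstacle to be the layerwise propagation of error: one must guarantee that the perturbation introduced by $\rho$ and by the ramps at each layer stays below the margin $\zeta$ on which the \emph{next} layer relies, so that no hardmax decision is ever flipped. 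Pinning down that $\mc N$ is simultaneously small across all layers, and that the uniform activation bound is not inflated by the $t$-scaling (which only sharpens, and does not enlarge, the convex combinations produced by $\rho$), is the delicate accounting at the heart of this step.
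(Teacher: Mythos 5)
Your proposal is correct, and its substitutions coincide with the paper's: approximate each $\phi \in \Phi$ by a steep-ramp $\relu$ map of width at most $4$, replace $\hdmx$ by $\rho$ applied to $t$-scaled scores via Assumption~\ref{assm:rhotohdmx} (the paper writes $\rho[t\,\cdot\,]$ directly rather than folding $t$ into $\mW_Q^i$, an immaterial difference), excise a small-measure bad set of inputs whose coordinates land in the ramp transition intervals of the quantization stage, and use uniform boundedness of both networks on that set. The genuine difference is the finishing move: you take a qualitative limit ($\tau \to 0$, $\eta \to 0$, pointwise convergence off the bad set, then bounded convergence), whereas the paper runs a fully quantitative error-propagation analysis. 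There, the per-layer error introduced by $\rho$ is proportional to $\eta$ times the max--min gap over the sparsity pattern, and since that gap itself inflates as errors accumulate, the total error can grow like $(1+2\delta^{-d}\eta)^k$ across the $\Theta(\delta^{-d})$ selective-shift layers; the paper therefore chooses $\eta$ explicitly and logarithmically small, e.g.\ $\eta = \tfrac{1}{2}\delta^{2d}\log\bigl(1+\delta^{2d}\wt{\delta}/(8n^2)\bigr)$, so that each phase contributes at most $\wt{\delta}/8$, and---crucially---it never needs the approximate context ids to \emph{converge}: the value-map bump $\phi'$ is approximated by a trapezoid with a flat plateau of half-width $\delta/4$, so ids that are merely $\wt{\delta}/4$-accurate are mapped exactly to the correct outputs. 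Your route is sound precisely because the depth is finite and fixed before $\eta$ and $\tau$ are chosen, which is why the exponential amplification you flag as the ``delicate accounting'' is harmless in the limit; what it costs you is any rate, and it hides how small $\eta$ must be relative to the depth. Two refinements you should make: Assumption~\ref{assm:rhotohdmx} only grants the \emph{existence} of some $t$ for each pair $(\zeta,\eta)$, with no monotone improvement as $t \to \infty$, so the limit should be indexed by $\eta \to 0$ with $t = t(\zeta,\eta)$ chosen accordingly; and your bad set can be taken to come from the quantization stage alone---once an input quantizes correctly into the finite set $\sH_\delta$, every downstream margin is at least of order $\delta/2$ by construction, uniformly over $\sH_\delta$, so no additional near-tie inputs arise and your layerwise induction is automatically uniform over this finite set rather than merely pointwise.
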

\vspace*{-5pt}

Combining these three steps, we establish that
  $\funcdist_p(f,g) \leq \funcdist_p(f,\overline f) + \funcdist_p(\overline f, \overline g) + \funcdist_p(\overline g,g) \leq \epsilon$.
\qed

{
\noindent{\bf How many layers are sufficient?}~
In \S~\ref{sec:proof-lem-main-step2}, Lemmas~\ref{lem:quantize}--\ref{lem:valuemap} show that we need $\frac{dn}{\delta}$ sparse Transformer blocks \eqref{eq:s-ff} for quantization, $\frac{p(n-1)}{\delta^d} + s$ for the contextual mapping, and $\frac{n}{\delta^{dn}}$~for the value mapping. Recall that $p$ is from~\eqref{eq:s-ff}, $s$ is from Assumption~\ref{assm:spsptrn}, and $\delta$ is from Step~1 above. 
In comparison, \S~C of \citep{yun2019transformers} shows that the dense counterpart requires $\frac{dn}{\delta}$, $\frac{n}{\delta^d} + 1$, and $\frac{n}{\delta^{dn}}$ Transformer blocks \eqref{eq:tb} for the three corresponding lemmas. Note two observations: 1) The value mapping \textbf{dominates} the depth, and its depth requirements are \textbf{identical} for the two cases; and 2) For contextual mappings (where the attention layers are used), we need roughly \textbf{$p$ times more} layers for sparse models. Recall from \S~\ref{sec:analyze-existing} that $p$ is usually a small constant. These observations mean that sparse Transformers can achieve universal approximation using depth of the \textbf{same order} in $d$, $n$ and $\delta$ as the dense Transformers.
}

\vspace*{-3pt}
\subsection{Key challenges in the proof}
\vspace*{-2pt}
\label{sec:main-proof-challenge}



While the high level outline of the proof is similar to the one for dense Transformers \citep{yun2019transformers}, the proof in \cite{yun2019transformers} crucially relies on having \emph{all} connections for computing attention in each layer, which we do not have in sparse Transformers.
{The sparsity in attention mechanism and the choice of general probability map $\rho$ pose nontrivial challenges in the proof.}
We highlight the key differences below.

Establishing the Step~2 of the dense result \citep{yun2019transformers} relies on constructing a \emph{contextual mapping} using attention layers. A contextual mapping is a function that maps tokens in different sequences to unique values, thereby allowing Transformers to distinguish the same token appearing in different contexts. 
A crucial ingredient in the construction of such a mapping is a shift operation implemented with two attention heads in an attention layer. This shift operation involves each token taking the maximum and minimum over the entire sequence, which obviously cannot be done with sparse Transformers as it would require each token to attend to all the other tokens in the sequence. We circumvent this issue by carefully choosing the positional embedding $\mE$ dependent on $\gamma$ (cf.~Assumption~\ref{assm:spsptrn}.\ref{assm:spsptrn-cond2}), and ensuring that a similar shift operation is applied in a desired order even under sparsity. 

As the final phase of the contextual mapping in \citep{yun2019transformers}, a single attention layer shifts the entire sequence by the maximum over the sequence. Again, this cannot be directly implemented due to sparsity. Using Assumption~\ref{assm:spsptrn}.\ref{assm:spsptrn-cond3}, we instead prove that by stacking $s$ sparse layers, one can successfully implement a similar operation that shifts the entire sequence by the maximum over the whole sequence, up to some controlled errors. This way, we overcome the difficulties posed by the sparsity and construct a new version of contextual mappings. The details can be found in \S~\ref{sec:proof-lem-contextmap} of the supplementary material.

Moreover, the proof of Step~3 in \citep{yun2019transformers} uses the simple fact that softmax can approximate hardmax arbitrarily closely. Since we do not restrict ourselves to softmax and generalize the probability map, a more careful argument is required. Since there are many layers in the network $\overline g$, it turns out that approximating it with an original sparse Transformer in $\mc {ST}^{2,1,4}$ requires carefully controlling the approximation errors accumulated over layers. The proof of Lemma~\ref{lem:main-step3} in \S~\ref{sec:proof-lem-main-step3} of the supplementary material shows that this is indeed possible by utilizing Assumption~\ref{assm:rhotohdmx}.

\vspace*{-5pt}
\section{Experiments}
\vspace*{-4pt}
\label{sec:exp}


We now present our experimental study comparing different design and implementation choices, including sparsity patterns and levels, on {four tasks: i)~a synthetic copying task, ii)~language modeling, iii)~translation, and iv)~GLUE tasks.} Our goal is to understand the effect of such choices while employing sparse Transformers to the tasks with small sequence lengths, complementing the existing results for sparse Transformers on long sequence tasks.

\vspace*{-3pt}
\subsection{Experiment Settings}
\vspace*{-2pt}
We consider four sparsity patterns: \textsc{Strided} \eqref{eq:stridedpattern}, \textsc{Fixed} \eqref{eq:fixedpattern}, \textsc{Star} \eqref{eq:starpattern} and \textsc{Random}. The first three patterns are proposed in \citep{child2019generating} and \citep{guo2019star}; we test them for different values of $w$. In case of the \textsc{Random} pattern, given a sparsity level, we make connections uniformly at random.
Following \citep{child2019generating}, \textsc{Strided} and \textsc{Fixed} patterns are tested for three different head configurations: i)~\textsc{Sequential}, where the sparse attention layers alternate between $\{\mc A_k^1\}$ and $\{\mc A_k^2\}$, as described in the previous sections; ii)~\textsc{Union}, where all sparse attention layers use the sparsity pattern $\{\mc A_k^1 \cup \mc A_k^2\}$; and iii)~\textsc{Multihead}, where half of the attention heads in every attention layer use $\{\mc A_k^1\}$ and the other half use $\{\mc A_k^2\}$. Note that, given the same sequence length, \textsc{Union} is less sparse than the other two configurations. Thus, to ensure fair comparisons, we compare different configurations based on their sparsity levels.

We use maximum sequence length 256 in all our {experiments, except 128 for GLUE tasks.} For the copying task, we experiment with only one sparse Transformer block (cf. Eq~\eqref{eq:s-ff}), with varying numbers of attention layers with $4$ attention heads. For language modeling and translation, we use the Tensor2Tensor \citep{vaswani2018tensor2tensor} framework and employ 12-block and 6-block (respectively) Transformers with $8$~attention heads per block. {For GLUE tasks, we experiment with the $\BB$ model.} For more details of the setup, see \S~\ref{sec:expsetup} of the supplementary material.

\vspace*{-3pt}
\subsection{Results}
\vspace*{-2pt}

\noindent{\bf Copying task.}~We consider a synthetic copying task proposed in \citep{kitaev2020reformer}, where the input sequence has the format $0\mathbf{s}0\mathbf{s}$, where $\mathbf{s}$ is a 127 length sequence of symbols in $[0, 127]$. The models have to predict (copy) the second part, given the first half of the input. This task tests the ability of sparse Transformers to communicate the information. Table~\ref{tab:synthetic} presents the results for this task. 
Except for the \textsc{Star} and \textsc{Random} patterns, we can see that the networks learn to copy the sequences with four sparse attention layers. One possible explanation for the bad performance of \textsc{Star} is that, except for the relay token, it only attends to local neighbors while the task requires to copy distant tokens.

\noindent{\bf Language modeling.}~We conduct the language modeling experiments on the One Billion Word Benchmark \citep{lm1b} which has almost one billion tokens and a vocabulary of more than 800K unique tokens. In Figure~\ref{fig:lm1b}, we plot the perplexity against the sparsity level. We observe that the \textsc{Strided} pattern and the \textsc{Star} achieve the best performance across all sparsity levels. For both the \textsc{Strided} and \textsc{Fixed} patterns, the \textsc{Union} configuration shows the best performance.


\begin{table*}[t!]
    \centering
    \caption{Accuracy on the synthetic copying task. Percentages in parentheses mark the sparsity levels.}
    \vspace{0.5\baselineskip}
    \setlength{\tabcolsep}{3pt}
    \scalebox{0.9}{
    \begin{tabular}{ccccccccc}
    \toprule
    \multicolumn{1}{l}{} &
    \multicolumn{3}{c}{{\sc Strided}}   &
    \multicolumn{3}{c}{{\sc Fixed}}  & {\sc Star} & {\sc Random}\\
    \cmidrule(lll){2-4} \cmidrule(lll){5-7}
        {\bf Depth}
        & \begin{tabular}{@{}c@{}}{\sc Union} \\ (87\%)\end{tabular}
        & \begin{tabular}{@{}c@{}}{\sc Multihead} \\ (93\%)\end{tabular} & \begin{tabular}{@{}c@{}}{\sc Sequential} \\ (93\%)\end{tabular} &\begin{tabular}{@{}c@{}}{\sc Union} \\ (87\%)\end{tabular}
        & \begin{tabular}{@{}c@{}}{\sc Multihead} \\ (93\%)\end{tabular} & \begin{tabular}{@{}c@{}}{\sc Sequential} \\ (93\%)\end{tabular} & (87\%) & (90\%)\\
        \midrule 
        1-layer &  0.82\% & 0.82\% & 0.80\% & 7.04\% & 0.76\% & 0.80\% & 1.53\% & 33.14\% \\
        2-layer &  100.00\% & 100.00\% & 81.24\% & 69.26\% & 56.45\% & 96.01\% & 29.70\% & 63.41\% \\
        3-layer &  100.00\% & 100.00\% & 100.00\% & 99.98\% & 99.08\% & 98.58\% & 42.18\% & 70.29\% \\
        4-layer &  100.00\% & 100.00\% & 100.00\% & 100.00\% & 99.64\% & 100.00\% & 83.57\% & 95.49\% \\
        \bottomrule
    \end{tabular}
    }
    \label{tab:synthetic}
    \vspace{-0.5\baselineskip}
\end{table*}

\begin{figure}[t]
    \centering
    \subfloat[One Billion Benchmark]{
    \label{fig:lm1b}
      \includegraphics[width=0.49\textwidth]{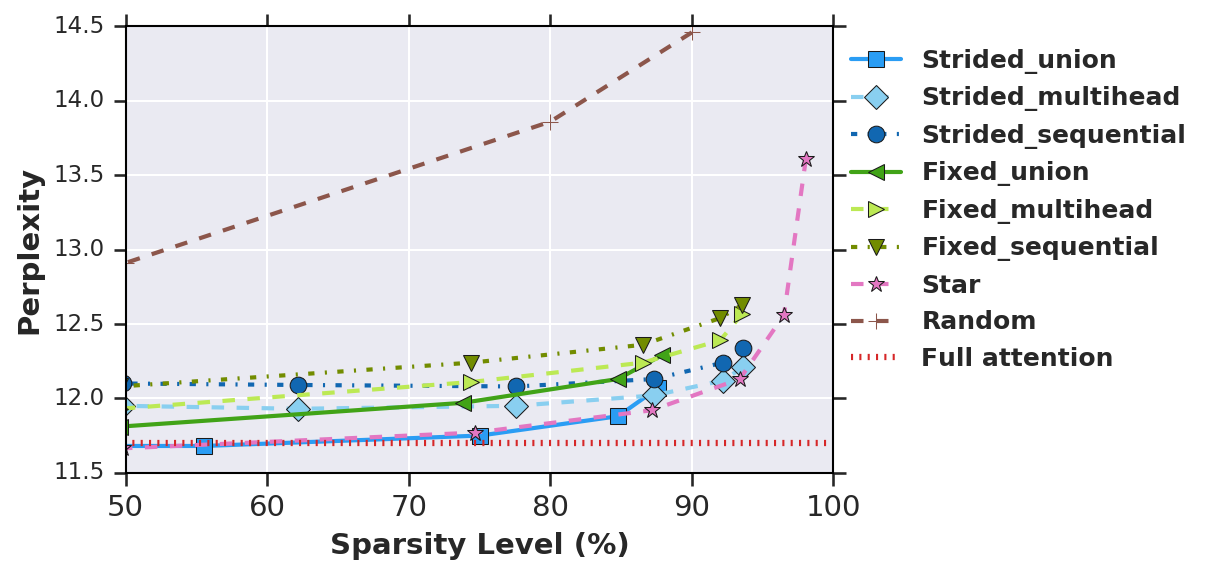}
    }
    \subfloat[WMT en-cs]{
    \label{fig:translate_encs}
        \includegraphics[width=0.49\textwidth]{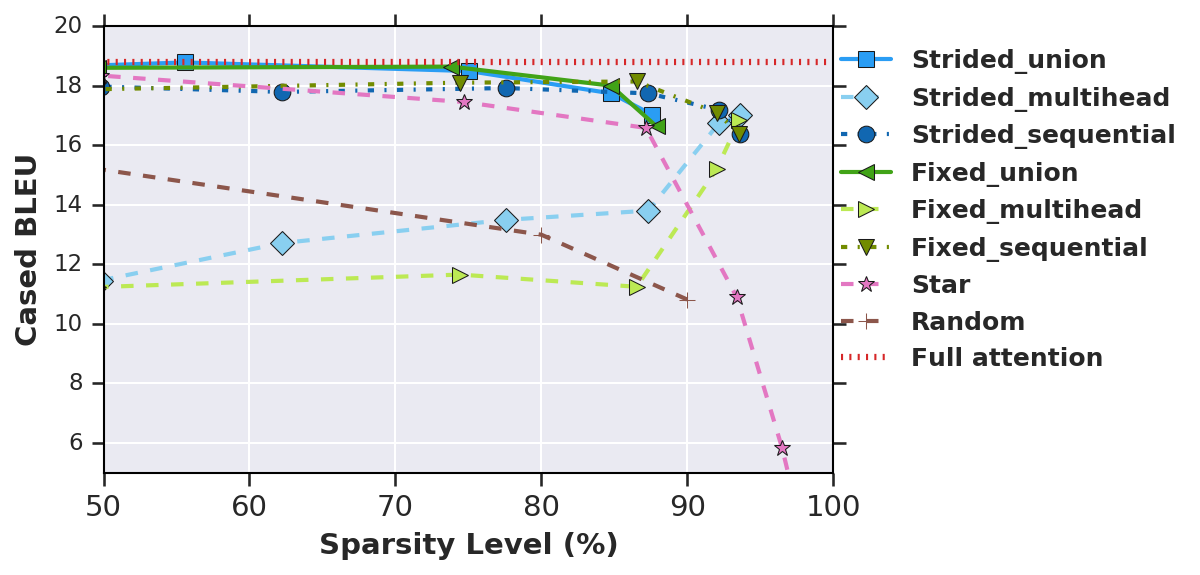}
    }
    \caption{Comparison of sparsity patterns and different head configurations on the One Billion Benchmark (a language modeling task) and WMT en-cs (a translation task). Note that the number of connections in the attention layers goes down as we increase the sparsity level.}
\end{figure}

\begin{figure}[t]
    \centering
    \subfloat[MNLI]{
    \label{fig:bert_mnli}
      \includegraphics[width=0.49\textwidth]{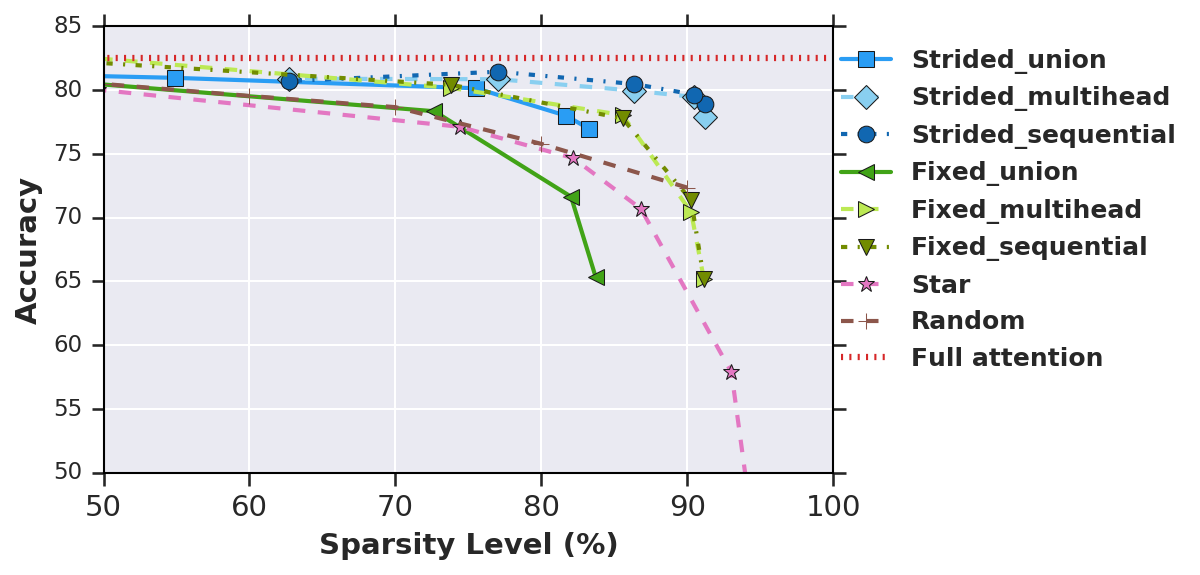}
    }
    \subfloat[XNLI]{
    \label{fig:bert_xnli}
        \includegraphics[width=0.49\textwidth]{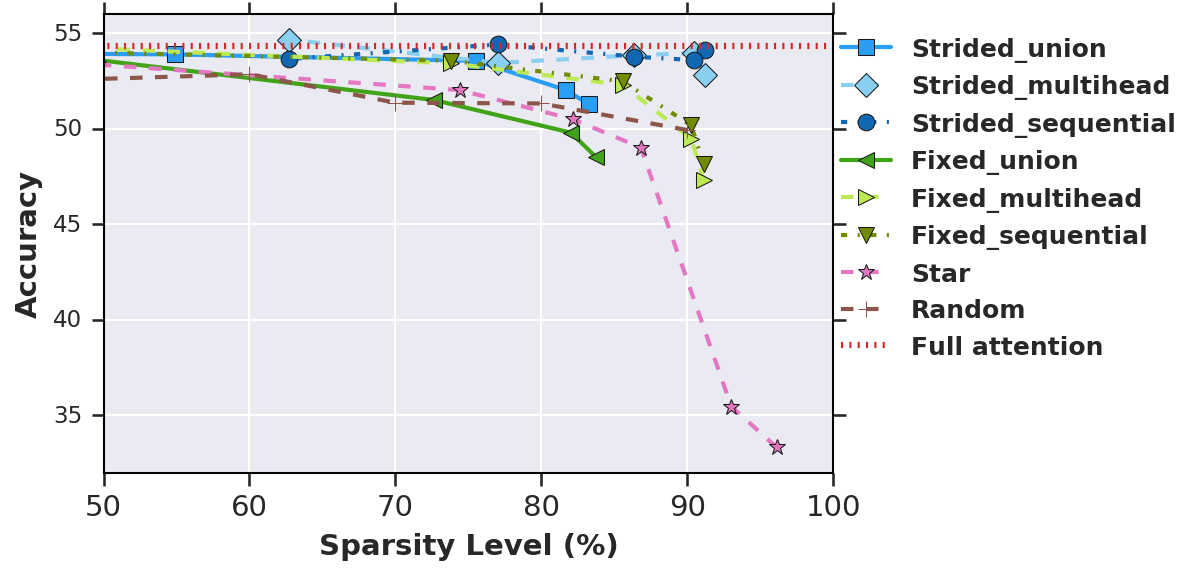}
    }
    \caption{{Comparison of sparsity patterns and different head configurations on the MNLI and XNLI (sentence-pair classification tasks), using the $\BB$ model.}}
\end{figure}

\noindent{\bf Translation.}~For the translation task, we train the model on WMT18 English-Czech (en-cs) dataset and test it on the Newstest 2015 dataset. We plot the BLEU score against the sparsity level in Figure~\ref{fig:translate_encs}. We apply the same sparsity pattern to both the encoder and the decoder. 
The \textsc{Strided} and \textsc{Fixed} patterns with \textsc{Union} configuration show the best scores, which are similar to the dense attention. The \textsc{Union} configuration is also the least sensitive to the sparsity levels. 


\noindent{\bf GLUE Tasks.}~{We experiment with the $\BB$ model and report results on two sentence-pair classification tasks: MNLI \citep{williams2018broad} (Figure~\ref{fig:bert_mnli}) and XNLI \citep{conneau2018xnli} (Figure~\ref{fig:bert_xnli}). We plot the average accuracy of three runs on the dev set against the sparsity level. Additional results of the CoLA and MRPC tasks are reported in \S~\ref{sec:additional_exp} of the supplementary material.  }

\vspace*{-8pt}
\paragraph{Discussion.} 
{
In all tasks, the \textsc{Random} pattern performs worse than the deterministic patterns, demonstrating the need for a careful design of sparsity patterns.}
Overall, our experiments suggest that the design of the optimal sparsity patterns is {heavily} dependent on specific tasks. For example, the \textsc{Star} pattern shows the best performance on the language modeling task, {while having trouble with copying, translation, and BERT experiments.} Among the three head configurations tested for \textsc{Strided} and \textsc{Fixed}, {the \textsc{Union} performs the best in language modeling and translation but suffers in BERT tasks. In translation experiments, we see an interesting trend that the performance of \textsc{Multihead} configuration improves as sparsity increases. We conjecture that this is due to the fact that in \textsc{Strided} and \textsc{Fixed}, we have $|\mc A_k^1| = O(w)$ and $|\mc A_k^2| = O(\nicefrac{n}{w})$ (cf. Eqs~\eqref{eq:stridedpattern} and \eqref{eq:fixedpattern}), so the sparsest choice of $w = O(\sqrt{n})$ is the one with the best ``balance'' between $|\mc A_k^1|$ and $|\mc A_k^2|$. }


\vspace*{-5pt}
\section{Conclusion}
\vspace*{-4pt}
Recently, sparse Transformers have received a lot of attention as they enable more efficient/faster attention mechanisms for the tasks with very long sequence lengths. We take an initial step to provide a theoretical understanding of these models.
We provide a unifying framework that captures existing sparse attention models, and prove a universal approximation theorem for sparse Transformers which holds under intuitive conditions on sparsity patterns and probability maps. We also carry out experiments comparing different sparsity patterns and levels on standard NLP tasks. We hope that this work will shed light on the understanding of sparsity in attention layers, and provide guidance for the design of sparse attention models.

\section*{Broader Impact}
This work studies theoretical aspects of a class of widely used neural network models in NLP and related areas.
Since we do not propose a new method nor a new dataset, we expect that the impact of this work on ethical aspects and future societal consequences will be small, if any.
Other than that, this work brings new insights into the sparsity in attention models, hence may make an impact on the study of faster and more efficient NLP models.

\begin{ack}
{
CY acknowledges partial support as a graduate Research Assistant from the NSF Grant (CAREER 1846088). CY also acknowledges Korea Foundation for Advanced Studies for their support.
}
\end{ack}



\bibliography{cite}
\bibliographystyle{plainnat}

\newpage
\appendix

\section{Outline and notation}
The supplementary material is organized as follows. First, \S~\ref{sec:proof-assm-rhotohdmx} proves that the softmax operator as well as its sparse versions indeed satisfy Assumption~\ref{assm:rhotohdmx}. Next, \S~\ref{sec:proof-lem-main-step1} provides formal statements of Step~1 in the proof sketch (\S~\ref{sec:main-proof-sketch}). The outline of proof of Lemma~\ref{lem:main-step2} (Step~2 in the proof sketch) is presented in \S~\ref{sec:proof-lem-main-step2}, followed by a separate section (\S~\ref{sec:proof-lemmas}) proving the three key sublemmas in the proof. The proof of Step~3, Lemma~\ref{lem:main-step3}, is given in \S~\ref{sec:proof-lem-main-step3}. {Lastly, \S~\ref{sec:expsetup} and \S~\ref{sec:additional_exp} present the detailed setup of our experiments and additional experiment results, respectively.}

We next review some of the notation and also introduce additional notation used throughout the supplementary material.
For a positive integer $a$, let $[a] \defeq \{1, \dots, a\}$.
For $a, b, c \in \reals$ where $b-a > 0$ is an integer multiple of $c > 0$, we write $[a:c:b] \defeq \{a, a+c, a+2c, \dots, b-c, b\}$.
For any matrix $\mA \in \reals^{d \times n}$, let $\mA_j$ denote its $j$-th column, and $\mA_{\mc S}$ denote the submatrix consisting of columns of $\mA$ in the index set $\mc S \subseteq [n]$. We also use $\emA_{i,j}$ to denote its $(i,j)$-th entry. 
Let $\indic{\cdot}$ be the 0-1 indicator for an event.
Let $\ones_n \in \reals^n$ be a vector whose components are all 1.

\section{Sparse probability maps satisfy Assumption~\ref{assm:rhotohdmx}}
\label{sec:proof-assm-rhotohdmx}
In this section, we show that the softmax operator $\sfmx$ as well as the probability maps $\rho$ used to replace softmax in the existing approaches, namely softmax with only top-$k$ inputs \citep{zhao2019explicit}, sparselin-gen \citep{cui2019fine}, and $\alpha$-entmax \citep{correia2019adaptively}, all satisfy Assumption~\ref{assm:rhotohdmx}. We restate the assumption for reader's convenience:
\rhotohdmx*
As in the assumption, we only consider the operation of these probability maps on a single vector, as they are applied column-wise. For each of the probability maps, we will show that for any $\zeta>0$ and $\eta \in (0,1]$, we can choose $t>0$ that satisfies the conditions of Assumption~\ref{assm:rhotohdmx}.

\subsection{Softmax \& softmax with top-$k$ inputs}
Given an input vector $\vv \in \reals^n$, the $j$-th coordinate of the output of softmax $\sfmx[\vv]$ is defined as
\begin{equation*}
    \sfmx[\vv]_j \defeq \frac{\exp(\evv_j)}{\sum_{i=1}^n \exp(\evv_i)}.
\end{equation*}
We assume without loss of generality that the entry of $\vv$ is in decreasing order, where the first two entries satisfy $\evv_1 - \evv_2 \geq \zeta$.
For any such $\zeta > 0$ and any $0<\eta\leq 1$, our aim is to show the existence of $t > 0$ such that 
    $\sfmx[t \vv]_1 = \frac{\exp(t\evv_1)}{\sum_{i=1}^n \exp(t \evv_i)} \geq 1-\eta$.
Then, $\sum_{j=2}^n \sfmx[t \vv]_j \leq \eta$ follows.

Now, since $\evv_i \leq \evv_1 - \zeta$ for $i \in [2:n]$, note that
\begin{align*}
    \sfmx[t \vv]_1 
    = \frac{\exp(t\evv_1)}{\sum_{i=1}^n \exp(t \evv_i)}
    \geq \frac{\exp(t\evv_1)}{\exp(t\evv_1) + (n-1) \exp(t \evv_1-t \zeta)}
    = \frac{1}{1 + (n-1) \exp(-t \zeta)}.
\end{align*}
Since $\frac{1}{1 + (n-1) \exp(-t \zeta)}$ is an increasing function in $t>0$, one can increase $t$ sufficiently large to make it greater than $1-\eta$.

The same argument holds for the softmax with top-$k$ inputs, used in \citep{zhao2019explicit}. By the assumption on $\vv$, entries $\evv_1, \dots, \evv_k$ are the top $k$ components. Thus, 
\begin{align*}
    \rho[t \vv]_1 \geq \frac{1}{1 + (k-1) \exp(-t \zeta)} \geq 1-\eta
\end{align*}
can be satisfied by choosing large enough $t > 0$.

\subsection{Sparselin-gen}
We now consider the case where $\rho$ is sparselin-gen \citep{laha2018controllable}, which was used to sparsify the attention score matrices in \citep{cui2019fine}. 
Given a regularization parameter $\lambda \in [0, 1)$, the sparselin-gen used in \citep{cui2019fine} is defined as
\begin{equation*}
    \rho[\vv] \defeq \argmin_{\vp \in \Delta^{n-1}} \norm{\vp-\vv}^2 - \lambda \norm{\vp}^2,
\end{equation*}
where $\Delta^{n-1}\defeq \{ \vp \in \reals^n \mid \vp \geq \zeros, \sum_{i=1}^n \evp_i = 1\}$ is the probability simplex. Then, the solution for optimization problem above can be written as
\begin{equation*}
    \rho[\vv]_j = \max \left \{ 0, \frac{\evv_j - \tau(\vv)}{1-\lambda} \right \}, \text{ for } j \in [n],
\end{equation*}
where $\tau : \reals^n \to \reals$ is a threshold function that chooses the threshold $\tau(\vv)$ such that $\sum_{j=1}^n \rho[\vv]_j = 1$.

Now, assume without loss of generality that the entry of $\vv$ is in decreasing order, where the first two entries satisfy $\evv_1 - \evv_2 \geq \zeta$.
For any such $\zeta > 0$ and any $0<\eta\leq 1$, our aim is to show the existence of $t > 0$ such that $\rho[t \vv]_1 \geq 1-\eta$.
This is done by choosing $t = \frac{1-\eta}{\zeta}$.
To see this, notice that if $\evv_j$'s are in decreasing order, then $\rho[\vv]_j$ are also in decreasing order. Now consider
\begin{equation*}
    \rho[t\vv]_1 = \max \left \{ 0, \frac{t\evv_1 - \tau(t\vv)}{1-\lambda} \right \},~~
    \rho[t\vv]_2 = \max \left \{ 0, \frac{t\evv_2 - \tau(t\vv)}{1-\lambda} \right \}.
\end{equation*}
If $\rho[t\vv]_2 = 0$, then $\rho[t\vv]_j = 0$ for all $j = 3, \dots, n$, and $\rho[t\vv]_1 = 1 \geq 1- \eta$.
If $\rho[t\vv]_2 > 0$, then
\begin{equation*}
    \rho[t\vv]_1 - \rho[t\vv]_2 = \frac{t\evv_1 - \tau(t\vv)}{1-\lambda} - \frac{t\evv_2 - \tau(t\vv)}{1-\lambda} = \frac{t (\evv_1 - \evv_2)}{1-\lambda} \geq t (\evv_1 - \evv_2) \geq t \zeta = 1 - \eta.
\end{equation*}

\subsection{$\alpha$-entmax}
Next, we consider the case where $\rho$ is $\alpha$-entmax \citep{peters2019sparse}, which was used to sparsify the attention score matrices in \citep{correia2019adaptively}. Given a parameter $\alpha \geq 1$, the $\alpha$-entmax is defined as 
\begin{equation*}
    \rho[\vv] \defeq \argmax_{\vp \in \Delta^{n-1}} \vp^T \vv + H_\alpha(\vv),
\end{equation*}
where $\Delta^{n-1}$ is the probability simplex and $H_\alpha$ is the Tsallis continuous family of entropies
\begin{equation*}
    H_\alpha(\vv) \defeq 
    \begin{cases}
    \frac{1}{\alpha(\alpha-1)} \sum_j \evv_j - \evv_j^\alpha & \alpha > 1,\\
    -\sum_j \evv_j \log \evv_j & \alpha = 1.
    \end{cases}
\end{equation*}
As shown in \citep{correia2019adaptively}, the solution of $\alpha$-entmax is equal to softmax if $\alpha = 1$, and otherwise ($\alpha > 1$) it is given in the form
\begin{equation*}
    \rho[\vv]_j = \big [\max\{ 0, (\alpha-1)\evv_j - \tau(\vv) \} \big]^{\frac{1}{\alpha-1}}, \text{ for } j \in [n],
\end{equation*}
where $\tau : \reals^n \to \reals$ is a threshold function that chooses the threshold $\tau(\vv)$ such that $\sum_{j=1}^n \rho [\vv]_j = 1$. Since softmax ($\alpha=1$) is already covered above, we focus on $\alpha > 1$.

Again, assume without loss of generality that the entry of $\vv$ is in decreasing order, where the first two entries satisfy $\evv_1 - \evv_2 \geq \zeta$.
For any such $\zeta > 0$ and any $0<\eta\leq 1$, our aim is to show the existence of $t > 0$ such that $\rho[t \vv]_1 \geq 1-\eta$.
This is done by choosing $t = \nicefrac{1}{\zeta(\alpha-1)}$.

Note that $(\alpha-1)t(\evv_1 - \evv_2) \geq 1$ due to our choice of $t$. Then, we will show that with such a $t$, $\rho[t\vv]_1 = 1$ must hold.
For the sake of contradiction, suppose not: $\rho[t\vv]_1 < 1$. Then, by monotonicity of $\rho[t\vv]_j$, we have $\rho[t\vv]_2 > 0$. This means
\begin{equation*}
    \rho[t\vv]_2 = \big[ (\alpha-1)t\evv_2 - \tau(t\vv)  \big]^{\frac{1}{\alpha-1}} > 0,
\end{equation*}
in particular, we have $(\alpha - 1)t\evv_2 - \tau(t\vv) > 0$.
However, recall that $(\alpha-1)t(\evv_1 - \evv_2) \geq 1$, which implies $(\alpha-1)t\evv_1 - \tau(t\vv) > 1$. This results in 
\begin{equation*}
    \rho[t\vv]_1 = \big[ (\alpha-1)t\evv_1 - \tau(t\vv)  \big]^{\frac{1}{\alpha-1}} > 1,
\end{equation*}
thus contradicting $\rho[t\vv]_1 < 1$. Therefore, $\rho[t\vv]_1 = 1$ must hold.

\section{Details of the Step~1 in the proof sketch (\S~\ref{sec:main-proof-sketch})}
\label{sec:proof-lem-main-step1}
We start by formally defining the function class $\overline {\mc F}(\delta)$.
\begin{equation*}
    \overline{\mc F}(\delta) \defeq \left \{ \mZ \mapsto \sum_{\mG \in \sG_\delta} \mA_\mG \indic{\mZ \in \mG+[0,\delta)^{d\times n}}  \mid \mZ \in \sD, \mA_\mG \in \R^{d\times n}\right \},
\end{equation*}
where $\sG_\delta \defeq \{ 0, \delta, \dots, 1-\delta \}^{d \times n}$.
We now state and prove the lemma.
\begin{lemma}
\label{lem:main-step1}
For any $f \in \mc F$ and $\epsilon>0$, there exists a small enough $\delta > 0$ such that there exists $\overline f \in \overline{\mc F}(\delta)$ such that $\funcdist_p(f,\overline f) \leq \epsilon/2$.
\end{lemma}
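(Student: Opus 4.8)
The plan is to invoke uniform continuity of $f$ on the compact set $\sD$ and to let $\overline f$ equal $f$ at one representative point inside each grid cube. Concretely, I would take $\delta = 1/N$ for a large integer $N$ (so that $\delta^{-1}$ is an integer, as required), which partitions $[0,1)^{d\times n}$ into the disjoint cubes $\{\mG+[0,\delta)^{d\times n}\}_{\mG \in \sG_\delta}$. For each $\mG \in \sG_\delta$ whose cube meets the domain, i.e. $(\mG+[0,\delta)^{d\times n}) \cap \sD \neq \emptyset$, I would pick a representative point $\mX_\mG$ in that intersection and set $\mA_\mG \defeq f(\mX_\mG)$; for cubes disjoint from $\sD$ the value is irrelevant and can be set to $\vzero$. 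This defines an $\overline f \in \overline{\mc F}(\delta)$.

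The key estimate is a uniform pointwise error bound. Since $f$ is continuous on the compact set $\sD$, it is uniformly continuous with respect to the entry-wise $\ell_p$ norm, so there is a modulus $\delta' > 0$ such that $\norm{f(\mX) - f(\mX')}_p \leq \epsilon/2$ whenever $\mX, \mX' \in \sD$ satisfy $\norm{\mX - \mX'}_p \leq \delta'$. Any two points of a single cube differ by at most $\delta$ in each of the $dn$ entries, so their $\ell_p$ distance is at most $\delta (dn)^{1/p}$. Choosing $N$ large enough that $\delta(dn)^{1/p} \leq \delta'$ then guarantees that every $\mX \in \sD$, which lies in some cube with representative $\mX_\mG \in \sD$, satisfies $\norm{f(\mX) - \overline f(\mX)}_p = \norm{f(\mX) - f(\mX_\mG)}_p \leq \epsilon/2$.

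Finally I would integrate this uniform bound. Using $\sD \subseteq [0,1)^{d\times n}$, so that $\mathrm{vol}(\sD) \leq 1$, we obtain
\[
\funcdist_p(f,\overline f) = \Big(\int_{\sD} \norm{f(\mX) - \overline f(\mX)}_p^p \, d\mX\Big)^{1/p} \leq \big((\epsilon/2)^p \cdot \mathrm{vol}(\sD)\big)^{1/p} \leq \epsilon/2,
\]
which is exactly the claim.

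I do not expect a serious obstacle here: this is the standard density of step functions in the space of continuous functions on a compact set. The only point requiring genuine care is that $f$ is defined only on $\sD$, so the representative $\mX_\mG$ must be chosen inside $\sD$ rather than merely inside the cube. This ensures that both arguments of $f$ in the error bound lie in the domain, so that uniform continuity truly applies, and it simultaneously handles the cubes that only partially overlap $\sD$ without any boundary issue.
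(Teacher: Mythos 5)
Your proof is correct and follows essentially the same route as the paper's: uniform continuity of $f$ on the compact domain, a piecewise constant approximant constant on each $\delta$-grid cube, a uniform pointwise bound of $\epsilon/2$, and integration over $\sD$ using $\mathrm{vol}(\sD) \leq 1$. If anything, your choice of representative points $\mX_\mG \in \sD$ is slightly more careful than the paper's own proof, which sets $\overline f$ equal to $f(\mG)$ at the grid corner $\mG$ even though $\mG$ need not belong to $\sD$.
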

\begin{proof}
Since $f: \sD \to \reals^{d \times n}$ is a continuous function on a compact domain, it is uniformly continuous. Also, continuity is defined with respect to entry-wise $\ell_p$ norm which is equivalent to entry-wise $\ell_\infty$ norm, uniform continuity leads to
\begin{equation*}
    \forall \epsilon > 0, \exists \delta > 0 \text { such that } \forall \mX, \mY, \linf{\mX-\mY} < \delta \implies \norm{f(\mX)-f(\mY)}_p < \epsilon/2.
\end{equation*}
Then, suppose we create a set of cube grid points $\sG_\delta \defeq \{ 0, \delta, \dots, 1-\delta \}^{d \times n}$, and define a piece-wise constant approximation 
\begin{equation*}
    \overline f(\mX) = \sum\nolimits_{\mG \in \sG_{\delta}} f(\mG) \indic{\mX \in \mG + [0,\delta)^{d \times n}}.
\end{equation*}
Note that for any $\mX \in \mG + [0,\delta)^{d \times n}$ we have $\linf{\mX - \mG} < \delta$, so we have
\begin{equation*}
    \norm{f(\mX) - \overline f(\mX)}_p = \norm{f(\mX)-f(\mG)}_p < \epsilon/2.
\end{equation*}
This implies that
\begin{equation*}
    \funcdist_p(f, \overline f) = 
    \left (\int_{\sD} \norm{f(\mX) - \overline f(\mX)}_p^p \right )^{1/p}
    \leq \epsilon/2,
\end{equation*}
finishing the proof of the lemma.
\end{proof}

\section{Proof of Lemma~\ref{lem:main-step2} (Step~2 in \S~\ref{sec:main-proof-sketch})}
\label{sec:proof-lem-main-step2}
In this section, we describe in further details 
how modified sparse Transformers (the class $\overline{\mc {ST}}^{2,1,1}$) are able to exactly express arbitrary piecewise constant functions in $\overline{\mc F}(\delta)$.
We show that we can compute a \emph{contextual mapping} of the entire input sequences without relying on dense self-attention layers. The token-wise feed-forward layers then transform these contextual mappings to the desired output sequence.

To give a high level summary of the proof, we want to show that 
given a piece-wise constant function $\overline f \in \overline{\mc F}(\delta)$, there exists a modified Transformer network $\overline{g} \in \overline{\mc {ST}}^{2, 1, 1}$ that exactly represents $\overline f$.
Recall first that the function class $\overline{\mc {ST}}^{2, 1, 1}$ has an additive positional embedding matrix $\mE \in \reals^{d \times n}$ that is added to input before the input is fed to the network. We start by choosing the positional embedding $\mE$ and construct a Transformer network that implements quantization of the input, contextual mapping of the quantized input, and value mapping of the context ids.
\begin{enumerate}
\setlength{\itemsep}{2pt}
\item Choose the positional embedding $\mE$ according to $\gamma$ in Assumption~\ref{assm:spsptrn}.\ref{assm:spsptrn-cond2}. After addition, each column of the input $\mX_k+\mE_k$ are in disjoint intervals.
\item Given the input $\mX+\mE$, a series of modified feed-forward layers quantizes it so that each entry of the quantized input has a value in $\{0, \delta, \dots, n-\delta\}$ (Lemma~\ref{lem:quantize}).
\item Next, a series of modified sparse self-attention layers takes the quantized input $\mH$ and implement a {\em contextual mapping} $q$ such that, for different quantized input sequences $\mH$ and $\mH'$, all the elements in $q(\mH)$ and $q(\mH')$ are distinct (Lemma~\ref{lem:contextmap}).
\item Finally, a series of modified feed-forward layers maps each element in the context id $q(\mH)$ to the desired output value of $\overline{f} \in \overline{\mc F}$ at the input $\mX$ (Lemma~\ref{lem:valuemap}).
\end{enumerate}
We defer the proofs of Lemmas~\ref{lem:quantize}, \ref{lem:contextmap}, and \ref{lem:valuemap} to a separate section: see \S~\ref{sec:proof-lemmas}.

Before discussing the details of each step, we note that although a Transformer network stacks self-attention and feed-forward layers in an alternate manner, we can use a series of arbitrary number of the same layers, thanks to skip connections.
The outline of the proof is similar to \citep{yun2019transformers}, but key component in their proof called selective shift operation relies on the fact that each token can attend to the entire sequence; this is not true in sparse Transformers, which poses a nontrivial challenge. We overcome this issue by a more careful construction of the positional embedding $\mE$ and sparse self-attention layers.


\subsection{Choosing the positional embedding}
\label{sec:posembed}
Recall from Assumption~\ref{assm:spsptrn}.\ref{assm:spsptrn-cond2} that there exists a permutation $\gamma: [n] \to [n]$ such that for all $i \in [n-1]$, $\gamma(i)$ is one of the tokens that the $\gamma(i+1)$-th token directly attends to. Using this permutation $\gamma$, we choose the columns of positional embedding $\mE$ in the following way:
\begin{equation*}
    \mE_{\gamma(1)} = (n-1)\ones_n, \text{ and } \mE_{\gamma(i)} = (i-2)\ones_n, \text{ for } i \in [2:n]
\end{equation*}
As a result, the $\gamma(1)$-th column of $\mX + \mE$ will be in the range $[n-1,n)^d$, and similarly $\mX_{\gamma(i)} + \mE_{\gamma(i)} \in [i-2,i-1)^d$ for $i \in [2:n]$. This means that the entries corresponding to different tokens lie be in disjoint intervals of the form $[j,j+1)$, where $j\in[0:n-1]$.

\subsection{Quantization by feed-forward layers}
Note from the previous step that each entry of $\mX + \mE$ must be in $[0,n)$. Next, we quantize this interval $[0,n)$ of input using to a set of $\delta$-grid points $\{0, \delta, \dots, n-\delta\}$. This allows us to deal with finite set of values, which proves useful in the later stages of the proof.
The next lemma shows that the quantization can be carried out using a seried of the modified feed-forward layers.
\begin{lemma}
\label{lem:quantize}
Consider a entry-wise quantization map $g^{\rm ent}_{q} : \reals \to \reals$:
\begin{equation*}
    g_{\rm q}^{\rm ent} (t) = 
    \begin{cases}
    k\delta & \text{ if } k\delta \leq t < (k+1)\delta, ~~k \in [0:n/\delta-1],\\
    t & \text{ otherwise. }
    \end{cases}
\end{equation*}
There exists a function $g_{\rm q}: \reals^{d \times n} \mapsto \reals^{d \times n}$ composed of $\frac{dn}{\delta}$ token-wise feed-forward layers with $r=1$ and an activation $\phi \in \Phi$, which implements the entry-wise quantization $g^{\rm ent}_{q}$ to each entry of its input.
\end{lemma}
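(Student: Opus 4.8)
The plan is to realize $g^{\rm ent}_q$ by processing one coordinate and one grid interval at a time, exploiting that a token-wise feed-forward layer with $r=1$ acts on every column independently and can be steered to touch a single coordinate. First I would record the action of one modified feed-forward layer. Using the skip connections (as already noted, a block's attention sublayer can be made the identity, e.g.\ by taking $\mW_O = \bm{0}$), each layer computes $\mZ \mapsto \mZ + \mW_2\,\phi(\mW_1 \mZ)$ column-wise, with $\mW_1 \in \reals^{1\times d}$, $\mW_2 \in \reals^{d\times 1}$, and $\phi \in \Phi$. Taking $\mW_1 = \ve_i^T$ and $\mW_2 = \ve_i$ reduces this to the scalar map $\evz_i \mapsto \evz_i + \phi(\evz_i)$ on the $i$-th coordinate of \emph{every} token, while all other coordinates pass through unchanged.

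Next I build, for each coordinate $i \in [d]$ and each interval index $k \in [0:n/\delta-1]$, a single layer that quantizes the interval $[k\delta,(k+1)\delta)$. I choose $\phi \in \Phi$ to be the three-piece function that equals $0$ on $(-\infty,k\delta)$, equals $k\delta - t$ on $[k\delta,(k+1)\delta)$, and equals $0$ on $[(k+1)\delta,\infty)$. The induced update then sends $t \mapsto t + (k\delta - t) = k\delta$ for $t$ in the interval and $t \mapsto t$ for $t$ outside it, which is precisely $g^{\rm ent}_q$ restricted to that one interval on that one coordinate. Note that $\phi$ is discontinuous at $(k+1)\delta$, but this is allowed since $\Phi$ permits discontinuous pieces; and the flat, ramp, flat shape uses exactly the three-piece budget.

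Finally I would compose these $d\cdot(n/\delta)=dn/\delta$ layers over all $i$ and $k$. The key observation making the composition valid is that the supporting intervals $[k\delta,(k+1)\delta)$ are pairwise disjoint, so the gadgets do not interfere: after the values falling in interval $k$ are collapsed to the grid point $k\delta$, no subsequent layer (whose support is a different interval) can move them, and layers acting on a coordinate $i'\neq i$ leave coordinate $i$ fixed. Hence the overall composition applies $g^{\rm ent}_q$ to every entry at once, in $dn/\delta$ layers as claimed. The argument is largely mechanical; the only point that needs care, and the main (mild) obstacle, is to confirm that a single $\phi \in \Phi$ suffices per interval---the quantization of one interval is exactly realizable with three pieces---so that the disjoint-support stacking never exceeds the allowed architecture.
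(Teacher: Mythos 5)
Your proposal is correct and takes essentially the same approach as the paper's proof: one feed-forward layer per (coordinate, interval) pair with the attention sublayer trivialized, a flat--ramp--flat three-piece activation that collapses $[k\delta,(k+1)\delta)$ to $k\delta$ while fixing everything else, disjoint supports guaranteeing non-interference across the $dn/\delta$ layers. The only cosmetic difference is that the paper keeps a single fixed $\phi$ (with kinks at $0$ and $\delta$) and realizes the shift by $k\delta$ through a bias, as in $\mZ \mapsto \mZ + \ve^{(i)}\phi\big((\ve^{(i)})^T\mZ - k\delta\vone_n^T\big)$, whereas you bake the shift into the breakpoints of a per-layer activation --- equivalent here, and harmless since the modified architecture permits different activations in $\Phi$ across layers (the paper itself uses a different $\phi'$ in the value-mapping stage).
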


\subsection{Contextual mapping by sparse self-attention layers}
After the input $\mX + \mE$ is quantized, the output of $g_{\rm q}$ must be in the following set $\sH_{\delta} \subset \reals^{d \times n}$:
\begin{align*}
{\sH}_{\delta}
\defeq 
\{ \mG + \mE \in \reals^{d \times n} \mid 
&~\mG \in \sG_\delta \},
\end{align*}
where $\sG_\delta \defeq \{ 0, \delta, \dots, 1-\delta \}^{d \times n}$ was defined to be the $\delta$-cubic grid points of $[0,1)^{d \times n}$.
Using this finite set of sequences, we construct a \emph{contextual mapping} that maps each sequence in $\sH_{\delta}$ to unique numbers.
Recall that the sparse attention layer has $p$ sparsity patterns that rotate in cycles, and Assumption~\ref{assm:spsptrn}.\ref{assm:spsptrn-cond3} assumes that one token directly/indirectly access all the other tokens after $s$ such sparse attention layers. We now state the lemma.
\begin{lemma}
\label{lem:contextmap}
Assume that $n \geq 2$, and $\delta^{-1}$ is an integer satisfying $\delta^{-1} \geq 2$. Suppose that the sparse self-attention layers \textup{($h = 2, m = 1$)} satisfy Assumption~\ref{assm:spsptrn} and employ the hardmax $\hdmx$ operator, and that the positional embedding $\mE$ was chosen as described in \S~\ref{sec:posembed}. 
Then, there exist a function $g_{\rm c}: \reals^{d \times n} \to \reals^{d \times n}$ composed of $\frac{p(n-1)}{\delta^{d}} + s$ sparse self-attention layers, and a vector $\vu \in \reals^d$, such that $q(\mH) \defeq \vu^T g_{\rm c}(\mH)$ satisfies the following properties:
\begin{enumerate}[\hspace{5pt}1.]
\setlength{\itemsep}{-3pt}
    \item \label{lemcond:1} For any $\mH \in {\sH}_{\delta}$, the entries of $q(\mH)$ are all distinct.
    \item \label{lemcond:2} For any $\mH, \mH' \in {\sH}_{\delta}$ such that $\mH \neq \mH'$, all entries of $q(\mH)$, $q(\mH')$ are distinct.
\end{enumerate}
\end{lemma}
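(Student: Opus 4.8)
The plan is to adapt the three-phase contextual-mapping construction of \citet{yun2019transformers} to the sparse setting, replacing every operation that requires a token to attend to the whole sequence by either a sequential propagation along the Hamiltonian path of Assumption~\ref{assm:spsptrn}.\ref{assm:spsptrn-cond2} or a final broadcast that exploits the $s$-step reachability of Assumption~\ref{assm:spsptrn}.\ref{assm:spsptrn-cond3}. First I would fix a scalar read-out $\vu \in \reals^d$ (e.g.\ $\vu = (1, \delta^{-1}, \dots, \delta^{-(d-1)})^T$) so that $\ell_k \defeq \vu^T \mH_k$ is an injective encoding of the quantized column $\mH_k$. The key consequence of the positional embedding chosen in \S~\ref{sec:posembed} is that the values $\ell_{\gamma(1)}, \dots, \ell_{\gamma(n)}$ lie in \emph{disjoint} intervals whose order is exactly the path order; this alignment between ``ordering by value'' and ``ordering along $\gamma$'' is what makes the sequential construction below well-defined.

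The main phase is a sparse version of the \emph{selective shift} operation. In the dense proof, one attention layer uses one head to compute $\max_j \ell_j$ and shifts every token whose scalar falls in a target bin by a large multiple of this maximum; sweeping the target bin through all grid values produces a map whose output at each token depends on the entire multiset of scalars. Since computing $\max_j \ell_j$ in a single layer is exactly the operation forbidden under sparsity, I would instead maintain a running maximum that is passed forward one token at a time: because $\gamma(i) \in \bigcup_{l} \mc A^l_{\gamma(i+1)}$, token $\gamma(i+1)$ can read $\gamma(i)$ in a single hop, while Assumption~\ref{assm:spsptrn}.\ref{assm:spsptrn-cond1} lets every other token hold its state via its self-loop. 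As the active sparsity pattern cycles with period $p$ and the required edge may sit in any of the $p$ patterns, each hop is allotted $p$ consecutive layers so that the edge is guaranteed to be available in one of them. Carrying out the selective-shift sweep over the $\delta^{-d}$ grid levels along the $n-1$ path edges accounts for the $\frac{p(n-1)}{\delta^{d}}$ self-attention layers in the statement, and the $\hdmx$ operator makes each selection exact.

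The remaining phase reproduces the dense construction's final step, which shifts the entire sequence by the global maximum in one dense layer. Under sparsity this single layer is unavailable, but Assumption~\ref{assm:spsptrn}.\ref{assm:spsptrn-cond3} guarantees $\mc S^s_k = [n]$ for every $k$, so stacking $s$ cycled sparse layers gives each token a receptive field equal to $[n]$; I would show that this composite implements the same global-max shift, contributing the $+s$ layers. Finally, setting $q(\mH) \defeq \vu^T g_{\rm c}(\mH)$, Properties~\ref{lemcond:1} and \ref{lemcond:2} follow from injectivity of the constructed map: the final scalar at each token is a strictly monotone function of the pair (token identity, full context), so its entries are pairwise distinct within a sequence and the value sets of two distinct sequences $\mH \neq \mH'$ are disjoint, exactly as in the combinatorial bookkeeping of \citet{yun2019transformers}.

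I expect the selective shift to be the crux. The dense argument leans entirely on the one-shot global read $\max_j \ell_j$, and replacing it with a path-propagated running maximum forces one to verify simultaneously that (i) at each of the $\frac{p(n-1)}{\delta^d}$ steps the $\hdmx$ fires for exactly the intended token and leaves all others frozen, (ii) the disjoint-interval structure produced by the positional embedding is preserved throughout, keeping the value order aligned with $\gamma$, and (iii) the shifts remain compatible with the later replacement of $\hdmx$ by $\rho$ in Lemma~\ref{lem:main-step3}, so that the accumulated error stays controllable. Checking that the $s$-layer block faithfully emulates the global-max broadcast is the second, milder, difficulty.
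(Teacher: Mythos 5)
Your proposal follows essentially the same route as the paper's proof: the same read-out vector $\vu$, the same use of the positional embedding to place the column ids $\vu^T\mH_k$ in disjoint intervals ordered along $\gamma$, a selective-shift sweep propagating a running maximum one path-edge at a time with $p$ layers per pattern cycle (yielding exactly $\frac{p(n-1)}{\delta^d}$ layers), and a final $s$-layer max-broadcast justified by Assumption~\ref{assm:spsptrn}.\ref{assm:spsptrn-cond3}. The only caveat is that the $s$ stacked layers do not implement the global-max shift \emph{exactly} — the paper instead shows each token's value is \emph{dominated} by $(2sn\delta^{-nd-1})^s$ times the unique id, with lower-order cross terms bounded via $\bigl(\tfrac{2s+1}{2s}\bigr)^s \leq 2$, and distinctness then follows from a mod argument plus disjoint intervals — but you correctly flag verifying this block as the remaining work.
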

This contextual mapping maps each unique sequence/context into different context ids, enabling the network to distinguish the same token appearing in different sequences.

\subsection{Value mapping by feed-forward layers}
After the contextual mapping, we use the token-wise feed-forward layers to map each different context ids to the desired output value of the target function $\overline{f}$. More specifically, recall the function $g_{\rm c}$ from Lemma~\ref{lem:contextmap}. For any $\mH \in \sH_\delta$, we need to map the output $g_{\rm c}(\mH)$ of Lemma~\ref{lem:contextmap} to the desired function value $\overline f(\mH-\mE)$ (recall that $\mH$ is the quantized input \emph{after} adding $\mE$ to $\mX$, so we need to subtract $\mE$). This is done by implementing a token-wise value mapping using the feed-forward layers.
\begin{lemma}
\label{lem:valuemap}
There exists a function $g_{\rm v}: \reals^{d \times n} \to \reals^{d \times n}$ composed of $n (\frac{1}{\delta})^{dn}$ token-wise feed-forward layers \textup{($r=1$)} with an activation $\phi' \in \Phi$ such that $g_{\rm v}$ is defined by a token-wise function $g_{\rm v}^{\rm tkn}:\reals^d \to \reals^d$ on each column,
\begin{equation*}
    g_{\rm v} (\mZ) = 
    \begin{bmatrix}
    g_{\rm v}^{\rm tkn} (\mZ_{1}) &
    \cdots &
    g_{\rm v}^{\rm tkn} (\mZ_{n})
    \end{bmatrix},
\end{equation*}
where for all $\mH \in \sH_{\delta}$ and $k \in \{1, \dots, n\}$,
\begin{equation*}
    g_{\rm v}^{\rm tkn}(g_{\rm c}(\mH)_{k}) =
    \overline f (\mH-\mE)_{k}.
\end{equation*}
\end{lemma}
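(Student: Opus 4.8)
The plan is to exploit the fact that the value map is entirely \emph{token-wise}: it applies the same scalar-to-vector rule to each column and never invokes attention, so the sparsity patterns play no role here and the construction coincides with the dense case of \citet{yun2019transformers}. First I would record what the contextual mapping hands us. By Lemma~\ref{lem:contextmap}, over the finite family $\sH_\delta$ the scalar readouts $\vu^T g_{\rm c}(\mH)_k$ are pairwise distinct across all $\mH \in \sH_\delta$ and $k \in [n]$. Since $\lvert \sH_\delta \rvert = (1/\delta)^{dn}$ and each element has $n$ columns, this yields exactly $N = n(1/\delta)^{dn}$ distinct ids, which I would sort as $q_1 > q_2 > \dots > q_N$; to each $q_j$ I attach its fixed source vector $\vw_j \defeq g_{\rm c}(\mH)_k \in \reals^d$ (so $\vu^T \vw_j = q_j$) and its fixed target $\vo_j \defeq \overline f(\mH-\mE)_k \in \reals^d$. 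The task then reduces to building a token-wise map sending each of the finitely many $\vw_j$ to the corresponding $\vo_j$.

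The construction uses one feed-forward layer per id, processed in \emph{decreasing} order of $q_j$. A single fixed three-piece activation $\phi' \in \Phi$ suffices — take the clipped ramp equal to $0$ for $x \le 0$, to $x$ on $[0,1]$, and to $1$ for $x \ge 1$ — because each layer can position and steepen it through its own choice of $\mW_1 = t_j \vu^T$ and bias $b_j$, placing the rising edge so that the ramp evaluates to $1$ at the readout $q_j$ and to $0$ at any readout $\le m_j$ for some threshold $m_j \in (q_{j+1}, q_j)$. Layer $j$ then adds the update $\mW_2 \phi'(t_j \vu^T \mZ_k + b_j)$ with direction $\mW_2 = \vo_j - \vw_j$. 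Since each layer must relocate only the single source vector $\vw_j$, a single additive direction is exactly enough, which is what the width-$1$ feed-forward layer provides: the token currently carrying id $q_j$ is sent $\vw_j \mapsto \vw_j + (\vo_j - \vw_j) = \vo_j$, while every token whose current readout is $\le m_j$ is left unchanged.

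The crux — and the step I expect to be the main obstacle — is the non-interference bookkeeping under the three-piece restriction, since a clean two-sided bump is impossible with only three linear pieces. I would resolve this by arranging the output region to lie strictly \emph{below} all ids: the targets are values of a fixed $\overline f$ and hence bounded, say $\vu^T \vo_j \in [-B,B]$, while the contextual mapping of Lemma~\ref{lem:contextmap} can be taken to place all ids above this region, $q_N > B$ (inserting one shifting feed-forward layer beforehand if needed). Processing in decreasing order then makes every layer a clean selector: when layer $j$ acts, (i) each already-processed token has readout in $[-B,B]$, below $m_j$, so the ramp ignores it; (ii) each not-yet-processed token still carries its original id $q_{j'} \le q_{j+1} < m_j$, again below threshold; and (iii) the token with id $q_j$ is still untouched, because every earlier layer $j' < j$ had its ramp vanishing on arguments at most $m_{j'} > q_{j'+1} \ge q_j$. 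Hence exactly the $q_j$-token is modified at layer $j$, and the $N$ updates compose to a token-wise $g_{\rm v}^{\rm tkn}$ satisfying $g_{\rm v}^{\rm tkn}(g_{\rm c}(\mH)_k) = \overline f(\mH-\mE)_k$ for all $\mH \in \sH_\delta$ and $k \in [n]$, using $n(1/\delta)^{dn}$ layers as claimed. Verifying that three pieces suffice for the one-sided ramp and that the required id/output separation is genuinely available is the only delicate part; the remaining computations are routine.
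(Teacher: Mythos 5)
Your proposal is correct, and it shares the paper's skeleton exactly: enumerate the $n(1/\delta)^{dn}$ distinct ids $\vu^T g_{\rm c}(\mH)_k$ guaranteed by Lemma~\ref{lem:contextmap}, and stack one width-$1$ feed-forward layer per id, each layer detecting its id through the scalar readout $\vu^T \mZ$ and adding the correction $\overline f(\mH-\mE)_k - g_{\rm c}(\mH)_k$ to the selected column. Where you diverge is the selector and the non-interference mechanism. The paper uses the \emph{discontinuous} two-sided bump $\phi'(t) = \indic{-\delta/2 \leq t < \delta/2}$, which is legitimate because $\Phi$ is explicitly defined to contain \emph{possibly discontinuous} piecewise linear functions with three pieces (the $0/1/0$ indicator is three pieces with jumps); since distinct ids are $\delta$-separated, each layer fires on exactly one id window and no ordering of the layers is needed. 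You instead restrict yourself to a continuous clipped ramp --- your remark that ``a clean two-sided bump is impossible with only three linear pieces'' is true only under this self-imposed continuity, which the paper's $\Phi$ does not require --- and compensate with a one-sided threshold, decreasing-order processing, and the arrangement that all output readouts lie strictly below all ids. Your bookkeeping for this is sound, and it has one genuine merit: it explicitly rules out a mapped token's new readout $\vu^T \overline f(\mH-\mE)_k$ landing inside a \emph{later} layer's firing window, a collision the paper's proof passes over silently (the indicator-bump construction implicitly assumes output readouts avoid all id windows of width $\delta$). The price is the auxiliary uniform-shift layer needed to guarantee $q_N > B$ --- the contextual-mapping ids are large but $B$ depends on the arbitrary target $\overline f$, so this layer is genuinely needed in general and makes your depth $n(1/\delta)^{dn} + 1$ rather than the stated $n(1/\delta)^{dn}$, a cosmetic mismatch with the lemma as written that you should either absorb into the contextual-mapping stage or flag.
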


\subsection{Finishing the proof}
Given Lemmas~\ref{lem:quantize}, \ref{lem:contextmap}, and \ref{lem:valuemap}, one can easily check that for any $\mG \in \sG_\delta \defeq \{0, \delta, \dots, 1-\delta\}^{d \times n}$ and any input value $\mX \in \mG + [0,\delta)^{d \times n}$, we have
\begin{align*}
    g_{\rm v}\circ g_{\rm c} \circ g_{\rm q}(\mX + \mE) 
    &= g_{\rm v}\circ g_{\rm c}(\mG + \mE) \\
    &= \begin{bmatrix}
    g_{\rm v}^{\rm tkn} (g_{\rm c}(\mG + \mE)_{1}) &
    g_{\rm v}^{\rm tkn} (g_{\rm c}(\mG + \mE)_{2}) &
    \cdots &
    g_{\rm v}^{\rm tkn} (g_{\rm c}(\mG + \mE)_{n})
    \end{bmatrix}\\
    &= \begin{bmatrix}
    \overline f(\mG)_1 &
    \overline f(\mG)_2 &
    \cdots &
    \overline f(\mG)_n
    \end{bmatrix}
    = \overline f(\mG) = \overline f(\mX).
\end{align*}
Therefore, we have constructed a modified sparse Transformer network $\overline{g}(\mX) \defeq g_{\rm v} \circ g_{\rm c} \circ g_{\rm q}(\mX + \mE)$ that satisfies $\overline{g}(\mX) = \overline{f}(\mX)$ for all $\mX \in \sD$, hence proving Lemma~\ref{lem:main-step2}.

\section{Proof of Lemmas~\ref{lem:quantize}, \ref{lem:contextmap}, and \ref{lem:valuemap}}
\label{sec:proof-lemmas}
\subsection{Proof of Lemma~\ref{lem:quantize}}
\label{sec:proof-lem-quantize}
The proof goes as follows. Using $\frac{n}{\delta}$ token-wise feed-forward layers, we implement the quantization function $g_{\rm q}^{\rm ent}$ that quantizes the first row of the input. Then we stack another $\frac{n}{\delta}$ layers to quantize the second row, and so on.

For the first row, we add $n/\delta$ layers of the following form, for $k \in [0:n/\delta-1]$.
\begin{equation*}
    \mZ \mapsto \mZ + \ve^{(1)} \phi( (\ve^{(1)})^T \mZ - k\delta \vone_n^T),
    ~~
    \phi(t) = 
    \begin{cases}
    0 & t < 0 \text{ or } t \geq \delta,\\
    -t & 0 \leq t < \delta,
    \end{cases}
\end{equation*}
where $\ve^{(1)} \in \reals^d$ is the first canonical basis vector $\ve^{(1)} = (1,0,\dots,0)$.
Each layer quantizes $\mZ_{1,:}$ in $[k\delta, k\delta+\delta)$ to $k\delta$, without modifying other intervals or other rows of $\mZ$.
Note that the activation $\phi$ is a piecewise linear function with three pieces; hence, $\phi \in \Phi$. Therefore, the layers satisfy the definition of modified feed-forward layers. We can now repeat the same construction for the $d-1$ remaining rows.

\subsection{Proof of Lemma~\ref{lem:contextmap}}
\label{sec:proof-lem-contextmap}
In order to construct a network $g_{\rm c}$ that implements the contextual mapping, we first introduce two operations referred to as the \emph{sparse selective shift operation} and \emph{all-max-shift operation}, implemented by at most two (modified) sparse attention heads of head size $1$. Then, we proceed to stack layers implementing the selective shift operations and all-max-shift operations, and prove that these layers map input $\mH \in \sH_\delta$ to unique context ids.

\subsubsection{Preliminaries}
\paragraph{Sparse selective shift operation.}
Given any vector $\vu \in \R^d$, first consider the following function implementable with a sparse attention head with head size 1 and sparsity pattern $\{\mc A_k^l\}_{k \in [n]}$. For $k\in [n]$, the function $\psi^l : \R^{d \times n} \to \R^{1 \times n}$ computes each of its output column in the following way:
\begin{equation*}
    \psi^l(\mZ; b_Q)_k 
    \defeq \vu^T \mZ_{\mc A_k^l} \hdmx [(\vu^T \mZ_{\mc A_k^l})^T (\vu^T \mZ_k - b_Q)]
    =
    \begin{cases}
    \max_{j \in \mc A_k^l} \vu^T \mZ_{j} & \text{ if } \vu^T \mZ_k > b_Q,\\
    \min_{j \in \mc A_k^l} \vu^T \mZ_{j} & \text{ if } \vu^T \mZ_k < b_Q.
    \end{cases}
\end{equation*}
One can consider a sparse self-attention layer that consists of two such heads, with $b_Q < b'_Q$:
\begin{equation*}
    \Psi^l(\mZ; c, b_Q, b'_Q) \defeq
    \mZ + 
    \begin{bmatrix}
    c\ve^{(1)} & -c \ve^{(1)}
    \end{bmatrix}
    \begin{bmatrix}
    \psi^l(\mZ; b_Q)\\ \psi^l(\mZ; b'_Q)
    \end{bmatrix}.
\end{equation*}
The $(1,k)$-th entry of $\Psi^l(\mZ; c, b_Q, b'_Q)$ reads
\begin{align*}
    \Psi^l(\mZ; c, b_Q, b'_Q)_{1,k} &=
    \emZ_{1,k} + c(\psi^l(\mZ; b_Q)_k - \psi^l(\mZ; b'_Q)_k)\\
    &=
    \begin{cases}
    \emZ_{1,k} + c(\max_{j \in \mc A_k^l} \vu^T \mZ_{j} - \min_{j \in \mc A_k^l} \vu^T \mZ_{j}) & \text{ if }  b_Q < \vu^T \mZ_k < b'_Q,\\
    \emZ_{1,k} & \text { if } \vu^T \mZ_k \notin [b_Q, b'_Q].
    \end{cases}
\end{align*}
This means that for input columns $\mZ_k$ satisfying $\vu^T \mZ_k \in (b_Q, b'_Q)$ \emph{only}, $\Psi^l$ shifts up the first entry of $\mZ_k$ by the difference of maximum and minimum values of $\vu^T \mZ_j$ over the sparsity pattern $j \in \mc A_k^l$, \textbf{while leaving other columns intact}. By choosing $b_Q$ and $b'_Q$ properly, we can selectively modify certain columns without touching other columns; we refer to this operation $\Psi^l$ as the \emph{sparse selective shift operation}, and we will see later that this is indeed the key ingredient of our proof.

In fact, this operation is a sparse version of the selective shift operation used in \citep{yun2019transformers}. Since $\mc A_k^l$ is usually only a small subset of $[n]$, one cannot calculate the maximum and minimum of $\vu^T \mZ_j$ over the whole sequence, as done in \citep{yun2019transformers}. Instead, we use Assumption~\ref{assm:spsptrn}.\ref{assm:spsptrn-cond2} and a more careful choice of $\mE$ to get around the restriction posed by sparsity.

\paragraph{All-max-shift operation.}
Suppose the input $\mZ\in \R^{d \times n}$ satisfies $\vu^T \mZ > 0$ entry-wise, for a vector $\vu \in \R^d$. 
Then, the \emph{all-max-shift operation} $\Omega^l:\R^{d \times n} \to \R^{d \times n}$ is a sparse self-attention layer that consists of one attention head:
\begin{equation*}
    \Omega^l (\mZ;c) = \mZ + c \ve^{(1)} \psi^l(\mZ;0).
\end{equation*}
The $(1,k)$-th entry of $\Omega^l(\mZ;c)$ reads
\begin{align*}
    \Omega^l (\mZ;c)_{1,k} &=
    \emZ_{1,k} + c\psi^l(\mZ; 0)_k
    =
    \emZ_{1,k} + c\max_{j \in \mc A_k^l} \vu^T \mZ_{j}.
\end{align*}
So, for each column $k$, the all-max-shift operation shifts up the first entry of $\mZ_k$ by the maximum value of $\vu^T \mZ_{j}$ over the sparsity pattern $j \in \mc A_k^l$. Unlike the selective shift operation, the all-max-shift operation is applied to all the columns.

\paragraph{Column ids.}
Recall that the any input to this step is in
\begin{align*}
{\sH}_{\delta}
\defeq 
\{ \mG + \mE \in \reals^{d \times n} \mid 
&~\mG \in \sG_\delta \defeq [0:\delta:1-\delta]^{d \times n}\}.
\end{align*}
Because of the way $\mE$ is chosen according to the permutation $\gamma$ in Assumption~\ref{assm:spsptrn}.\ref{assm:spsptrn-cond2},
for any $\mH \in \sH_\delta$ we have
\begin{align*}
\mH_{\gamma(1)} &\in [n-1:\delta:n-\delta]^d,\\
\mH_{\gamma(i)} &\in [i-2:\delta:i-1-\delta]^d
\text{ for all } i \in [2:n].
\end{align*}
Now consider $\vu \defeq (1, \delta^{-1}, \delta^{-2}, \dots, \delta^{-d+1})$. It is easy to check that
for any $\mH \in \sH_\delta$, the map $\mH_k \mapsto \vu^T \mH_k$ is one-to-one, and
\begin{equation}
\label{eq:range-z}
\begin{aligned}
    \vu^T \mH_{\gamma(1)} &\in \left [(n-1) \sum_{i=0}^{d-1} \delta^{-i} : \delta : (n-1) \sum_{i=0}^{d-1} \delta^{-i} + \delta^{-d+1} - \delta \right ],\\
    \vu^T \mH_{\gamma(i)} & \in \left [(i-2) \sum_{i=0}^{d-1} \delta^{-i} : \delta : (i-2) \sum_{i=0}^{d-1} \delta^{-i} + \delta^{-d+1} - \delta \right ], \text{ for } i \in [2:n].
\end{aligned}
\end{equation}
Hence, for each column $\mH_k$, the inner product $\vu^T \mH_k$ is in an interval disjoint from the other columns. Thus, $\vu^T \mH_k$ can be thought as a ``column id'' that identifies the column's original input value $\mG_k$ as well as its position $k$. Note furthermore that for any $\mH \in \sH_\delta$,
\begin{equation}
\label{eq:order-z}
    \vu^T \mH_{\gamma(2)} < \vu^T \mH_{\gamma(3)} < \dots < \vu^T \mH_{\gamma(n)} < \vu^T \mH_{\gamma(1)}.
\end{equation}

\subsubsection{Construction of layers}
Given these preliminaries, we now describe our construction of $g_{\rm c}$. Recall from Assumption~\ref{assm:spsptrn}.\ref{assm:spsptrn-cond2} that the permutation $\gamma$ satisfies $\gamma(i-1) \in \bigcup_{l=1}^p \mc A^l_{\gamma(i)}$ for $i \in [2:n]$.
From this, for $i \in [2:n]$ we let $l_i \in [p]$ be any index such that $\gamma(i-1) \in \mc A_{\gamma(i)}^{l_i}$. 
For simplicity of notation, let $z_k \defeq \vu^T \mH_k$ for $k \in [n]$ and $\Delta = \sum_{i=0}^{d-1} \delta^{-i}$.

Next, starting from $i = 2$, we want to sequentially stack $\delta^{-d}$ sparse selective shift operations 
\begin{equation*}
\Psi^{l_i}(\cdot;\delta^{-d},b-\delta/2,b+\delta/2),    
\end{equation*}
in increasing order of $b \in \left [(i-2) \Delta : \delta : (i-2) \Delta + \delta^{-d+1} - \delta \right ]$.
That is, we want to add sparse attention layers with sparsity patterns $\mc A_{\gamma(i)}^{l_i}$ that apply the selective shift operation to each possible value of $z_{\gamma(i)}$.
Recall that the sparsity patterns have to cycle from $\mc A_k^1$ to $\mc A_k^p$, so we have to place other remaining $p-1$ sparsity patterns (whose indices are not $l_i$) in between the $\Psi^{l_i}$ layers. This can be done by setting all the other sparse attention layers to be the identity. This way, we stack a total of $p\delta^{-d}$ sparse attention layers for $i=2$, another $p\delta^{-d}$ for $i=3$, and so on, up to $i=n$.

After these layers, we further stack $s$ all-max-shift operations. For $i = 1, \dots, s$, we add all-max-shift operations of the form
\begin{equation*}
    \Omega^{(i-1) \text{ mod } p + 1}(\cdot;2sn\delta^{-nd-1}).
\end{equation*}
Here, the superscript $(i-1) \text{ mod } p + 1$ is there to make sure that we cycle through the sparsity patterns from $1$ to $p$, until we stack $s$ layers in total.
This finishes the construction of our function $g_{\rm c}$ composed of $\frac{p(n-1)}{\delta^d} + s$ sparse self-attention layers.

\subsubsection{Selective shift operations}
We now explain how these stacked self-attention layers implement a contextual mapping.
This subsection will consider the selective shift operations part; all-max-shift operations are described in the next subsection. Suppose that after the input $\mH \in \sH_\delta$ is processed through the first $\frac{p(n-1)}{\delta^d}$ layers, we get $\wt \mH \in \R^{d \times n}$ at the output. We will show at the end of this subsection that the map $\mH \mapsto \vu^T \wt \mH_{\gamma(n)}$ is a one-to-one map for column $\gamma(n)$, so the selective shift operations compute a ``unique id'' for each possible input sequence $\mH \in \sH_\delta$.

\paragraph{First selective shift.}
First consider the first $p\delta^{-d}$ layers. Omitting layers that are identity, they are essentially selective shift operations $\Psi^{l_2}(\cdot;\delta^{-d},b-\delta/2,b+\delta/2)$ for $b \in [0:\delta:\delta^{-d+1}-\delta]$.
Since $[0:\delta:\delta^{-d+1}-\delta]$ is the set of possible values of $z_{\gamma(2)}$, these layers perform selective shift operation on the $\gamma(2)$-th column without changing the other columns.
Each possible value of $\mH_{\gamma(2)}$ undergoes one and only shift operation (by the corresponding layer with $b = \vu^T \mH_{\gamma(2)}$), by which the $(1,\gamma(2))$-th entry of the input is updated.

Recall by Assumption~\ref{assm:spsptrn}.\ref{assm:spsptrn-cond2} that $\gamma(1) \in \mc A_{\gamma(2)}^{l_2}$, and that $z_{\gamma(1)}$ and $z_{\gamma(2)}$ are the maximum and minimum over the whole sequence $z_{1}, \dots, z_{n}$ (see \eqref{eq:order-z}). 
By Assumption~\ref{assm:spsptrn}.\ref{assm:spsptrn-cond1} we also have $\gamma(2) \in \mc A_{\gamma(2)}^{l_2}$.
Since both $\gamma(1)$ and $\gamma(2)$ are in $\mc A_{\gamma(2)}^{l_2}$, the maximum and minimum value of $z_j \defeq \vu^T \mH_j$'s over $j \in \mc A_{\gamma(2)}^{l_2}$ are $z_{\gamma(1)}$ and $z_{\gamma(2)}$, respectively. Therefore, the $(1,\gamma(2))$-th entry of the input matrix is shifted up as follows:
\begin{equation*}
    \wt \emH_{1,\gamma(2)} \defeq \emH_{1,\gamma(2)} + \delta^{-d} ( z_{\gamma(1)} - z_{\gamma(2)}).
\end{equation*}
Let $\wt \mH_{\gamma(2)}$ be the $\gamma(2)$-th column after the shift operation has shifted $\emH_{1,\gamma(2)}$ to $\wt \emH_{1,\gamma(2)}$. Then, define
\begin{equation*}
    \wt z_{\gamma(2)} \defeq \vu^T\wt \mH_{\gamma(2)} =  z_{\gamma(2)} + \delta^{-d} ( z_{\gamma(1)} - z_{\gamma(2)}).
\end{equation*}
Note that $\wt z_{\gamma(2)} > z_{\gamma(1)}$ because
\begin{align*}
    z_{\gamma(2)} + \delta^{-d} ( z_{\gamma(1)} - z_{\gamma(2)}) > z_{\gamma(1)}
    \iff
    (\delta^{-d} - 1) (z_{\gamma(1)} - z_{\gamma(2)}) > 0,
\end{align*}
which is true.
Therefore, $\wt z_{\gamma(2)}$ becomes the new maximum among the current values $z_{\gamma(1)}, \wt z_{\gamma(2)}, z_{\gamma(3)}, \dots, z_{\gamma(n)}$, and the new minimum element is $z_{\gamma(3)}$.

\paragraph{Second selective shift.}
We now consider the next $p\delta^{-d}$ layers, which are essentially $\Psi^{l_3}(\cdot;\delta^{-d},b-\delta/2,b+\delta/2)$ for $b \in [\Delta:\delta:\Delta+\delta^{-d+1}-\delta]$. They apply the shift operation to the $\gamma(3)$-th column. Since we have $\gamma(2), \gamma(3) \in \mc A_{\gamma(3)}^{l_3}$, the shift operation similarly yields
\begin{equation*}
    \wt z_{\gamma(3)} 
    \defeq 
    z_{\gamma(3)} + \delta^{-d} ( \wt z_{\gamma(2)} - z_{\gamma(3)})
    = z_{\gamma(3)} + \delta^{-d} (z_{\gamma(2)}-z_{\gamma(3)}) + \delta^{-2d} (z_{\gamma(1)}-z_{\gamma(2)}).
\end{equation*}
We can also show $\wt z_{\gamma(3)} > \wt z_{\gamma(2)}$, because
\begin{align*}
    z_{\gamma(3)} + \delta^{-d} ( \wt z_{\gamma(2)} - z_{\gamma(3)}) >
    \wt z_{\gamma(2)}
    \iff
    (\delta^{-d}-1) ( \wt z_{\gamma(2)} - z_{\gamma(3)}) > 0.
\end{align*}
So after this operation $\wt z_{\gamma(3)}$ and $z_{\gamma(4)}$ are the new maximum and minimum over the updated sequence $z_{\gamma(1)},\wt z_{\gamma(2)},\wt z_{\gamma(3)}, z_{\gamma(4)}, \dots, z_{\gamma(n)}$. 

\paragraph{Repeating the process.}
The same process continues. The next $p\delta^{-d}$ layers shifts the $\gamma(4)$-th columns and results in $\wt z_{\gamma(4)}$ which is greater than $\wt z_{\gamma(3)}$.
After the first $p(n-1)\delta^{-d}$ layers, all columns except $\gamma(1)$-th column have been shifted, resulting in $z_{\gamma(1)}, \wt z_{\gamma(2)}, \dots, \wt z_{\gamma(n)}$ satisfying
\begin{equation}
\label{eq:wtz-distinct}
    (n-1)\Delta \leq z_{\gamma(1)} < \wt z_{\gamma(2)} < \dots < \wt z_{\gamma(n)}.
\end{equation}
Let us denote the output of the $p(n-1)\delta^{-d}$-th layer as $\wt \mH$.

\paragraph{Selective shifts implement a one-to-one map.}
Next, we prove that the map from $\mH \in \sH_\delta$ to
\begin{equation*}
    \wt z_{\gamma(n)} \defeq \vu^T \wt \mH_{\gamma(n)} = z_{\gamma(n)} + \sum_{i=1}^{n-1} \delta^{-id} (z_{\gamma(n-i)} - z_{\gamma(n+1-i)} )
\end{equation*}
is one-to-one. Recall that for each column $\mH_k$, the map $\mH_k \mapsto \vu^T \mH_k \eqdef z_k$ is one-to-one.
Also, permutation of columns is one-to-one, which implies that it suffices to show that the map $\begin{bmatrix} z_{\gamma(1)} & \dots & z_{\gamma(n)} \end{bmatrix} \mapsto \wt z_{\gamma(n)}$ is one-to-one.

Suppose we have two sequences $\begin{bmatrix} z_{\gamma(1)} & \dots & z_{\gamma(n)} \end{bmatrix}$ and $\begin{bmatrix} z'_{\gamma(1)} & \dots & z'_{\gamma(n)} \end{bmatrix}$ that map to the same value of $\wt z_{\gamma(n)} = \wt z'_{\gamma(n)}$.
Then,
\begin{align*}
    0 = \wt z_{\gamma(n)} - \wt z'_{\gamma(n)} 
    = z_{\gamma(n)} - z'_{\gamma(n)} + \sum_{i=1}^{n-1} \delta^{-id} (z_{\gamma(n-i)} - z_{\gamma(n+1-i)} - z'_{\gamma(n-i)} + z'_{\gamma(n+1-i)}).
\end{align*}

Suppose $z_{\gamma(n)} \neq z'_{\gamma(n)}$. Since they both lie inside $[(n-2)\Delta : \delta : (n-2)\Delta + \delta^{-d+1} - \delta]$, we have
\begin{equation*}
    -\delta^{-d+1}+\delta 
    \leq z_{\gamma(n)} - z'_{\gamma(n)} 
    \leq \delta^{-d+1}-\delta.
\end{equation*}
Note that all the terms other than $z_{\gamma(n)} - z'_{\gamma(n)}$ are of ``coarser resolution.'' For example, the first term 
\begin{equation*}
 \delta^{-d} (z_{\gamma(n-1)} - z_{\gamma(n)} - z'_{\gamma(n-1)} + z'_{\gamma(n)})   
\end{equation*}
in the summation can only take values $0, \delta^{-d+1}, -\delta^{-d+1}, 2\delta^{-d+1}, -2\delta^{-d+1},\dots$, so it can never cancel the difference $z_{\gamma(n)} - z'_{\gamma(n)}$ and make the sum $\wt z_{\gamma(n)} - \wt z'_{\gamma(n)}$ zero. This implies that $z_{\gamma(n)} = z'_{\gamma(n)}$ must hold.

Next, suppose $z_{\gamma(n-1)} \neq z'_{\gamma(n-1)}$. Since we have $z_{\gamma(n)} = z'_{\gamma(n)}$,
\begin{equation*}
-\delta^{-2d+1}
<
\delta^{-d} (z_{\gamma(n-1)} - z_{\gamma(n)} - z'_{\gamma(n-1)} + z'_{\gamma(n)})
= 
\delta^{-d} (z_{\gamma(n-1)} - z'_{\gamma(n-1)})
<
\delta^{-2d+1}.
\end{equation*}
But similarly, any other terms in the summation have coarser resolution than $\delta^{-2d+1}$, so they cannot cancel the difference $\delta^{-d} (z_{\gamma(n-1)} - z'_{\gamma(n-1)})$. Thus $z_{\gamma(n-1)} = z'_{\gamma(n-1)}$ must hold.
Repeating the same argument up to $\gamma(1)$ proves that the two sequences must be equal:  $\begin{bmatrix} z_{\gamma(1)} & \dots & z_{\gamma(n)} \end{bmatrix} = \begin{bmatrix} z'_{\gamma(1)} & \dots & z'_{\gamma(n)} \end{bmatrix}$. This proves that the map $\mH \mapsto \wt z_{\gamma(n)}$ is one-to-one and $\wt z_{\gamma(n)}$ can be seen as the unique id for the input sequence $\mH \in \sH_\delta$.

\subsubsection{All-max-shift operations}
Next, we explain the operation of the $s$ all-max-shift layers. Recall from Assumption~\ref{assm:spsptrn}.\ref{assm:spsptrn-cond3} that any token can attend to all the other tokens after $s$ steps, either directly or indirectly. 
Also recall from the last subsection that the input to the first all-max-shift layer is $\wt \mH$, and the maximum entry of $\vu^T \wt \mH$ is $\wt z_{\gamma(n)}$, the unique id for input $\mH$.
From the statement of Lemma~\ref{lem:contextmap}, the output after the $s$ all-max-shift operations for input $\mH$ is denoted as $g_{\rm c} (\mH)$.
In this subsection, we show that through $s$ all-max-shift operations, the maximum $\wt z_{\gamma(n)}$ will propagate to all tokens and be a ``dominant'' term, which determines the interval that $\vu^T g_{\rm c}(\mH)$ lies in.
As a result, we can show Properties~\ref{lem:contextmap}.\ref{lemcond:1} and \ref{lem:contextmap}.\ref{lemcond:2} of $g_{\rm c}$ at the end.

\paragraph{Some preliminaries.}
Note that the unique id $\wt z_{\gamma(n)}$ has the following upper bound:
\begin{align}
    \wt z_{\gamma(n)} 
    &\defeq 
    z_{\gamma(n)} + \sum_{i=1}^{n-2} \delta^{-id} (z_{\gamma(n-i)} - z_{\gamma(n+1-i)} )
    + \delta^{-(n-1)d} (z_{\gamma(1)} - z_{\gamma(2)})\nonumber\\
    &\leq z_{\gamma(n)} + \delta^{-d} \sum_{i=1}^{n-2} (z_{\gamma(n-i)} - z_{\gamma(n+1-i)} )
    + \delta^{-(n-1)d} (z_{\gamma(1)} - z_{\gamma(2)})\nonumber\\
    &= z_{\gamma(n)} + \delta^{-d} (z_{\gamma(2)} - z_{\gamma(n)} )
    + \delta^{-(n-1)d} (z_{\gamma(1)} - z_{\gamma(2)})\nonumber\\
    &= \delta^{-(n-1)d} z_{\gamma(1)} - (\delta^{-(n-1)d} - \delta^{-d}) z_{\gamma(2)} - (\delta^{-d} - 1) z_{\gamma(n)}\nonumber\\
    &\leq \delta^{-(n-1)d} z_{\gamma(1)}
    \leq \delta^{-(n-1)d} ((n-1)\Delta + \delta^{-d+1} - \delta)\nonumber\\
    &\leq \delta^{-(n-1)d} ( n-1+\delta ) (\delta^{-d} - 1) \leq \delta^{-nd} - \delta \label{eq:wtzn-bound}
\end{align}
where we used $\Delta \defeq \sum_{i=0}^{d-1} \delta^{-i} = \frac{\delta^{-d}-1}{\delta^{-1}-1} \leq \delta^{-d}-1$.
A similar bound
\begin{equation}
\label{eq:wtzi-bound}
    \wt z_{\gamma(i)} \leq n\delta^{-id} - \delta
\end{equation}
also holds from a similar derivation.
Next, recall from Assumption~\ref{assm:spsptrn}.\ref{assm:spsptrn-cond3} the definitions
\begin{equation*}
    \mc S^1_k \defeq \mc A^1_k,~~
    \mc S^t_k \defeq \bigcup\nolimits_{j \in \mc A^{(t-1) \text{ mod } p + 1}_k} \mc S^{t-1}_j,
\end{equation*}
and that there exists $s \geq 1$ such that, for all $k \in [n]$, $\mc S^{s}_k = [n]$.
Finally, the following inequality will be useful throughout: for any integer $s \geq 1$,
\begin{equation}
\label{eq:s-ineq}
    \left (\frac{2s+1}{2s}\right )
    \leq 
    \left (\frac{2s+1}{2s}\right )^2
    \leq
    \dots \leq 
    \left (\frac{2s+1}{2s}\right )^s \leq 2.
\end{equation}
Let us now describe the operation that the all-max-shift layers $\Omega^{(i-1) \text{ mod } p + 1}(\cdot;2sn\delta^{-nd-1})$, $i = 1, \dots, s$, carry out. 

\paragraph{First all-max-shift.}
The input to the first all-max-shift layer is $\wt \mH$. Let the output of the layer be $\mM^1$. Recall that $\vu^T \wt \mH$ consists of values $z_{\gamma(1)}, \wt z_{\gamma(2)}, \dots, \wt z_{\gamma(n)}$, which are all strictly greater than 0 and strictly less than $n \delta^{-nd}$ (by \eqref{eq:wtzn-bound}).
So, for each column $k \in [n]$, the layer update reads
\begin{equation*}
    \emM^1_{1,k} 
    \defeq \wt \emH_{1,k} + 2sn\delta^{-nd-1} \max_{j \in \mc A_k^1} \vu^T \wt \mH_j
    = \wt \emH_{1,k} + 2sn\delta^{-nd-1} \vu^T \wt \mH_{j_k^1},
\end{equation*}
where $j_k^1 \defeq \argmax_{j \in \mc A_k^1} \vu^T \wt \mH_j$.
After the update, $\vu^T \mM^1_k$ is ``dominated'' by $2sn\delta^{-nd-1} \vu^T \wt \mH_{j_k^1}$, meaning that for any $k, k' \in [n]$, 
\begin{equation*}
    \vu^T \wt \mH_{j_k^1} < \vu^T \wt \mH_{j_{k'}^1}
    \implies
    \vu^T \mM_{k} < \vu^T \mM_{k'}.
\end{equation*}
This is because the minimum gap between different values of $\vu^T \wt \mH_{j_k^1}$ is at least $\delta$, and we have
\begin{equation*}
    \vu^T \wt \mH_k < n\delta^{-nd} < 2sn\delta^{-nd-1} \cdot \delta,
\end{equation*}
so if $\vu^T \wt \mH_{j_k^1} < \vu^T \wt \mH_{j_{k'}^1}$, that solely determines the order $\vu^T \mM_{k} < \vu^T \mM_{k'}$ because $\vu^T \wt \mH_k$ cannot reverse it.
Also, by the definition of $j_k^1$, for any index set $\mc B \in [n]$ we have
\begin{equation}
\label{eq:all-max-shift-1}
    \max_{i \in \mc B} \vu^T \wt \mH_{j_i^1}
    = \max_{j \in \bigcup_{i \in \mc B} \mc A_i^1} \vu^T \wt \mH_j.
\end{equation}
If $s\geq 2$, we move on to the second layer. 

\paragraph{Second all-max-shift.}
At the second all-max-shift, we have sparsity patterns $\mc A_k^{1 \textup{ mod } p + 1}$.
Let us the output of this layer as $\mM^2$.
For each column $k \in [n]$, the layer update reads
\begin{align*}
    \emM^2_{1,k} 
    &\defeq \emM^1_{1,k} + 2sn\delta^{-nd-1} \max_{j \in \mc A_k^{1 \textup{ mod } p + 1}} \vu^T \mM^1_j
    = \emM^1_{1,k} + 2sn\delta^{-nd-1} \vu^T \mM^1_{j_k^2},
\end{align*}
where $j_k^2 \defeq \argmax_{j \in \mc A_k^{1 \textup{ mod } p + 1}} \vu^T \mM^1_j$. If we look at the update more closely, we can apply \eqref{eq:all-max-shift-1} and get
\begin{align*}
    \vu^T \mM^2_{k} &= 
    \vu^T \wt \mH_{k} 
    + 2sn\delta^{-nd-1} \vu^T \wt \mH_{j_k^1}
    + 2sn\delta^{-nd-1} 
    (\vu^T \wt \mH_{j_k^2} + 2sn\delta^{-nd-1} \max_{i \in \mc A_{k}^{1 \textup{ mod } p + 1}} \vu^T \wt \mH_{j_i^1}) \\
    &=    
    \vu^T  \wt \mH_{k} 
    + 2sn\delta^{-nd-1} (\vu^T \wt \mH_{j_k^1} + \vu^T \wt \mH_{j_k^2})
    + (2sn\delta^{-nd-1})^2 \max_{j \in \mc S_{k}^2} \vu^T \wt \mH_j.
\end{align*}
Again, the last term dominates the rest of the terms in $\vu^T \mM^2_k$, because the minimum gap between different values of $\max_{j \in \mc S_{k}^2} \vu^T \wt \mH_j$ is at least $\delta$, and
\begin{align*}
    &~
    \vu^T \mM^2_{k} - (2sn\delta^{-nd-1})^2 \max_{j \in \mc S_{k}^2} \vu^T \wt \mH_j
    =
    \vu^T \wt \mH_{k} 
    + 2sn\delta^{-nd-1} (\vu^T \wt \mH_{j_k^1} + \vu^T \wt \mH_{j_k^2})\\
    <&~
    (1+4sn\delta^{-nd-1}) n\delta^{-nd}
    \leq
    (1+4s) n^2 \delta^{-2nd-1}
    \leq
    (2sn\delta^{-nd-1})^2 \cdot \delta
    = 4s^2 n^2 \delta^{-2nd-1}.
\end{align*}
The last inequality holds due to inequality~\eqref{eq:s-ineq}, because 
\begin{equation*}
    \left ( \frac{2s+1}{2s} \right)^2 \leq 2
    \iff 1+4s \leq 4s^2
\end{equation*}
is true for $s \geq 2$.

\paragraph{Remaining all-max-shifts.}
If $s \geq 3$, we move on to the third layer, which outputs $\mM^3$. Similarly, we can show that $\vu^T \mM^3_k$ is dominated by $(2sn\delta^{-nd-1})^3 \max_{j \in \mc S_{k}^3} \vu^T \wt \mH_j$ because the rest of the terms in $\vu^T \mM^3_k$ is strictly upper-bounded
\begin{equation*}
    \vu^T \mM^3_k - (2sn\delta^{-nd-1})^3 \max_{j \in \mc S_{k}^3} \vu^T \wt \mH_j
    <
    (1+3 \cdot 2sn\delta^{-nd-1}+ 3 \cdot (2sn\delta^{-nd-1})^2 ) n\delta^{-nd-1},
\end{equation*}
which can then be shown to be smaller than $(2sn\delta^{-nd-1})^3 \cdot \delta$:
\begin{equation*}
    (1+3 \cdot 2sn\delta^{-nd-1}+ 3 \cdot (2sn\delta^{-nd-1})^2 ) n\delta^{-nd}
    \leq 
    (1+6s+12s^2) n^3 \delta^{-3nd-2}
    \leq 8s^3 n^3 \delta^{-3nd-3} \cdot \delta.
\end{equation*}
The last inequality is due to the fact that
$1+6s+12s^2 \leq 8s^3$ for $s \geq 3$, which can derived from \eqref{eq:s-ineq}.
Repeating this process, after all $s$ layers we get $\mM^s$, and $\vu^T \mM^s_k$ is dominated by
\begin{equation*}
    (2sn\delta^{-nd-1})^s \max_{j \in \mc S_{k}^s} \vu^T \wt \mH_j
    = 
    (2sn\delta^{-nd-1})^s \max_{j \in [n]} \vu^T \wt \mH_j
    = 
    (2sn\delta^{-nd-1})^s \wt z_{\gamma(n)}.
\end{equation*}
This is because the remaining terms in $\vu^T \mM^s_k$ can be strictly upper-bounded
\begin{equation*}
    \vu^T \mM^s_k - (2sn\delta^{-nd-1})^s \wt z_{\gamma(n)}
    < 
    \left ( \sum_{i=0}^{s-1} \choose{s}{i} (2sn \delta^{-nd-1})^i \right )  n\delta^{-nd},
\end{equation*}
which is then dominated by the smallest difference possible in $(2sn\delta^{-nd-1})^s \wt z_{\gamma(n)}$:
\begin{align*}
    &~
    \left ( \sum_{i=0}^{s-1} \choose{s}{i} (2sn \delta^{-nd-1})^i \right )  n\delta^{-nd}
    \leq\left ( \sum_{i=0}^{s-1} \choose{s}{i} (2s)^i \right ) (n\delta^{-nd-1})^{s-1} n\delta^{-nd}\\
    =&~ ((1+2s)^s - (2s)^s) (n\delta^{-nd-1})^s \cdot \delta \leq (2s n \delta^{-nd-1})^s \cdot \delta.
\end{align*}
The last inequality used $(1+2s)^s - (2s)^s \leq (2s)^s$, derived from \eqref{eq:s-ineq}.

\subsubsection{Verifying Properties~\ref{lem:contextmap}.\ref{lemcond:1} and \ref{lem:contextmap}.\ref{lemcond:2}}
\label{sec:context-map-proof-verify}
After these all-max-shift operations, we define the output $\mM^s$ of the last all-max-shift layers to be the output of the function $g_{\rm c}$ for input $\mH$, i.e., $g_{\rm c} (\mH) \defeq \mM^s$.

Property~\ref{lem:contextmap}.\ref{lemcond:1} requires that for any $\mH \in \sH_\delta$, all the components $\vu^T g_{\rm c} (\mH)$ need to be distinct. This is true, because for each column of $\vu^T g_{\rm c} (\mH)$, we have
\begin{equation*}
    \vu^T g_{\rm c} (\mH)_{k}~\mod~2sn\delta^{-nd}
    = \vu^T \wt \mH_{k}.
\end{equation*}
This is because anything added by the all-max-shift operations is an integer multiple of $2sn\delta^{-nd}$, and $\vu^T \wt \mH_k < n\delta^{-nd} < 2n\delta^{-nd}$ for all $k$.
Recall that $\wt \mH$ is the input matrix for the first max-shift operation, and that the components of $\vu^T \wt \mH$ are $z_{\gamma(1)}, \wt z_{\gamma(2)}, \dots, \wt z_{\gamma(n)}$, which were shown to be distinct by \eqref{eq:wtz-distinct}.
Since $\vu^T g_{\rm c} (\mH)_{k}$ produce distinct outputs for a $\mod$ operation, they themselves have to distinct. This proves Property~\ref{lem:contextmap}.\ref{lemcond:1}.

Also, by the ``domination'' argument in the previous subsection, the output $g_{\rm c} (\mH)$ has the property that for any column, $\vu^T g_{\rm c} (\mH)_k$ lies inside an interval determined by $\wt z_{\gamma(n)}$, the unique id for the input $\mH$:
\begin{equation*}
    \vu^T g_{\rm c} (\mH)_k \in \left [(2sn\delta^{-nd-1})^s \wt z_{\gamma(n)}, (2sn\delta^{-nd-1})^s (\wt z_{\gamma(n)} + \delta ) \right ),
\end{equation*}
and these intervals do not overlap because any different values of $\wt z_{\gamma(n)}$ must differ by at least $\delta$. This means that for any input $\mH, \mH' \in \sH_\delta$, the components in $\vu^T g_{\rm c} (\mH)$ and $\vu^T g_{\rm c} (\mH')$ lie in disjoint intervals. Together with Property~\ref{lem:contextmap}.\ref{lemcond:1}, this proves Property~\ref{lem:contextmap}.\ref{lemcond:2}.

\subsection{Proof of Lemma~\ref{lem:valuemap}}
\label{sec:proof-lem-valuemap}
To prove this lemma, we implement a token-wise function that maps
\begin{equation*}
    g_{\rm v}^{\rm tkn}(g_{\rm c}(\mH)_{k}) =
    \overline f (\mH-\mE)_{k},
\end{equation*}
for all $\mH \in \sH_\delta$ and $k \in [n]$.
From the construction of Lemma~\ref{lem:contextmap}, there are $n|\sH_\delta| = \frac{n}{\delta^{dn}}$ distinct values of $\vu^T g_{\rm c}(\mH)_{k}$, and different values of $\vu^T g_{\rm c}(\mH)_{k}$ differ by at least $\delta$.
The implementation of $g_{\rm v}^{\rm tkn}$ can be done by stacking feed-forward layers so that each layer maps one unique number to the corresponding output column.

More precisely, choose any $\mH \in \sH_\delta$. For each of the $n$ values of $\vu^T g_{\rm c}(\mH)_k$, we add one feed-forward layer of the form
\begin{equation*}
    \mZ \mapsto \mZ + (\overline f (\mH-\mE)_{k} - g_{\rm c}(\mH)_{k}) \phi' (\vu^T \mZ - \vu^T g_{\rm c}(\mH)_{k} \vone_n^T),
    ~~
    \phi'(t) = 
    \begin{cases}
    0 & t < -\delta/2 \text{ or } t \geq \delta/2,\\
    1 & -\delta/2 \leq t < \delta/2.
    \end{cases}
\end{equation*}
This layer updates any column $j$ of its input $\mZ$ that satisfies $\vu^T g_{\rm c}(\mH)_{k}-\delta/2 \leq \vu^T \mZ_j < \vu^T g_{\rm c}(\mH)_{k}+\delta/2$, without modifying any other columns that are out of this range.

We stack these layers for all possible values of $\mH \in \sH_\delta$. After $\frac{n}{\delta^{dn}}$ such layers, we get the desired function $g_{\rm v}$ that satisfies 
\begin{equation*}
    g_{\rm v} (\mZ) = 
    \begin{bmatrix}
    g_{\rm v}^{\rm tkn} (\mZ_{1}) &
    \cdots &
    g_{\rm v}^{\rm tkn} (\mZ_{n})
    \end{bmatrix},
\end{equation*}
where for all $\mH \in \sH_{\delta}$ and $k \in [n]$,
\begin{equation*}
    g_{\rm v}^{\rm tkn}(g_{\rm c}(\mH)_{k}) =
    \overline f (\mH-\mE)_{k}.
\end{equation*}

\section{Proof of Lemma~\ref{lem:main-step3} (Step~3 in \S~\ref{sec:main-proof-sketch})}
\label{sec:proof-lem-main-step3}
In this section, we describe how the modified sparse Transformer network $\overline g \in \overline{\mc{ST}}^{2,1,1}$ constructed in Lemma~\ref{lem:main-step2} can be approximated with an original sparse Transformer network $g \in \mc{ST}^{2,1,4}$.
Recall that $\overline g$ is a ``modified'' sparse Transformer network, which employ the hardmax $\hdmx$ operators in place of $\rho$ operators in sparse  self-attention layers and piecewise linear activations $\phi \in \Phi$ instead of $\relu$s in feed-forward layers. The goal of this lemma is to approximate the function $\overline g = g_{\rm v} \circ g_{\rm c} \circ g_{\rm q} \in \overline{\mc {ST}}^{2,1,1}$ with a standard sparse Transformer $g = \wt g_{\rm v} \circ \wt g_{\rm c} \circ \wt g_{\rm q}\in \mc {ST}^{2,1,4}$ with accuracy  $\funcdist_p(\overline g, g) \leq \epsilon/2$.
As the construction of $\overline g$ consists of three steps, we will approximate each of them step by step.
The whole intuition behind the proof is that as long as we are considering $L_p$ approximation, we can approximate $\hdmx$ and $\phi \in \Phi$ as closely as we want with $\rho$ and $\relu$s, respectively. However, as the proof will show, controlling the aggregated error over layers is not a trivial job.

\subsection{Approximating the quantization function $g_{\rm q}$ (Lemma~\ref{lem:quantize})}
\label{sec:proof-lem-step3-quantize}
We first consider approximating $g_{\rm q}$ from Lemma~\ref{lem:quantize} with a standard feed-forward layer counterpart, $\wt g_{\rm q}$.
Recall from \S~\ref{sec:proof-lem-quantize} that the modified feed-forward layers used in $g_{\rm q}$ are of the form
\begin{equation}
\label{eq:gq-layer}
    \mZ \mapsto \mZ + \ve^{(i)} \phi( (\ve^{(i)})^T \mZ - k\delta \vone_n^T),
    ~~
    \phi(t) = 
    \begin{cases}
    0 & t < 0 \text{ or } t \geq \delta,\\
    -t & 0 \leq t < \delta,
    \end{cases}
\end{equation}
for $i \in [d]$ and $k \in [0:n/\delta-1]$.
Note that the activation $\phi \in \Phi$ can be closely approximated by three $\relu$s:
\begin{align*}
    \wt \phi_\alpha(t) 
    &\defeq
    -\relu(t) + \frac{1}{\alpha} \relu (t - (1-\alpha) \delta) - \frac{1-\alpha}{\alpha} \relu (t - \delta)\\
    &=
    \begin{cases}
    0 & t \leq 0 \text { or } t \geq \delta,\\
    -t & 0 \leq t \leq (1-\alpha) \delta,\\
    \frac{1-\alpha}{\alpha}(t-\delta) & (1-\alpha) \delta \leq t \leq \delta,
    \end{cases}
\end{align*}
where $0 <\alpha< 1$.
Note that $\wt \phi_\alpha(t) = \phi(t)$ except for an interval $((1-\alpha)\delta, \delta)$, and by shrinking $\alpha > 0$ this interval can be made arbitrarily small.
Consider approximating the layers \eqref{eq:gq-layer} with standard feed-forward layers, by replacing $\phi$ with its approximation $\wt \phi_\alpha$. Let the resulting function be $\wt g_{\rm q} \in \mc {ST}^{2,1,3}$.

Then, it is easy to check that $g_{\rm q}(\mX + \mE) = \wt g_{\rm q}(\mX + \mE)$ holds if all coordinates of $\mX \in [0,1)^{d \times n}$ are in the intervals of the form $[k\delta, (k+1-\alpha)\delta]$ for some $k \in [0: n/\delta - 1]$; i.e., the intervals in which $\wt \phi_\alpha$ perfectly approximates $\phi$. The Lebesgue measure of the set of such inputs $\mX$ is
\begin{equation*}
    ((1-\alpha)\delta)^{nd} \times \frac{1}{\delta^{nd}}
    = (1-\alpha)^{nd},
\end{equation*}
and this can be made arbitrarily close to 1 by making $\alpha$ small.
As a result, ``most'' of the input $\mX \in \sD$ satisfies $g_{\rm q}(\mX + \mE) = \wt g_{\rm q}(\mX + \mE) \in \sH_\delta$, while a small fraction (of measure at most $1-(1-\alpha)^{nd}$) can map to some other values. 
For most of the remaining of the proof, we will consider the fraction of inputs mapped correctly to $\sH_\delta$ and bound their approximation error. We will come back to the $1-(1-\alpha)^{nd}$ fraction at the end of the proof.

\subsection{Approximating the contextual mapping $g_{\rm c}$ (Lemma~\ref{lem:contextmap})}
Let us now consider approximating the contextual mapping $g_{\rm c}$ in Lemma~\ref{lem:contextmap}, constructed using the hardmax $\hdmx$ operators, with the standard sparse self-attention layers employing $\rho$ operator. We will call the approximation $\wt g_{\rm c}$.
Recall that $\rho$ satisfies Assumption~\ref{assm:rhotohdmx}:
\rhotohdmx*
This means that $\rho$ can closely approximate $\hdmx$ in the sense that whenever the input vector $\vv$ to the $\rho$ operator has a maximum element $\evv_{j^*}$ by some margin $\zeta$, then the $j^*$-th component of the output $\rho[t \vv]$ is close to $1$, while the other components of $\rho[t \vv]$ are close to $0$.

Recall that $g_{\rm c}$ consists of two parts. The first part is a composition of sparse selective shift operations, and the second is a composition of all-max-shift operations. We will first examine how ``errors'' are introduced when $\hdmx$ is replaced with $\rho$ in both operations, discuss how the errors accumulate, and show how to choose the right $\zeta$ and $\eta$ to control the errors in the approximation $\wt g_{\rm c}$.

\paragraph{Errors introduced by $\rho$: Sparse selective shift operation.}
Recall that the key component in both the selective shift operation and all-max-shift operation is the sparse attention head $\psi^l(\cdot)$, which computes its $k$-th column as the following:
\begin{equation*}
    \psi^l(\mZ; b_Q)_k 
    \defeq \vu^T \mZ_{\mc A_k^l} \hdmx [(\vu^T \mZ_{\mc A_k^l})^T (\vu^T \mZ_k - b_Q)]
    =
    \begin{cases}
    \max_{j \in \mc A_k^l} \vu^T \mZ_{j} & \text{ if } \vu^T \mZ_k > b_Q,\\
    \min_{j \in \mc A_k^l} \vu^T \mZ_{j} & \text{ if } \vu^T \mZ_k < b_Q.
    \end{cases}
\end{equation*}
Now suppose we replaced $\hdmx$ with $\rho$ satisfying Assumption~\ref{assm:rhotohdmx}. 
Suppose each entry in $\vu^T \mZ$ differs at least by $\delta$, which is true in the construction of $g_{\rm c}$. We choose $\zeta = \delta/2$ and some $0<\eta<1$, and corresponding $t > 0$. Then, replace $\hdmx[\cdot]$ with $\rho[t\cdot]$ and define
\begin{equation*}
    \wt \psi^l(\mZ; b_Q)_k
    \defeq \vu^T \mZ_{\mc A_k^l} \rho [t(\vu^T \mZ_{\mc A_k^l})^T (\vu^T \mZ_k - b_Q)].
\end{equation*}
If $\vu^T \mZ_k > b_Q$, it is easy to check that $\wt \psi^l(\mZ; b_Q)_k$ satisfies
\begin{equation}
\label{eq:rho-approx-bd}
    (1-\eta) \max_{j \in \mc A_k^l} \vu^T \mZ_{j} 
    + \eta \min_{j \in \mc A_k^l} \vu^T \mZ_{j}
    \leq \wt \psi^l(\mZ; b_Q)_k
    \leq \max_{j \in \mc A_k^l} \vu^T \mZ_{j}.
\end{equation}
Similarly, if $\vu^T \mZ_k < b_Q$, we have
\begin{equation*}
    \min_{j \in \mc A_k^l} \vu^T \mZ_{j}
    \leq \wt \psi^l(\mZ; b_Q)_k
    \leq (1-\eta) \min_{j \in \mc A_k^l} \vu^T \mZ_{j} 
    + \eta \max_{j \in \mc A_k^l} \vu^T \mZ_{j}.
\end{equation*}
Now consider the \emph{approximate} sparse selective shift operator $\wt \Psi^l$, implemented with $\wt \psi^l$. For $b_Q < b'_Q$, we define
\begin{equation*}
    \wt \Psi^l(\mZ; c, b_Q, b'_Q) \defeq
    \mZ + 
    \begin{bmatrix}
    c\ve^{(1)} & -c \ve^{(1)}
    \end{bmatrix}
    \begin{bmatrix}
    \wt \psi^l(\mZ; b_Q)\\ \wt \psi^l(\mZ; b'_Q)
    \end{bmatrix}.
\end{equation*}
For any column $\mZ_k$ satisfying $b_Q < \vu^T \mZ_k < b'_Q$, we have
\begin{equation*}
    (1-2 \eta) \left (\max_{j \in \mc A_k^l} \vu^T \mZ_{j} 
    - \min_{j \in \mc A_k^l} \vu^T \mZ_{j} \right )
    \leq
    \wt \psi^l(\mZ; b_Q)_k - \wt \psi^l(\mZ; b'_Q)_k
    \leq \max_{j \in \mc A_k^l} \vu^T \mZ_{j} - \min_{j \in \mc A_k^l} \vu^T \mZ_{j},
\end{equation*}
and for any column $\mZ_k$ satisfying $\vu^T \mZ_k \notin [b_Q, b'_Q]$, we get
\begin{equation*}
    |\wt \psi^l(\mZ; b_Q)_k - \wt \psi^l(\mZ; b'_Q)_k|
    \leq \eta \left (\max_{j \in \mc A_k^l} \vu^T \mZ_{j} 
    - \min_{j \in \mc A_k^l} \vu^T \mZ_{j} \right ).
\end{equation*}
Recall that for the hardmax $\hdmx$ version, we had
\begin{align*}
    \psi^l(\mZ; b_Q)_k - \psi^l(\mZ; b'_Q)_k
    =
    \begin{cases}
    \max_{j \in \mc A_k^l} \vu^T \mZ_{j} - \min_{j \in \mc A_k^l} \vu^T \mZ_{j} & \text{ if }  b_Q < \vu^T \mZ_k < b'_Q,\\
    0 & \text { if } \vu^T \mZ_k \notin [b_Q, b'_Q].
    \end{cases}
\end{align*}
From this observation, the approximation error $\wt \Psi^l - \Psi^l$ of the selective shift operator on the $(j,k)$-th entry of the output can be bounded as follows:
\begin{align*}
    \wt \Psi^l(\mZ; c, b_Q, b'_Q)_{j,k} - \Psi^l(\mZ; c, b_Q, b'_Q)_{j,k}
    \in 
    \begin{cases}
        [-2c\eta D_k^l, 0] & \text{ if } j=1, \vu^T \mZ_k \in (b_Q, b'_Q),\\
        [-c\eta D_k^l, c\eta D_k^l] & \text { if } j=1, \vu^T \mZ_k \notin [b_Q, b'_Q],\\
        \{0\} & \text { if } j \neq 1,
    \end{cases}
\end{align*}
where we used $D_k^l \defeq \max_{j \in \mc A_k^l} \vu^T \mZ_{j} - \min_{j \in \mc A_k^l} \vu^T \mZ_{j}$ for simplicity.


\paragraph{Errors introduced by $\rho$: All-max-shift operation.}
Next, we examine the approximation error of the all-max-shift operation introduced by replacement of $\hdmx$ with $\rho$. Let us define the \emph{approximate} all-max-shift operation $\wt \Omega^l$:
\begin{equation*}
    \wt \Omega^l (\mZ;c) = \mZ + c \ve^{(1)} \wt \psi^l(\mZ;0).
\end{equation*}
From \eqref{eq:rho-approx-bd}, we can check that the approximation error $\wt \Omega^l - \Omega^l$ of the all-max-shift operation is bounded as
\begin{align*}
    \wt \Omega^l (\mZ;c)_{j,k} - \Omega^l (\mZ;c)_{j,k}
    \in 
    \begin{cases}
        [-c\eta D_k^l, 0] & \text{ if } j=1,\\
        \{0\} & \text { if } j \neq 1.
    \end{cases}
\end{align*}


\paragraph{Errors in selective shift operations.}
Given these approximation error bounds of single operations, we now analyze the accumulation of errors through multiple layers. We first consider the first $p\delta^{-d}$ self-attention layers in $g_{\rm c}$. 
Recall that they consist of selective shift layers $\Psi^{l_2}(\cdot;\delta^{-d},b-\delta/2,b+\delta/2)$ for $b \in [0:\delta:\delta^{-d+1}-\delta]$ and $(p-1)\delta^{-d}$ identity layers. A natural way to approximate these layers with standard self-attention layers is to use approximate layers $\wt \Psi^{l_2}(\cdot;\delta^{-d},b-\delta/2,b+\delta/2)$, with sufficiently large $t>0$.
As we have seen above, there is no error introduced by $\rho$ except for the first row. Thus, we will analyze the approximation error of $\wt \Psi^{l_2}(\cdot;\delta^{-d},b-\delta/2,b+\delta/2)$ for the first row only. 

Let us remind the readers how the first selective shift operation (done by the first $p\delta^{-d}$ layers) originally worked in $g_{\rm c}$.
The input to $g_{\rm c}$ is $\mH$, and we define $z_k \defeq \vu^T \mH_k$ and $\Delta = \sum_{i=0}^{d-1} \delta^{-i}$. Recall from Eqs.~\eqref{eq:range-z} and \eqref{eq:order-z} in \S~\ref{sec:proof-lem-contextmap} that
\begin{equation*}
    0 \leq z_{\gamma(2)} < z_{\gamma(3)} < \dots < z_{\gamma(n)} < z_{\gamma(1)} \leq (n-1)\Delta + \delta^{-d+1} - \delta < n \delta^{-d} 
\end{equation*}
and $z_{\gamma(2)} \in [0 :\delta:\delta^{-d+1}-\delta]$,
so $z_{\gamma(2)}$ will undergo the selective shift by one of the self-attention layers, which updates the $(1,\gamma(2))$-th entry of the input. Let $\wt \mH_{\gamma(2)}$ be the updated value of the column and $\wt z_{\gamma(2)} \defeq \vu^T \wt \mH_{\gamma(2)}$. The new sequence satisfies
\begin{equation*}
    \Delta \leq z_{\gamma(3)} < \dots < z_{\gamma(n)} < z_{\gamma(1)} < \wt z_{\gamma(2)} < n\delta^{-2d},
\end{equation*}
where the strict upper bound on $\wt z_{\gamma(2)}$ is from Eq.~\eqref{eq:wtzi-bound}.

In case of the approximation $\wt \Psi^{l_2}$, we have seen that the error depends on the gap between maximum and minimum of $\vu^T \mZ_j$'s, and this gap may grow larger as error accumulates; in the worst case, it may grow exponentially. To see this, suppose $a_0$ and $b_0$ are the maximum and minimum value of $\vu^T \mZ_j$'s, and they go through a selective shift operation, but they do not belong to the range of the operation $(b_Q, b'_Q)$. Then, $a_0$ and $b_0$ will be updated to $a_1$ and $b_1$, which are bounded by
\begin{equation*}
    a_1 \leq a_0 + \delta^{-d} \eta (a_0 - b_0),~~
    b_1 \geq b_0 - \delta^{-d} \eta (a_0 - b_0).
\end{equation*}
After the next layer, we get
\begin{align*}
    a_2 &\leq a_1 + \delta^{-d} \eta (a_1 - b_1) \leq a_0 + \delta^{-d} \eta (a_0 - b_0) + \delta^{-d} \eta (1 + 2 \delta^{-d} \eta) (a_0 - b_0),\\
    b_2 &\geq b_1 - \delta^{-d} \eta (a_1 - b_1) \geq b_0 - \delta^{-d} \eta (a_0 - b_0) - \delta^{-d} \eta (1 + 2 \delta^{-d} \eta) (a_0 - b_0). 
\end{align*}
Similarly, after $k$ such layers, we get
\begin{align*}
    a_k &\leq a_0 + (a_0 - b_0) \delta^{-d} \eta \sum_{i=0}^{k-1} (1+2 \delta^{-d} \eta)^i,\\
    b_k &\geq b_0 - (a_0 - b_0) \delta^{-d} \eta \sum_{i=0}^{k-1} (1+2 \delta^{-d} \eta)^i,
\end{align*}
showing that the gap $a_k - b_k$ may grow exponentially in the worst case:
\begin{align*}
    a_k - b_k \leq (1+2 \delta^{-d} \eta)^k (a_0 - b_0).
\end{align*}

In the error-less case ($\hdmx$), for any input sequence $\mH$, the maximum possible difference between maximum and minimum of $\vu^T \mH$ is bounded above by $n\delta^{-d}$, and after one selective shift operation was done on the $\gamma(2)$-th column, the difference is then bounded by $n\delta^{-2d}$.
Therefore, the worst-case possible error introduced by $\rho$ is bounded above by the sum of the worst-case errors calculated assuming that we started off with max-min difference $n\delta^{-2d}$.
Using this observation, the error on each first-row entry of the sequence after the first $p\delta^{-d}$ layers is bounded above by
\begin{equation}
\label{eq:accerror}
    2 n\delta^{-2d} \cdot \delta^{-d} \eta \sum_{i=0}^{\delta^{-d}-1} (1+2 \delta^{-d} \eta)^i,
\end{equation}
where a factor of $2$ is introduced because when the selective shift operation is applied to the $\gamma(2)$-th column, it may introduce an error which is twice the magnitude of the error introduced to the other columns.
We want to make \eqref{eq:accerror} smaller than $\frac{\delta}{8n}$. By Assumption~\ref{assm:rhotohdmx}, we can always choose $t > 0$ that satisfies the assumption for 
\begin{equation*}
    \zeta = \frac \delta 2, \text{ and }
    \eta = \half  \delta^{2d} \log \left (1+\frac{\delta^{2d}\wt \delta}{8 n^2} \right ) > 0,
    \text{ where }
    \wt \delta \defeq \min \left \{\delta, \frac{2^{1-1/p} \epsilon}{n^{1/p}} \right \}.
\end{equation*}
Using such $t$, we can control the total accumulated error by the first $p\delta^{-d}$ selective shift operations below $\frac{\wt \delta}{8n}$:
\begin{align*}
    &~2 n\delta^{-2d} \cdot \delta^{-d} \eta \sum_{i=0}^{\delta^{-d}-1} (1+2 \delta^{-d} \eta)^i
    \leq
    2n \delta^{-3d} \eta \frac{(1+ 2 \delta^{-d} \eta )^{\delta^{-d}} - 1 } {(1+2\delta^{-d}\eta) - 1}\\
    =&~
    n \delta^{-2d} \left ( \left ( 1+\frac{ \log \left (1+\frac{\delta^{2d}\wt \delta}{8 n^2} \right)}{\delta^{-d}} \right )^{\delta^{-d}} - 1 \right )
    \leq 
    n \delta^{-2d}
    \left ( \exp \log \left (1+\frac{\delta^{2d}\wt \delta}{8 n^2} \right) - 1 \right )\\
    =&~ n \delta^{-2d} \frac{\delta^{2d}\wt \delta}{8 n^2}
    = 
    \frac{\wt \delta}{8n}.
\end{align*}
Therefore, after the first $p\delta^{-d}$ selective shift layers, the accumulated error for each entry of the first row is at most $\wt \delta/8n$.

We can also apply similar arguments to the remaining selective shift layers. For example, for the $j$-th set of $p\delta^{-d}$ selective shift layers where the operation is done on $\gamma(j+1)$-th column of the input, the gap between the maximum and the minimum, including the accumulated error from previous layers, is bounded above by $n\delta^{-(j+1)d}$. 
Therefore, for this set of layers, the maximum accumulated error is bounded by
\begin{equation*}
    2 n\delta^{-(j+1)d} \cdot \delta^{-d} \eta \sum_{i=0}^{\delta^{-d}-1} (1+2 \delta^{-d} \eta)^i.
\end{equation*}
So, choosing $t>0$ that satisfies Assumption~\ref{assm:rhotohdmx} for $\eta = \frac \delta 2$ and $\eta = \half \delta^{2d} \log (1+\frac{\delta^{(j+1)d} \wt \delta}{8n^2})$, we can control the accumulated error introduced by the $p\delta^{-d}$ layers below $\frac{\delta}{8n}$:
\begin{align*}
    &~2 n\delta^{-(j+1)d} \cdot \delta^{-d} \eta \sum_{i=0}^{\delta^{-d}-1} (1+2 \delta^{-d} \eta)^i
    \leq 
    2n \delta^{-(j+2)d} \eta \frac{(1+ 2 \delta^{-d} \eta )^{\delta^{-d}} - 1 } {(1+2\delta^{-d}\eta) - 1}\\
    \leq&~
    n \delta^{-(j+1)d} \left ( \left ( 1+\frac{ \log \left (1+\frac{\delta^{(j+1)d}\wt\delta}{8 n^2} \right)}{\delta^{-d}} \right )^{\delta^{-d}} - 1 \right )
    \leq \frac{\wt \delta}{8n}.
\end{align*}
In total, the accumulated error by the first $p(n-1)/\delta^d$ layers, which correspond to the selective shift operation part of the construction, is at most $\frac{(n-1)\wt \delta}{8n} \leq \frac{\wt \delta}{8}$.

\paragraph{Errors in all-max-shift operations.}
For all-max-shift operations, we approximate the hardmax $\hdmx$ all-max-shift operations $\Omega^l(\mZ; n\delta^{-nd})$ with its $\rho$-counterparts, $\wt \Omega^l(\mZ; n\delta^{-nd})$.
We can similarly bound the accumulated error in the all-max-shift operations. Recall from \S~\ref{sec:proof-lem-contextmap} that during the whole series of all-max-shift operations, the maximum entry in the sequence is upper-bounded by $(2sn\delta^{-nd-1})^s n\delta^{-nd}$ and minimum entry is lower-bounded by $(n-1)\Delta$. Therefore, the gap between the max and min elements, taking into consideration the errors from selective shift operations, is bounded from above by $(2sn\delta^{-nd-1})^s n\delta^{-nd}$.
Then, using a similar argument as the select shift operation layers, the maximum error is bounded above by
\begin{equation*}
    (2sn\delta^{-nd-1})^s n\delta^{-nd} \cdot n \delta^{-nd} \eta \sum_{i=0}^{s-1} ( 1 + n \delta^{-nd} \eta )^i,
\end{equation*}
and we want to make it smaller than $\frac{\wt \delta}{8}$. By Assumption~\ref{assm:rhotohdmx}, we can always choose $t > 0$ that satisfies the assumption for 
\begin{equation*}
    \zeta = \frac \delta 2, \text{ and }
    \eta = \frac{\delta^{nd}}{sn} \log \left ( 1+\frac{\delta^{s(nd+1)+nd}\wt \delta}{2^{s+3}s^s n^{s+1}} \right ) > 0.
\end{equation*}
Using such $t$, we can control the total accumulated error by the first $p\delta^{-d}$ selective shift operations below $\frac{\wt \delta}{8}$:
\begin{align*}
    &~(2sn\delta^{-nd-1})^s n\delta^{-nd} \cdot n \delta^{-nd} \eta \sum_{i=0}^{s-1} ( 1 + n \delta^{-nd} \eta )^i\\
    \leq&~
    (2sn\delta^{-nd-1})^s n\delta^{-nd} \cdot n \delta^{-nd} \eta \frac{(1+ n \delta^{-nd}\eta )^{s} - 1} {(1+ n \delta^{-nd}\eta) - 1}\\
    =&~
    (2sn\delta^{-nd-1})^s n\delta^{-nd} \left ( \left ( 1+\frac{ \log \left (1+ \frac{\delta^{s(nd+1)+nd}\wt \delta}{2^{s+3}s^s n^{s+1}}\right)}{s} \right )^{s} - 1 \right )\\
    \leq&~
    (2sn\delta^{-nd-1})^s n\delta^{-nd}
    \frac{\delta^{s(nd+1)+nd}\wt \delta}{2^{s+3}s^s n^{s+1}}
    = \frac{\wt \delta}{8}.
\end{align*}
So far, we have analyzed the total accumulated error of approximating the contextual mapping function $g_{\rm c}$ (constructed with hardmax $\hdmx$) with an approximation $\wt g_{\rm c}$ (constructed with $\rho$). 
We have seen that for any input $\mH \in \sH_\delta$, the approximation error can be controlled so that the error by the selective shift operation part is at most $\wt \delta/8$ and the all-max-shift operation part is at most $\wt \delta/8$.
Therefore, the total error of the $(j,k)$-th entry can be bounded as
\begin{align*}
    \wt g_{\rm c}(\mH)_{j,k} - g_{\rm c} (\mH)_{j,k}
    \in
    \begin{cases}
    [-\frac {\wt \delta} {4}, \frac {\wt \delta} {4}] & j = 1,\\
    \{0\} & j \neq 1,
    \end{cases}
\end{align*}
for any $\mH \in \sH_{\delta}$.

\subsection{Approximating the value mapping $g_{\rm v}$ (Lemma~\ref{lem:valuemap})}

We now consider the approximation of the value mapping $g_{\rm v}$ with standard feed-forward layers. 
In $g_{\rm v}$, we implemented the function with layers of the form 
\begin{equation*}
    \mZ \mapsto \mZ + (\overline f (\mH-\mE)_{k} - g_{\rm c}(\mH)_{k}) \phi' (\vu^T \mZ - \vu^T g_{\rm c}(\mH)_{k} \vone_n^T),
    ~~
    \phi'(t) = 
    \begin{cases}
    0 & t < -\delta/2 \text{ or } t \geq \delta/2,\\
    1 & -\delta/2 \leq t < \delta/2.
    \end{cases}
\end{equation*}
Since the output of contextual mapping $g_{\rm c}(\mH)$ and its approximation $\wt g_{\rm c}(\mH)$ differ in only the first row and by $\wt \delta/4 \leq \delta / 4$, one can approximate each layer in $g_{\rm v}$ by replacing $\phi'$ with an approximation $\wt \phi'$, implementable with four $\relu$'s:
\begin{equation*}
    \wt \phi'(t) = 
    \begin{cases}
    0 & t < -\delta/2 \text{ or } t \geq \delta/2,\\
    \frac{4}{\delta} t + 2 & -\delta/2 \leq t < -\delta/4,\\
    1 & -\delta/4 \leq t < \delta/4,\\
    -\frac{4}{\delta} t + 2 & \delta/4 \leq t < \delta/2.
    \end{cases}
\end{equation*}
Let $\wt g_{\rm v}$ be the approximation of $g_{\rm v}$ constructed this way.
Because the error on $\wt g_{\rm c}$ is bounded by $\wt \delta/4$, the error on the final output $\wt g_{\rm v}$ is also bounded by $\wt \delta/4$. That is, for any $\mH \in \sH_\delta$,
\begin{equation*}
    \wt g_{\rm v} (\wt g_{\rm c}(\mH))_{j,k} - g_{\rm v}(g_{\rm c} (\mH))_{j,k}
    \in
    \begin{cases}
    [-\frac {\wt \delta} {4}, \frac {\wt \delta} {4}] & j = 1,\\
    \{0\} & j \neq 1.
    \end{cases}
\end{equation*}
Hence, using $\wt \delta \defeq \min \left \{\delta, \frac{2^{1-1/p} \epsilon}{n^{1/p}} \right \}$, we have
\begin{equation*}
    \norm{\wt g_{\rm v} ( \wt g_{\rm c} (\mH) ) - g_{\rm v} ( g_{\rm c} (\mH) )}_p^p \leq n \Big ( \frac{\wt\delta}{4} \Big )^p
    \leq \frac{1}{2} \left ( \frac{\epsilon}{2} \right )^p,
\end{equation*}
for all $\mH \in \sH_\delta$.

\subsection{Finishing the proof}
Recall from \S~\ref{sec:proof-lem-step3-quantize} that the approximated quantization function $\wt g_{\rm q}$ maps most of the input $\mX \in \sD$ to $\mH \in \sH_\delta$, and a small fraction of them (of measure at most $1- (1-\alpha)^{nd}$) to something else. Note now that the original function $\overline g = g_{\rm v} \circ g_{\rm c} \circ g_{\rm q}$ and the approximation $g = \wt g_{\rm v} \circ \wt g_{\rm c} \circ \wt g_{\rm q}$ are both bounded, so there is a global constant $B$ such chat $\norm{\overline g(\mX + \mE) - g(\mX + \mE)}_p \leq B$ for all $\mX \in \sD$. 

We can divide the integral over $\sD$ to two disjoint sets. The first one $\sD_1 \defeq \{ \mX \in \sD \mid \wt g_{\rm q}(\mX+\mE) \in \sH_\delta \}$ is the set of input $\mX$ mapped to $\sH_\delta$ by $\wt g_{\rm q}$, and the other is its complement $\sD_2 = \sD \setminus \sD_1$.
\begin{align*}
    \funcdist_p(\overline g, g)^p
    &\defeq
    \int_{\sD} \norm{\overline g(\mX+\mE) - g(\mX+\mE)}_p^p d \mX\\
    &= 
    \int_{\sD_1} \norm{\overline g(\mX+\mE) - g(\mX+\mE)}_p^p d \mX
    +
    \int_{\sD_2} \norm{\overline g(\mX+\mE) - g(\mX+\mE)}_p^p d \mX\\
    &\leq
    \frac{1}{2} \left ( \frac{\epsilon}{2} \right )^p + (1 - (1-\alpha)^{nd}) B^p.
\end{align*}
One can make $\alpha$ close enough to 1 so that the second term is less than $\frac{1}{2} \left ( \frac{\epsilon}{2} \right )^p$. This makes $\funcdist_p(\overline g, g) \leq \epsilon/2$, hence finishing the proof.

\section{Experimental setup}
\label{sec:expsetup}
\subsection{Copying task}
We generated the synthetic dataset for the copying task. The input sequence to the copying task has the format $0\mathbf{s}0\mathbf{s}$, where $\mathbf{s}$ is a 127 length sequence of symbols randomly sampled from the range of $[0, 127]$. The training set contains 100K sequences, while the testing set contains 10K sequences.

{We implement the copying task as a masked-LM \citep{devlin2018bert} style prediction task by masking all the tokens in the second half of the sequence. For the test examples, each masked token is predicted independently. For the results reported in \S~\ref{sec:exp}, we experiment with bidirectional models, where each token can attend to both previous and future tokens.}

The maximum sequence length is $n = 256$, and we use embedding dimension $d=256$.
{The model has 1 to 4 attention layers with $h=4$ attention heads of size $m = 64$, followed by a feed-forward hidden layer of size $r=512$.} We train the model with the AdamW optimizer with weight decay and no dropout. We train the model using 3,000 warmup steps and a total of 500K training steps. The learning rate is $1e^{-4}$. We use the batch size 1,024 on 8 TPUv3 chips.

For all sparsity patterns other than the \textsc{Random} pattern, we choose the segment length $w$ to be 16 for all patterns. This segment length results in the sparsest level for the \textsc{Strided} and \textsc{Fixed} patterns. In Table~\ref{tab:synthetic}, we include the sparsity level as a reference.
For this task, we report the prediction accuracy for all the tokens.

\subsection{Language modeling}
For the language modeling task, we train on the One Billion Word Benchmark \citep{lm1b} which contains almost one billion tokens and a vocabulary of more than 800K tokens.

We use the Transformer model in the Tensor2Tensor framework \citep{vaswani2018tensor2tensor}. We use a 12-block (cf.~\eqref{eq:s-ff}) Transformer, with embedding dimension $d=256$, maximum sequence length $n=256$, number of heads $h = 8$, head size $m = 64$, and feed-forward hidden layer size $r = 1024$. Since language modeling task is auto-regressive (attending to only past tokens) in nature, we evaluate the (sparse) attention score matrices and mask them to be an upper-triangular matrix.
We train the model with the Adafactor with weight decay. We train the model using 10K warmup steps and a total of 240K steps. We use the batch size 4,096 on 8 TPUv2 chips.

For this task, we report the perplexity.

\subsection{Translation}
For the translation task, we train on the WMT18 en-cs datasets (Europarl v7, Common Crawl corpus, News Commentary v13, and CzEng), with a total of 15M pairs of sentences, and test on the newstest2015 en-cs dataset, with 2,656 pairs.

{We use the encoder-decoder architecture and apply the sparse attention on both encoder and decoder.}
We use the Transformer model in the Tensor2Tensor framework \citep{vaswani2018tensor2tensor} and the same setup as the language modeling task, except for having 6 blocks in the Transformer networks, with head size $m = 32$ and having autoregressive patterns only in decoders.

For this task, we report the cased BLEU score.

{
\subsection{GLUE tasks}
For the GLUE tasks, we use the pre-training and fine-tuning framework \citep{devlin2018bert}. Following \citet{devlin2018bert} we first pre-train a $\BB$ model for 450K steps on the BooksCorpus \citep{zhu2015aligning} (800M words) and the English Wikipedia datasets (2,500M words). We later finetune the model on data from each task separately. For each setting, we use the same sparsity pattern and head configuration in both the pre-training and the fine-tuning stages. The sequence length is $n=128$ in both stages.}

{
We report the average accuracy of three runs on the dev set for all tasks. For each setting, we pre-train a model and run fine-tuning three times.
}

\begin{table*}[t!]
    \centering
    \caption{Accuracy on the synthetic copying task when using an auto-regressive model. Percentages in parentheses mark the sparsity levels.}
    \setlength{\tabcolsep}{3pt}
    \scalebox{0.9}{
    \begin{tabular}{ccccccccc}
    \toprule
    \multicolumn{1}{l}{} &
    \multicolumn{3}{c}{{\sc Strided}}   &
    \multicolumn{3}{c}{{\sc Fixed}}  & {\sc Star} & {\sc Random}\\
    \cmidrule(lll){2-4} \cmidrule(lll){5-7}
        {\bf Depth}
        & \begin{tabular}{@{}c@{}}{\sc Union} \\ (87\%)\end{tabular}
        & \begin{tabular}{@{}c@{}}{\sc Multihead} \\ (93\%)\end{tabular} & \begin{tabular}{@{}c@{}}{\sc Sequential} \\ (93\%)\end{tabular} &\begin{tabular}{@{}c@{}}{\sc Union} \\ (87\%)\end{tabular}
        & \begin{tabular}{@{}c@{}}{\sc Multihead} \\ (93\%)\end{tabular} & \begin{tabular}{@{}c@{}}{\sc Sequential} \\ (93\%)\end{tabular} & (87\%) & (90\%)\\
        \toprule 
        1-layer &  0.79\% & 0.78\% & 0.78\% & 7.02\% & 7.04\% & 0.81\% & 0.77\% & 33.13\% \\
        2-layer &  12.40\% & 8.26\% & 1.57\% & 73.43\% & 13.24\% & 92.10\% & 12.32\% & 67.30\% \\
        3-layer &  94.50\% & 65.58\% & 60.88\% & 99.87\% & 70.82\% & 99.84\% & 14.03\% & 89.50\% \\
        4-layer &  100\% & 100\% & 98.40\% & 99.97\% & 99.16\% & 99.97\% & 31.19\% & 95.88\% \\
        \bottomrule
    \end{tabular}
    }
    \label{tab:synthetic_unidirectonal}
    \vspace{-0.5\baselineskip}
\end{table*}

\begin{figure}[t!]
    \centering
    \subfloat[WMT en-de]{
    \label{fig:translate_ende}
      \includegraphics[width=0.49\textwidth]{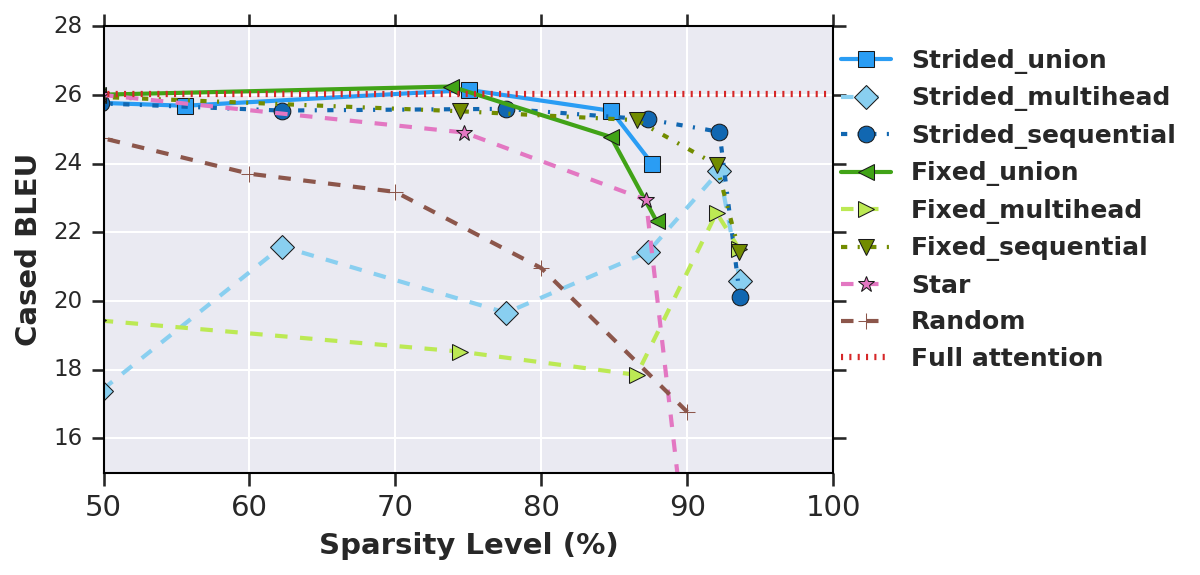}
    }
    \subfloat[WMT de-en]{
    \label{fig:translate_deen}
        \includegraphics[width=0.49\textwidth]{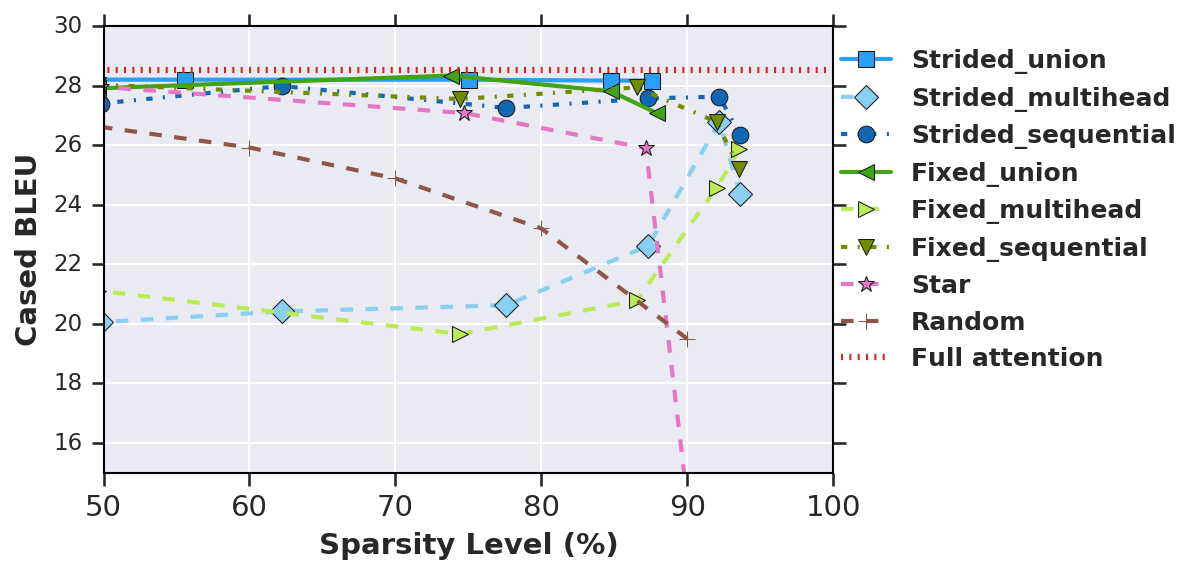}
    }
    \caption{Comparison of sparsity patterns and different head configurations on the WMT de-en and en-de translation tasks.}
    \label{fig:translate_supp}
\end{figure}

\begin{figure}[th!]
    \centering
    \subfloat[CoLA]{
    \label{fig:bert_cola}
      \includegraphics[width=0.49\textwidth]{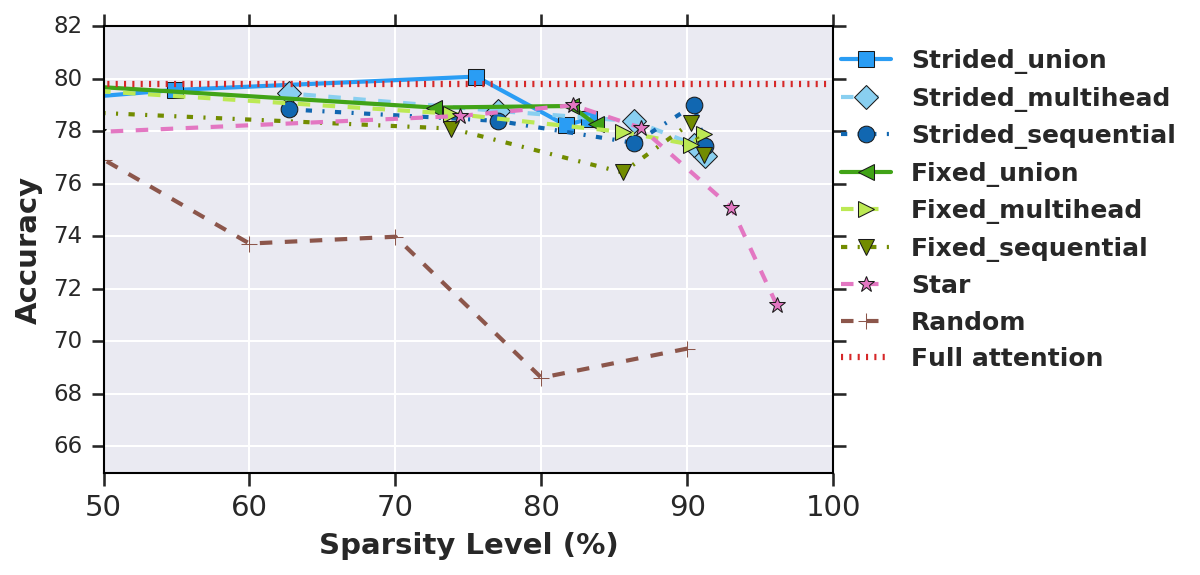}
    }
    \subfloat[MRPC]{
    \label{fig:bert_mrpc}
        \includegraphics[width=0.49\textwidth]{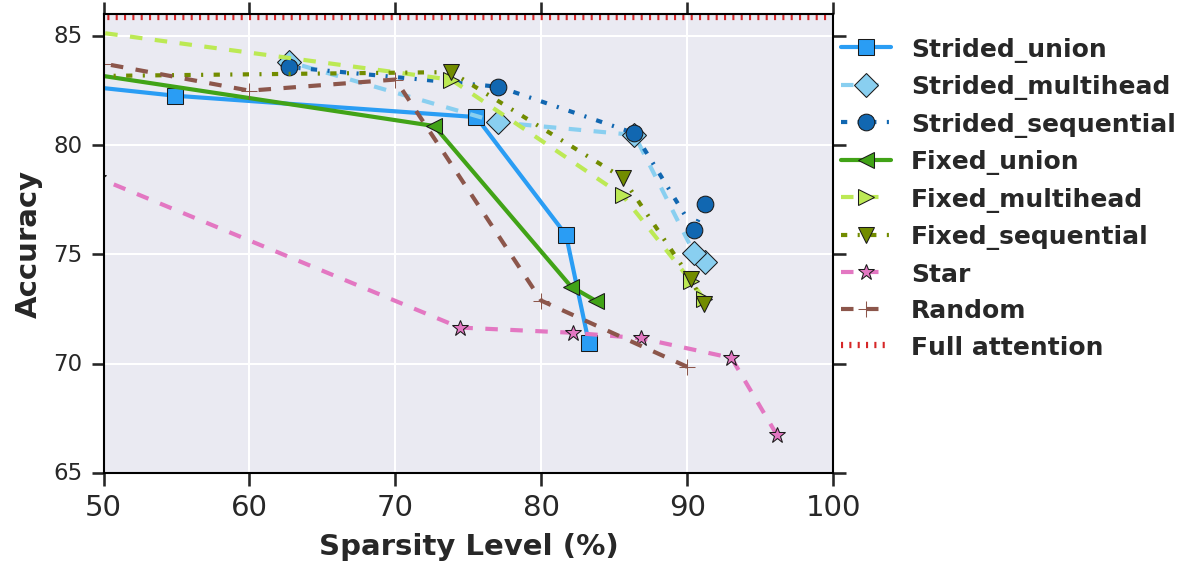}
    }
    \caption{Comparison of sparsity patterns and different head configurations on the CoLA and MRPC tasks for the $\BB$ model.}
    \label{fig:bert_supp}
\end{figure}

\section{Additional experimental results}
\label{sec:additional_exp}
{
We report additional experimental results in this section.
}

{
\subsection{Copying task}
We include the results for the copying task using auto-regressive (unidirectional) models as in LM, where each token can only attend to previous tokens, in Table~\ref{tab:synthetic_unidirectonal}. In this case, the {\sc Star} pattern cannot attend to the last replay token. Indeed, the {\sc Star} pattern shows better performance when the model is bidirectional (cf.~Table~\ref{tab:synthetic}).
}

{
\subsection{Translation}
We present experimental results of the translation tasks on the WMT English-German and German-English datasets in Figure~\ref{fig:translate_supp}. We train on WMT18 (Europarl v7, Common Crawl corpus and News Commentary v13) and test on newstest 2015 datasets. The figures show similar trends to the results on the WMT en-cs dataset in Figure~\ref{fig:translate_encs}.}

{
\subsection{GLUE tasks}
Figure~\ref{fig:bert_supp} presents the results comparing the sparsity patterns and the head configurations on the CoLA and MRPC tasks using the $\BB$ model. CoLA is a single-sentence classification task, asking if a sentence is a grammatical English sentence. MRPC is a sentence-pair classification task, where each example is a pair of sentences and the label indicates whether the sentences are semantically equivalent.
}

\end{document}